\documentclass{tmlr}

\usepackage{anyfontsize}
\usepackage{pifont}
\usepackage{dsfont}
\usepackage{adjustbox}
\usepackage{bbm}

\definecolor{shadecolor}{rgb}{0.92,0.92,0.92}
\definecolor{grey}{rgb}{0.5, 0.5, 0.5}

\newcommand{\greenbox}[1]{\cellcolor[HTML]{009901}}

\definecolor{case_study_red}{RGB}{102, 0, 0} 
\definecolor{case_study_yellow}{RGB}{180, 180, 0} 
\definecolor{case_study_green}{RGB}{0, 102, 0}
\definecolor{case_study_cyan}{RGB}{0,180,180}

\definecolor{outcome_reward}{HTML}{173E64}
\definecolor{intermediate_reward_1}{HTML}{4F95D9}
\definecolor{intermediate_reward_2}{HTML}{DCEAF7}

\theoremstyle{plain}
\newtheorem{theorem}{Theorem}[section]
\newtheorem{proposition}[theorem]{Proposition}

\newtheorem{corollary}[theorem]{Corollary}
\theoremstyle{definition}

\theoremstyle{remark}
\newtheorem{remark}[theorem]{Remark}

\definecolor{regimeHelpful}{HTML}{E4EFF8}
\definecolor{regimeNoise}{HTML}{F5DDDA}
\definecolor{rowAlt}{HTML}{F6F7F5}
\definecolor{hiBad}{HTML}{B85C55}
\definecolor{hiWarn}{HTML}{D3922D}
\definecolor{cellGood}{HTML}{C5DCF2}
\definecolor{cellGoodL}{HTML}{DCEAF7}
\definecolor{cellBad}{HTML}{EFCFCB}
\definecolor{cellBadL}{HTML}{F7E4E2}

\newcommand{\cellbest}[1]{\cellcolor{cellGood}\textbf{#1}}
\newcommand{\cellsec}[1]{\cellcolor{cellGoodL}\underline{#1}}

\definecolor{case_study_yellow}{RGB}{180, 180, 0} 
\definecolor{case_study_green}{RGB}{0, 102, 0}
\definecolor{case_study_cyan}{RGB}{0,180,180}

\definecolor{outcome_reward}{HTML}{173E64}
\definecolor{intermediate_reward_1}{HTML}{4F95D9}
\definecolor{intermediate_reward_2}{HTML}{DCEAF7}

\newcommand{\redbox}[1]{\cellcolor[HTML]{FE0000}}
\newcommand{\cmark}{\ding{51}}%
\newcommand{\xmark}{\ding{55}}
\usepackage{listings}
\usepackage{listingsutf8}
\lstnewenvironment{longprompt}[1][]
{\lstset{basicstyle={\small},prebreak={},postbreak=\mbox{\hspace{-2.25 em}},backgroundcolor=\color{white},frame=single,framerule=1pt,#1}}
{}

\usepackage{varwidth}
\usepackage{tikz}
\usetikzlibrary{calc}
\usetikzlibrary{positioning, shapes, fit, backgrounds, decorations.pathreplacing}
\usepackage[capitalize,noabbrev]{cleveref}
\newcommand{\eg}{\textit{e.g.}}

\usepackage{etoc}
\etocdepthtag.toc{mtchapter}
\etocsettagdepth{mtchapter}{subsection}
\etocsettagdepth{mtappendix}{none}

\usepackage[textsize=tiny]{todonotes}

\begin{document}

\title{
RewardFlow: Topology-Aware Reward Propagation on\\ State Graphs 
for Agentic RL with LLMs
}

\author{

Xiao Feng\textsuperscript{1},
Bo Han\textsuperscript{1},
Zhanke Zhou\textsuperscript{1},
Jiaqi Fan\textsuperscript{2}, \and
\vspace{-6mm}
Jiangchao Yao\textsuperscript{3},
Ka Ho Li\textsuperscript{2},
Dahai Yu\textsuperscript{2},
Michael Kwok-Po Ng\textsuperscript{4}
\\
\textsuperscript{1}TMLR Group, Hong Kong Baptist University;
\textsuperscript{2}TCL Corporate Research (HK) Co., Ltd;
\\
\textsuperscript{3}Cooperative Medianet Innovation Center, Shanghai Jiao Tong University;
\\
\textsuperscript{4}Department of Mathematics, Hong Kong Baptist University
}

\maketitle

\thispagestyle{firstpage}
\pagestyle{mynormal}

\begin{abstract}
    Reinforcement learning (RL) shows promise for enhancing LLM agentic reasoning, yet sparse terminal rewards hinder fine-grained optimization. Process reward modeling offers an alternative but incurs high computational costs, reward hacking risks, and annotation bottlenecks.
    We introduce \textsc{RewardFlow}, a lightweight method for estimating state-level rewards in agentic reasoning. By constructing state graphs that capture the intrinsic topological structure of trajectories, \textsc{RewardFlow} performs topology-aware propagation to estimate each state's contribution to success, yielding principled, annotation-free dense rewards. Used for RL optimization, \textsc{RewardFlow} substantially outperforms prior baselines across four agentic benchmarks: +6.2\% average success rate on text-based tasks, +29.7\% on visual reasoning over the strongest baseline across three model scales, and +10\% accuracy on DeepResearch, with superior robustness and training efficiency.  Code is available at \url{https://github.com/tmlr-group/RewardFlow}.
\end{abstract}
\section{Introduction}
\label{sec:introduction}

\begin{wrapfigure}{r}{0.5\textwidth}
    \vspace{-12pt}
    \includegraphics[width=\linewidth]{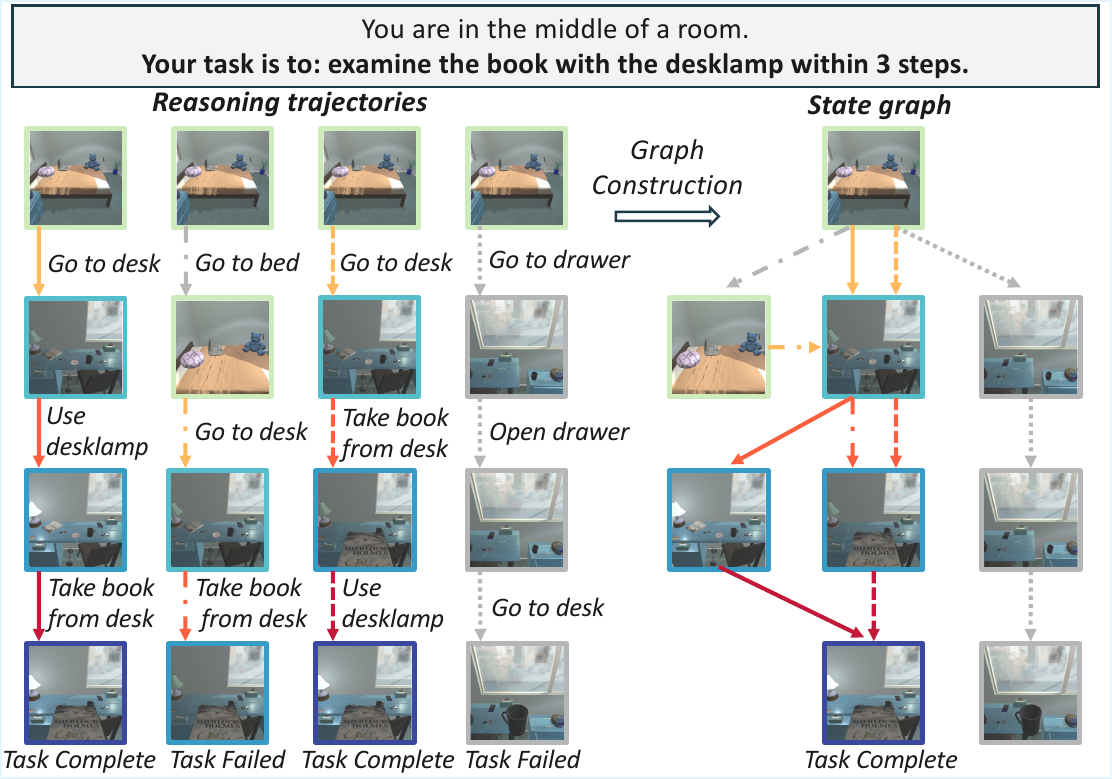}
    \vspace{-20pt}
    \caption{
    Trajectories are merged into state nodes with directed action edges. Colors show distance to success (darker = closer); grey means unreachable. The graph reveals task topology and supports reliable process reward modeling.
    }
    \label{fig:graph_construction}
    \vspace{-16pt}
\end{wrapfigure}

Large Language Models (LLMs) have demonstrated strong reasoning capabilities, making them compelling foundations for autonomous agents that solve real-world tasks by interacting with external environments, including computer control~\citep{gou2025navigating}, GUI operation~\citep{qin2025ui}, and robotic manipulation~\citep{liu2023llm+}.
Yet, when deployed in long-horizon sequential decision-making settings, LLMs often exhibit unstable performance: small errors at early steps compound rapidly over extended interaction sequences, leading to cascading failures and severely degraded task completion.
Agentic reinforcement learning (RL) has emerged as a principled approach to address this challenge, directly optimizing the agent's policy under environment-provided rewards to improve both capability and reliability.

However, optimization is often hindered by the sparse-reward structure of agentic environments: most provide no state-wise feedback during execution, yielding only a terminal evaluation upon task completion. As a result, agentic RL is driven by a coarse, trajectory-level signal rather than fine-grained, state-level guidance, weakening credit assignment and leading to insufficient training. Prior work seeks to recover state-wise (process) rewards, but typically relies on separately trained reward models requiring human-annotated data~\citep{lightman2023let,wang2025spa}, incurring substantial data and computational costs. Furthermore, static process reward models are susceptible to reward hacking, as the fixed scorer drifts out of distribution once the policy updates and assigns inflated rewards to new outputs~\citep{gao2023scaling}; querying them at every step also becomes prohibitively expensive at scale. Moreover, such scorers are often task-specific and generalize poorly across environments. These limitations motivate our central question: \emph{How can we objectively estimate process rewards for intermediate states in agentic tasks without training reward models?}

This work introduces \textsc{RewardFlow} to address this challenging question.
The key idea is to exploit informative signals encoded in the intrinsic topological relationships among states within reasoning trajectories (Fig.~\ref{fig:graph_construction}).
We treat these topological signals as surrogates for process rewards by computing the shortest-path distance to the nearest success node in the state graph, where states closer to success receive higher rewards.
To operationalize this, \textsc{RewardFlow} constructs a \emph{state graph} for each task that aggregates equivalent states collected across trajectories into unique nodes, consolidating shared transitions (Fig.~\ref{fig:rewardflow_overview}, left). This structure reveals intrinsic state-wise properties and enables objective reward estimation across both successful and failed trajectories without any external annotation.

\begin{figure*}[t!]
    \centering
    \includegraphics[width=\linewidth]{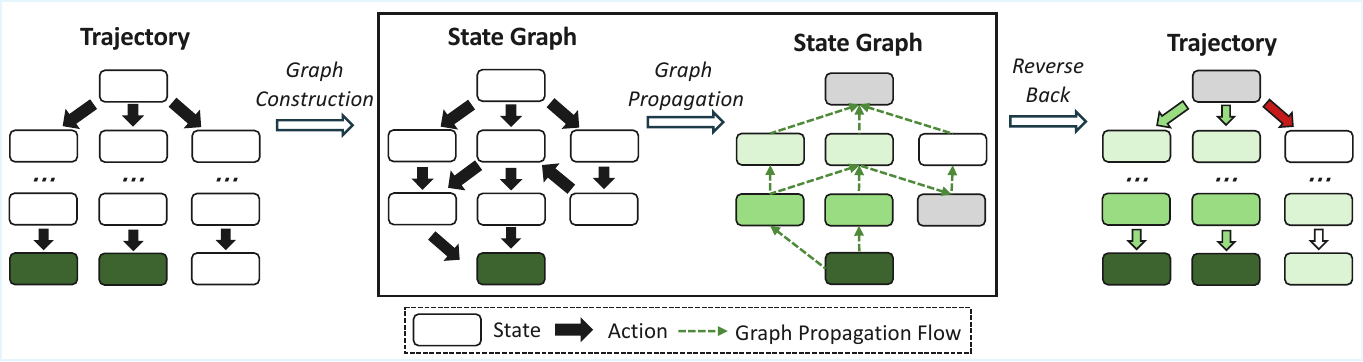}
    \vspace{-12pt}
    \caption{
    Overview of \textsc{RewardFlow}. Each rectangular node represents a state; color indicates reward level.
Given sampled agentic trajectories (sequences of states and actions), \textsc{RewardFlow} estimates action-wise rewards via three steps:
(1) \textbf{Graph Construction:} Build state graph by aggregating equivalent states.
(2) \textbf{Graph Propagation:} Backpropagate rewards from success nodes.
(3) \textbf{Action-level Reward Shaping:} Map propagated state rewards back to trajectories and compute action rewards as the reward difference (gain) between post-action and pre-action states.
        }
    \label{fig:rewardflow_overview}
    \vspace{-20pt}
\end{figure*}

Building on the state graph, \textsc{RewardFlow} employs graph-based propagation methods (\eg, Breadth-First Search, BFS~\citep{Moore-bfs}) to estimate state-wise rewards. As shown in the middle panel of Fig.~\ref{fig:rewardflow_overview}, outcome rewards from successful terminal states are propagated backward to intermediate states along observed transitions. These propagated rewards capture topological signals, such as reachability and proximity, thereby quantifying the potential for success in each intermediate state. Leveraging these state-wise rewards, we map the propagated rewards back to the raw trajectories for action-level reward shaping. \textsc{RewardFlow} optimizes the policy that integrates local action-level rewards with global trajectory-level advantages, yielding dense, fine-grained supervision for stable policy updates.

We evaluate \textsc{RewardFlow} on four agentic benchmarks: ALFWorld, WebShop, Sokoban, and DeepResearch. Against the strongest RL baseline (GiGPO), \textsc{RewardFlow} achieves average success-rate gains of +6.2\% on text-based benchmarks and +29.7\% on visual reasoning across three model scales. Notably, gains are largest for weaker models, where credit assignment is harder. On DeepResearch, it surpasses Search-R1 by over 10\% across model sizes. Without reward model training, \textsc{RewardFlow} outperforms all PRM-based baselines on ALFWorld and maintains strong out-of-distribution generalization. \textsc{RewardFlow} also provides more informative supervision, greater robustness under imperfect conditions, and minimal additional training overhead.

In summary, this work presents three main contributions:
\begin{itemize}[leftmargin=*, itemsep=4pt, parsep=0pt, topsep=0pt]
\item \textbf{State Graph Modeling:} We model agentic reasoning as state graphs that aggregate states and action-induced transitions for analysis, revealing task-oriented properties (Sec.~\ref{sec: graph_modeling}).
\item \textbf{Policy Optimization with Estimated Process Rewards:} We propagate rewards from success states to intermediate states using state graphs, yielding objective, fine-grained supervision (Sec.~\ref{sec: method}).
\item \textbf{Empirical Validation:} Extensive experiments across benchmarks and models show that RewardFlow consistently improves performance, robustness, and computational efficiency (Sec.~\ref{sec: experiments}).
\end{itemize}

\section{Preliminary}
\label{sec: preliminary}

\paragraph{Problem formulation.} Consider an agentic setting where an agent interacts with an environment to solve tasks with description $  x \sim P(X)  $. A trajectory is the sequence $\tau=\big(s_0,\, a_0,\, s_1,\, a_1,\, \ldots,\, a_{T-1},\, s_T\big),
$
where $s_t \in \mathcal{S}$ denotes the state (which includes the task description $x$ and environmental observations) and $a_t \in \mathcal{A}$ the action chosen by the policy $\pi_\theta(a_t \mid s_t)$. In these environments, rewards are highly sparse: the reward $r(s_{t-1}, a_{t-1}, s_t) = 0$ for all non-terminal actions ($t < T$), and a non-zero reward is assigned only upon reaching the terminal state $s_T$. Trajectories are often long and contain a mixture of helpful, irrelevant, and counterproductive actions, leading to degraded task completion. Accurate credit assignment from such sparse, trajectory-level signals therefore remains a core challenge.

\paragraph{Group sampling RL.} 
Recent algorithms extend PPO \citep{schulman2017proximal} with group-based advantage estimation, replacing the critic model for efficiency and scalability on verifiable-reward tasks. Given a task description $x$, the LLM samples a group of $G$ trajectories $\{\tau^{(1)}, \tau^{(2)}, ..., \tau^{(G)}\}$. In a sparse reward setting, each completion $\tau^{(i)}$ receives a scalar reward $r^{(i)}$ at the terminal state $s_T$, and the advantage of $k$-th token in $t$-th turn is computed via group normalization: $A^{(i)}_{t,k} =(r^{(i)}-\text{mean}(\{r^{(i)}\}_{i=1}^G))/\text{std}(\{r^{(i)}\}_{i=1}^G))$. Representative algorithms like GRPO~\citep{shao2024deepseekmath} and RLOO~\citep{ahmadian2024back,kool2019buy} assign the same terminal reward $ r^{(i)}$ uniformly to every token in a trajectory. This weakens fine-grained credit assignment and hinders precise optimization in agentic tasks. In contrast, Group-in-Group Policy Optimization (GiGPO)~\citep{feng2025group} estimates advantages at the state level by propagating rewards backward along trajectories, but does not explicitly exploit the intrinsic topological structure among states.

\section{The State Graph of Agentic Reasoning}
\label{sec: graph_modeling}

This section analyzes the importance of individual states to task completion by examining their intrinsic relationships within trajectories. We (1) demonstrate the high recurrence of states across sampled trajectories to validate graph-based modeling, (2) construct state graphs by aggregating states from multiple trajectories and filtering out noisy actions, and (3) identify key structural properties in the resulting state graph that reveal each state's contribution to overall task success.

\subsection{Rethinking States in Agentic Reasoning}
\label{ssec: rethinking_states_in_agentic_reasoning}

In agentic environments, states exhibit an intrinsic structure with multiple actions that transition to other states, similar to those in Markov decision processes, thereby forming an underlying conceptual state graph. From the perspective of an LLM, the sampled trajectories constitute partial observations (subgraphs) of this graph, leading to frequent re-occurrence of identical states across sampled trajectories.
Across a group of $G$ sampled trajectories, the number of unique states is bounded above by the total state visits, and this compression ratio grows with $G$.
Empirical evidence in Fig.~\ref{fig: total_vs_unique} supports this: the number of unique states is substantially smaller than the cumulative state visits, with distinct state-action transitions exhibiting similar compression. These findings motivate constructing a unified state graph by aggregating information from multiple trajectories. Compared with modeling each trajectory independently, this consolidated graph provides a richer view of the underlying task structure.

\begin{wrapfigure}{r}{0.55\textwidth}
    \vspace{-12pt}
    \begin{minipage}{0.49\linewidth}

        \centering
        \includegraphics[width=\linewidth]{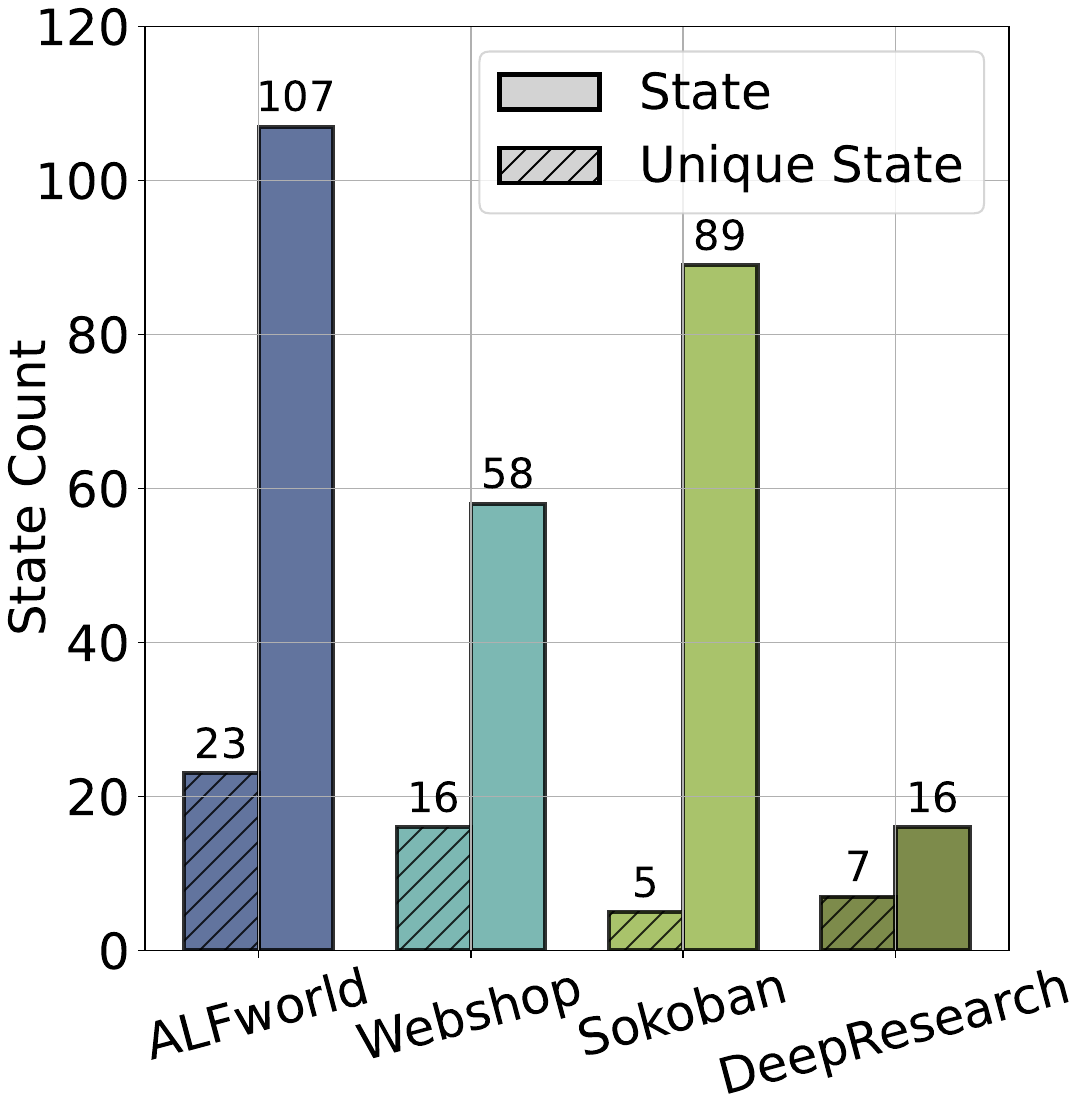}
    \end{minipage}
    \hfill
    \begin{minipage}{0.49\linewidth}
        \centering
        \includegraphics[width=\linewidth]{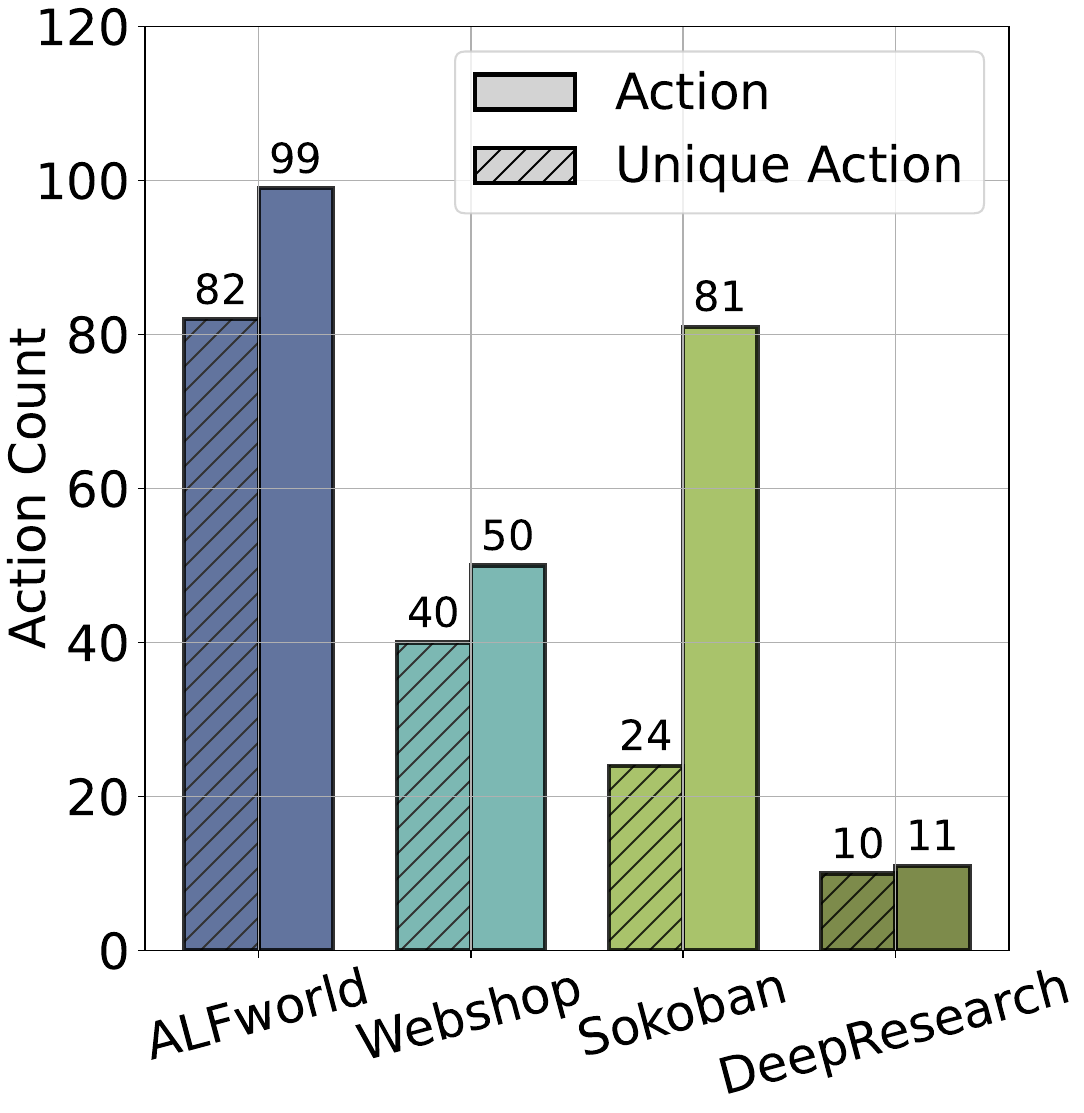}
    \end{minipage}
    \vspace{-8pt}
    \caption{
    Comparison of total vs. unique states and actions across sampled trajectories of agentic reasoning using Qwen2.5-(VL)-3B-Instruct on ALFWorld, WebShop, Sokoban, and DeepResearch.
    Unique states and actions are substantially fewer than their total counts, highlighting significant state and repetition.
    }
    \label{fig: total_vs_unique}
    \vspace{-12pt}
\end{wrapfigure}

\subsection{State Graph Construction}
\label{ssec: state_graph_construction}

An ideal state graph should merge semantically equivalent states into a single node to eliminate representational redundancy, and prune environment-invalid (hallucinated) action edges that would otherwise introduce spurious transitions.
We construct the state graph leveraging the above insight.
First, we sample trajectories $\{\tau^{(i)}\}_{i=1}^G$ with a group size $G$ through the LLM-driven policy $\pi_\theta$: $\tau^{(i)}=\big(s_0,\, a_0^{(i)},\, s_1^{(i)},\, a_1^{(i)},\, \ldots,\, a_{T_i-1}^{(i)},\, s_{T_i}^{(i)}\big)$.
Prior to graph construction, we apply a normalization operation $f$ to states and actions in trajectories to ensure that the resulting subgraph faithfully approximates the conceptual state graph.

\begin{itemize}[leftmargin=*]
\vspace{-6pt}
    \item \textbf{States normalization:}
    \footnote{Please see Appendix~\ref{app:state_preprocessing} for implementation details.}
    Raw environment states in trajectories frequently exhibit representational variability (e.g., identical underlying states may appear in different forms across observations).
    To address this, we canonicalize each observed state $s_t^{(i)}$ to produce a normalized representation:
    $\hat{s}_t^{(i)} = f(s_t^{(i)})$ that maps semantically equivalent states to a consistent canonical representation.

    \item \textbf{Pruning noisy transitions:}
    LLMs may propose syntactically plausible yet environment-invalid actions, adding spurious edges to the graph (see Fig.~\ref{fig: invalid_action}). To mitigate this problem, we filter out environment-invalid actions: $\hat{a}_t^{(i)} = a_t^{(i)} \cdot \mathbbm{1}[\textsc{valid}(s_t^{(i)},\, a_t^{(i)},\, s_{t+1}^{(i)})]$, where actions with $\textsc{valid}=0$ are excluded from $\mathcal{T}$, ensuring a cleaner graph representation.
    \footnote{We denote trajectory states/actions by $s,a$ and state-graph nodes/edges by $\hat{s},\hat{a}$.}

\vspace{-6pt}
\end{itemize}

We construct a state graph $  \mathcal{G}_{\text{state}} = (\mathcal{S}, \mathcal{A}, \mathcal{T})  $ from the sampled trajectories, where $  \mathcal{S}  $ is the set of distinct observed states, $  \mathcal{A}  $ the set of observed actions, and $  \mathcal{T}  $ the set of observed transitions. Nodes represent distinct states that appear in the environment, while directed edges represent actions observed in the sampled trajectories that transition from one state to another. The graph is constructed by taking the union of all states, actions, and transitions appearing in the collected set of trajectories:

\vspace{-16pt}
\begin{equation}
\begin{aligned}
\mathcal{S} = \bigcup_{i=1}^{G}\,\bigcup_{t=0}^{T_i}\{\hat{s}_t^{(i)}\},
\;
\mathcal{A} = \bigcup_{i=1}^{G}\,\bigcup_{t=0}^{T_i-1}\{\hat{a}_t^{(i)}\},
\;\mathcal{T} = \bigcup_{i=1}^{G}\,\bigcup_{t=0}^{T_i-1}\{(\hat{s}_t^{(i)},\hat{a}_t^{(i)},\hat{s}_{t+1}^{(i)}) :
P(\hat{s}' \mid \hat{s},\hat{a}) > 0 \bigr\}.
\end{aligned}
\label{eq:induced_graph}
\end{equation}

Each triple $(\hat{s}, \hat{a}, \hat{s}') \in \mathcal{T}$ encodes a directed edge from $\hat{s}$ to $\hat{s}'$, which is valid in the environment.

\begin{remark}[Graph Coverage]
\label{rem:partial_coverage}
The sampled state graph $\mathcal{G}_{\text{state}}^{(G)}$ is by construction a valid subgraph of $\mathcal{G}_{\text{env}}^{\text{reach}}$, containing only genuinely observed transitions; coverage expands as $G$ increases. In environments where exact state equality is intractable (e.g., DeepResearch), normalization $f$ is implemented via embedding-based aggregation, in which case the above holds approximately.
\end{remark}

\begin{figure*}[t]
    \centering

    \begin{minipage}{0.49\textwidth}

        \centering
        \includegraphics[width=\linewidth]{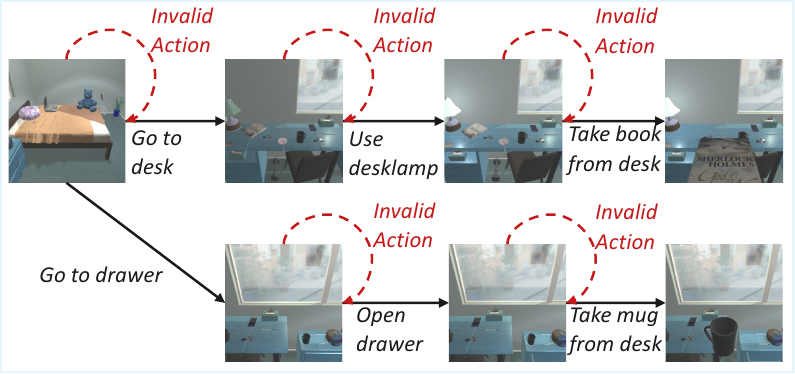}
        \vspace{-12pt}
        \captionof{figure}{Illustration of invalid actions in constructed state graphs.
In agentic environments, invalid actions produce spurious edges in state graphs, injecting unexpected noise, which potentially mislead the analysis of state-wise properties. Removing such actions ensures cleaner graphs.}
        \label{fig: invalid_action}
    \end{minipage}
    \hfill
    \begin{minipage}{0.49\textwidth}
        \centering
        \includegraphics[width=\linewidth]{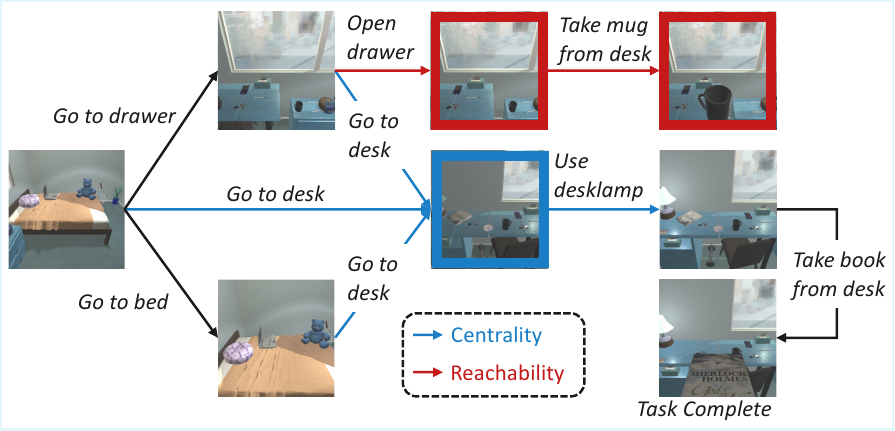}
        \captionof{figure}{Illustrative analysis of a state graph. We identify two critical properties towards task success: (1) \textbf{Reachability:} Whether a state has at least one observed path to a success state. (2) \textbf{Centrality:} The number of incoming and outgoing actions (edges) connected to each state.}
        \label{fig: graph_analysis}
    \end{minipage}
\vspace{-12pt}
\end{figure*}

\subsection{Revealing States' Properties via State Graphs}
\label{ssec: understanding_the_state_graph}

The global structure of the state graph $\mathcal{G}_{\text{state}}$ enables topological analysis of state dependencies, exposing prerequisites for success. We identify the following structural properties:
\begin{itemize}[leftmargin=*]
\vspace{-6pt}
    \item \textbf{Reachability.} A state is reachable to success if there exists an observed path from it to a successful terminal state. In Fig.~\ref{fig: graph_analysis}, states with no outgoing paths to a success (highlighted in red) are deemed unreachable. Moreover, proximity to success correlates with higher likelihood of reaching success.

    \item \textbf{Centrality.} In successful trajectories, states with high in-degree and out-degree often act as bottlenecks or pivotal junctures for problem-solving. For example, in Fig.~\ref{fig: graph_analysis}, the state reached by the action ``Go to desk'' is critical: it has high out-degree, providing access to essential objects (desklamp and book) required for task success.
\end{itemize}

\section{Learning with the State Graph}
\label{sec: method}

To address the sparse-reward problem in agentic RL, this section leverages the state graph to construct dense, meaningful per-action rewards for effective optimization. We propose:
(1) shaping process rewards via graph propagation and projecting the propagated rewards onto trajectories,
(2) estimating synergistic advantages that integrate action- and trajectory-level supervision, and
(3) updating the policy via a clipped surrogate objective using the resulting advantages.
Proofs of all formal statements are deferred to Appendix~\ref{app:proofs}.

\subsection{Process Reward Shaping}
\label{ssec: process_reward_shaping}

\paragraph{Reward propagation.}
Let $\mathcal{S}_{\text{succ}}\subseteq\mathcal{S}$ be the set of success terminals. 
We perform multi-source reverse BFS from all $  \hat{s}^\star \in \mathcal{S}_{\text{succ}}  $, traversing $  \mathcal{T}  $ backwards, to obtain the shortest-hop distance to the nearest success node. Each node is assigned a propagated process reward based on this distance.

\begin{remark}[Value Topology]
\label{rem:value_topology}
In a deterministic MDP with sparse terminal reward $R(s) = \mathbf{1}[s \in \mathcal{S}_{\text{succ}}]$ and discount $\gamma \in (0,1)$, the value function satisfies $V^*(s) = \gamma^{d^*(s,\,\mathcal{S}_{\text{succ}})}$ for states reachable to success, and $V^*(s)=0$ otherwise. This motivates using hop-distance as a natural proxy for state value, and our choice of $R(s) = \gamma^{d_{\mathcal{G}}(s)}$.
\end{remark}
\begin{proposition}[Value Approximation]
  \label{prop:value_approximation}
  In the deterministic MDP setting of Rem.~\ref{rem:value_topology}, let $d_{\mathcal{G}}(s)$ denote the BFS shortest-hop distance in $\mathcal{G}_{\text{state}}$ from $s$ to the nearest success node ($d_{\mathcal{G}}(s) = \infty$ if no path exists). Define $R(s) = \gamma^{d_{\mathcal{G}}(s)}$ (zero when $d_{\mathcal{G}} = \infty$). Then $R(s)$ approximates $V^*(s)$ with error $|R(s) - V^*(s)| = \gamma^{d^*(s)} - \gamma^{d_{\mathcal{G}}(s)} \geq 0$, vanishing as $G \to \infty$ (by Rem.~\ref{rem:partial_coverage}).
\end{proposition}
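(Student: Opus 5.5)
The key step is to compare the two distances: I will show $d_{\mathcal{G}}(s) \geq d^*(s)$ for every node $s$ of $\mathcal{G}_{\text{state}}$, where $d^*(s)$ is the true shortest-hop distance to $\mathcal{S}_{\text{succ}}$ in the environment graph.

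By Rem.~\ref{rem:partial_coverage}, $\mathcal{G}_{\text{state}}$ is a subgraph of $\mathcal{G}_{\text{env}}^{\text{reach}}$. Its edges are only observed valid transitions, i.e.\ triples with $P(\hat s'\mid\hat s,\hat a)>0$, which are deterministic environment transitions. Hence any directed path from $s$ to a success node in $\mathcal{G}_{\text{state}}$ is also a path in the environment, and minimizing over paths gives $d^*(s)\le d_{\mathcal{G}}(s)$. If no such path exists in $\mathcal{G}_{\text{state}}$, then $d_{\mathcal{G}}(s)=\infty$ and the inequality holds trivially.

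Next I use Rem.~\ref{rem:value_topology}, which gives $V^*(s)=\gamma^{d^*(s)}$, with the convention $\gamma^\infty=0$ covering states unreachable to success. Since $\gamma\in(0,1)$, the map $d\mapsto\gamma^d$ is decreasing on $\{0,1,2,\dots\}\cup\{\infty\}$. Therefore $R(s)=\gamma^{d_{\mathcal{G}}(s)}\le\gamma^{d^*(s)}=V^*(s)$, so $|R(s)-V^*(s)|=\gamma^{d^*(s)}-\gamma^{d_{\mathcal{G}}(s)}\ge 0$. In other words, $R$ is a pessimistic estimate of the value. I will also check the edge cases:
\begin{itemize}
\item if $s$ is itself a success node, both distances are $0$ and the error is zero;
\item if $d^*(s)=\infty$, both sides are $0$.
\end{itemize}

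Finally, I will prove that the error vanishes as $G\to\infty$; this is the main obstacle, since Rem.~\ref{rem:partial_coverage} only asserts that coverage "expands". Fix $s$ with $d^*(s)=k<\infty$, and choose an optimal environment path $s=u_0\to u_1\to\cdots\to u_k\in\mathcal{S}_{\text{succ}}$ with actions $b_0,\dots,b_{k-1}$. I need an additional assumption: $s$ is visited with positive probability under $\pi_\theta$ from $s_0$, and the policy gives positive probability to each $b_j$ at $u_j$. The softmax policies of LLMs give this automatically for valid actions. Then a single trajectory traverses the whole path $u_0,\dots,u_k$ with some probability $p>0$. Over $G$ independent samples, the probability that this path is never fully observed is $(1-p)^G\to 0$. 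Once it is observed, $d_{\mathcal{G}}(s)\le k$, and combining with $d_{\mathcal{G}}(s)\ge k$ from the first step gives equality. Hence $d_{\mathcal{G}}(s)\to d^*(s)$ almost surely, because the graph is monotone in $G$ when trajectories are nested. Equivalently, the error tends to zero in probability, and the state space is finite along any fixed finite path.

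I would state this positivity assumption explicitly, since without it the convergence can fail. An example is a policy that never takes the optimal action, in which case $d_{\mathcal{G}}(s)$ stays strictly above $d^*(s)$ no matter how large $G$ is.
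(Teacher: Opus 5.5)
Your first two steps match the paper's proof. The paper also argues that $\mathcal{G}_{\text{state}}$ is a subgraph of $\mathcal{G}_{\text{env}}^{\text{reach}}$, deduces $d_{\mathcal{G}}(s)\ge d^*(s)$, and uses that $\gamma^x$ is decreasing to obtain $R(s)\le V^*(s)$ with the stated error.

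You diverge on the vanishing-error clause. The paper simply asserts that as $G\to\infty$ ``coverage expands'' and hence $d_{\mathcal{G}}(s)\to d^*(s)$. You instead observe that growing coverage does not by itself force the edges of an optimal path to appear. You then add an explicit exploration (positivity) hypothesis and derive $\Pr[d_{\mathcal{G}}(s)\neq d^*(s)]\le(1-p)^G$, which gives almost-sure convergence under nested sampling. Your version buys a rate and pins down exactly when the proposition's last clause can fail: a policy that never takes an optimal action leaves $d_{\mathcal{G}}(s)>d^*(s)$ for every $G$. The cost is an assumption the paper leaves implicit, so your argument is a rigorous completion of the paper's sketch rather than a different route.

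One small tightening: $p>0$ also requires the episode horizon to be long enough to reach $s$ and then take $k$ further steps. It is cleaner to assume directly that the optimal path from $s$ is traversed within the horizon with positive probability. The edges could also be observed piecewise across different trajectories, which only lowers the failure probability below your bound.
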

\begin{equation}
\begin{aligned}
R(\hat{s})\;=\;\gamma^{\,d(\hat{s})},\gamma\in(0,1),\qquad d(\hat{s})\;:=\; \min_{\hat{s}^\star\in\mathcal{S}_{\text{succ}}}\; \operatorname{dist}_{\text{hop}}(\hat{s} \leadsto \hat{s}^\star),
\end{aligned}
\label{eq:reward_propagation}
\end{equation}
where $R(\hat{s})=1$ for success states and decreases monotonically with hop distance, with $R(\hat{s})=0$ when no path to success exists.

\paragraph{Action-level rewards.}
Using the inverse preprocessing map $f^{-1}: \hat{s} \mapsto s, \hat{a} \mapsto a$, we project the propagated process rewards in state graphs back onto the raw states observed in trajectories: $R(s) := R(\hat{s}), \quad \forall\, s \in f^{-1}(\hat{s})$. For any transition $(s_t, a_t, s_{t+1})$ in the sampled trajectory, we define the action-shaped reward as the potential difference:
\begin{equation}
\label{eq:reward_difference}
\tilde{r}(s_t,a_t)\;=\;R(s_{t+1})-R(s_t).
\end{equation}

\begin{corollary}[Value Progress Approximation]
\label{cor:value_progress}
Let $\delta(s) := V^*(s) - R(s) \geq 0$ be the per-state approximation error from
Prop.~\ref{prop:value_approximation}. Then
$\tilde{r}(s,\,a,\,s')
  \;=\;
  \bigl(V^*(s') - V^*(s)\bigr)
  \;-\;
  \bigl(\delta(s') - \delta(s)\bigr).$
The residual $\delta(s')-\delta(s)$ reflects only local coverage imbalance between consecutive states; as coverage grows (Rem.~\ref{rem:partial_coverage}), $\delta(s')-\delta(s)\to 0$ and $\tilde{r}$ converges to the exact value increment $V^*(s')-V^*(s)$.
Moreover, since the sign of $\tilde{r}$ is determined by hop-distance changes in $\mathcal{G}_{\text{state}}$ (independently of $\delta$), it faithfully identifies progress toward success regardless of approximation quality.
\end{corollary}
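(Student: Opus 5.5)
The identity is pure algebra once $\delta$ is defined, so the real work is in the two qualitative claims. The plan has three steps.

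\textbf{Step 1: the decomposition.} By definition $R(s) = V^*(s) - \delta(s)$ for every state. Substituting this into Eq.~\eqref{eq:reward_difference} gives
\[
\tilde r(s,a,s') = R(s') - R(s) = \bigl(V^*(s') - V^*(s)\bigr) - \bigl(\delta(s') - \delta(s)\bigr).
\]
Prop.~\ref{prop:value_approximation} supplies $\delta \geq 0$. That follows because $\mathcal{G}_{\text{state}}$ is a subgraph of the reachable environment graph (Rem.~\ref{rem:partial_coverage}), so $d_{\mathcal{G}}(s) \geq d^*(s)$, and $\gamma^{x}$ is decreasing in $x$.

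\textbf{Step 2: convergence of the residual.} Prop.~\ref{prop:value_approximation} gives $\delta(s) \to 0$ pointwise as $G \to \infty$, and hence $\delta(s') - \delta(s) \to 0$ for each fixed transition.

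To make this concrete, I would argue as follows. Fix a shortest path from $s$ to success in $\mathcal{G}_{\text{env}}^{\text{reach}}$. Suppose every transition on it has positive probability under the sampling distribution of the policy. Then with probability tending to one all of its edges appear among $G$ samples. On that event $d_{\mathcal{G}}(s) = d^*(s)$, because $d_{\mathcal{G}}(s) \geq d^*(s)$ always holds and the path witnesses equality. The same argument applies to $s'$. Since there are finitely many states of interest, the limit holds for $\tilde r$ on every transition.

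\textbf{Step 3: the sign claim.} This is the main obstacle, because as stated it is slightly loose. Compute the sign of $\tilde r$ directly from $R = \gamma^{d_{\mathcal{G}}}$, splitting into cases on finiteness.
\begin{itemize}
\item If $d_{\mathcal{G}}(s')$ and $d_{\mathcal{G}}(s)$ are both finite, then $\operatorname{sign}\tilde r = \operatorname{sign}\bigl(d_{\mathcal{G}}(s) - d_{\mathcal{G}}(s')\bigr)$, since $\gamma \in (0,1)$ makes $x \mapsto \gamma^x$ strictly decreasing.
\item If exactly one of the two distances is infinite, the corresponding $R$ is $0$. The sign is then positive when $s'$ gains reachability and negative when it loses it.
\item If both distances are infinite, then $\tilde r = 0$.
\end{itemize}
So the sign depends only on hop-distance changes in $\mathcal{G}_{\text{state}}$ and involves $\delta$ nowhere. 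This is the sense in which the corollary says it ``faithfully identifies progress toward success'': progress is measured as a decrease in observed graph distance to success.

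I would also add a structural observation. If $(s,a,s')$ is an observed edge and $s'$ reaches success in $\mathcal{G}$, then $d_{\mathcal{G}}(s) \leq d_{\mathcal{G}}(s') + 1$. It follows that $\tilde r < 0$ only when the action fails to move along a shortest observed path. In a deterministic MDP the true value satisfies the same relation, so the graph-based sign mirrors the Bellman structure. I would not claim that the sign always coincides with that of $V^*(s') - V^*(s)$ at finite $G$; Step 2 shows that agreement holds only in the limit.
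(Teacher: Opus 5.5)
Your proposal is correct and follows the paper's proof essentially step for step. You substitute $R = V^* - \delta$ into $\tilde r = R(s') - R(s)$, invoke Prop.~\ref{prop:value_approximation} for $\delta \to 0$, and read off the sign from the strict monotonicity of $\gamma^x$. Your finite/infinite case split and the sampling-positivity hypothesis in Step 2 make explicit what the paper leaves implicit through the convention $\gamma^\infty := 0$ and Rem.~\ref{rem:partial_coverage}. The one difference is that the paper ends with the telescoping identity $\sum_t \tilde r_t = R(s_T) - R(s_0)$, which supports the ``regardless of approximation quality'' clause at the trajectory level. You instead rightly caution that, at finite $G$, the per-step sign tracks the change in observed-graph distance and need not match the sign of $V^*(s') - V^*(s)$.
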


\begin{proposition}[Trajectory-Level Ordering Invariance]
\label{prop:ordering_invariance}
For any successful and failed trajectory observed in $\mathcal{G}_{\text{state}}$
starting from the same canonical initial state $\hat{s}_0$,
one with $s_T^+ \in \mathcal{S}_{\mathrm{succ}}$ and one with $s_T^- \notin \mathcal{S}_{\mathrm{succ}}$,
the cumulative shaped rewards satisfy
$\sum_t \tilde{r}_t^{+} - \sum_t \tilde{r}_t^{-} = R(s_T^+) - R(s_T^-)$,
regardless of graph coverage. Hence $\tilde{r}$ always ranks successful trajectories
above failed ones, preserving the learnability of successful trajectories for any
finite state graph.
\end{proposition}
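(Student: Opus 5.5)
The argument is a telescoping computation followed by a sign check. By Eq.~\eqref{eq:reward_difference}, along any trajectory $\tau=(s_0,a_0,\ldots,s_T)$ the shaped rewards are potential differences. Their sum therefore telescopes:
\[
\sum_{t=0}^{T-1}\tilde r(s_t,a_t)=\sum_{t=0}^{T-1}\bigl(R(s_{t+1})-R(s_t)\bigr)=R(s_T)-R(s_0).
\]
This identity is purely algebraic. It uses no property of $R$ beyond being a well-defined function on raw states, which holds because $R(s):=R(\hat s)$ is defined through the canonicalization $f$. In particular it does not depend on how well $\mathcal{G}_{\text{state}}$ covers the environment graph, which is the source of the phrase ``regardless of graph coverage.''

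\textbf{Step 1: apply the identity to both trajectories and cancel the starting terms.} For the successful trajectory $\tau^+$ the sum is $R(s_T^+)-R(s_0^+)$, and for the failed trajectory $\tau^-$ it is $R(s_T^-)-R(s_0^-)$. Both trajectories start from the same canonical state, so $f(s_0^+)=f(s_0^-)=\hat s_0$. Since $R$ is constant on the fibres of $f$, $R(s_0^+)=R(s_0^-)=R(\hat s_0)$. Subtracting the two sums then gives $R(s_T^+)-R(s_T^-)$.

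One point needs care here. Transitions with invalid actions are pruned from $\mathcal{T}$, but the states they lead to still receive a value $R$, possibly $0$ if they are disconnected. The telescoping is over the raw trajectory's states, not over graph edges, so pruning does not break it.

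\textbf{Step 2: show the difference is strictly positive.} We have $s_T^+\in\mathcal{S}_{\text{succ}}$, so $d(s_T^+)=0$ and $R(s_T^+)=1$. For $s_T^-\notin\mathcal{S}_{\text{succ}}$ we need $R(s_T^-)<1$. This holds because $d(s_T^-)\ge 1$: either there is no path to success, giving $R=0$, or the shortest path has at least one hop, giving $R\le\gamma<1$ since $\gamma\in(0,1)$. Hence the difference is at least $1-\gamma>0$, so $\tilde r$ ranks every observed success above every observed failure from the same start.

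\textbf{Main obstacle.} The delicate step is Step 2. A terminal failure state that is not itself in $\mathcal{S}_{\text{succ}}$ could have an observed outgoing path to success, for instance if the same canonical state appears non-terminally in another trajectory. In that case $R(s_T^-)$ would be $\gamma^{d}$ rather than $0$. The bound $d\ge1$ still covers this case, so strict ordering survives, with margin $1-\gamma$ in the worst case. We would state this explicitly rather than assume $R(s_T^-)=0$. The identity $\sum\tilde r^+-\sum\tilde r^-=R(s_T^+)-R(s_T^-)$ itself needs no such case analysis.
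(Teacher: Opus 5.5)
Your proof is correct and follows the paper's own argument: telescope $\sum_t \tilde r_t = R(s_T)-R(s_0)$ along each trajectory, then cancel the shared term $R(\hat s_0)$. Your Step 2 is more careful than the paper, which simply takes $R(s_T^-)=0$ in the binary setting to get a gap of exactly $1$; your bound $d(s_T^-)\ge 1 \Rightarrow R(s_T^-)\le\gamma$ also covers truncated failures whose final state has an observed path to success, where only a gap of at least $1-\gamma$ is guaranteed.
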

This echoes the policy invariance guarantee of \citet{ng1999policy}: our shaping preserves the learnability of successful trajectories regardless of graph coverage.
Intuitively, $\tilde{r}>0$ if $a_t$ moves closer to success, $\tilde{r}<0$ if it moves farther away, and $\tilde{r}=0$ on equally good plateaus, yielding dense and interpretable per-action credit. We further discuss the key differences from \citet{ng1999policy} in Appendix~\ref{app: pbrs}.

\subsection{Advantage Estimation}
\label{ssec: advantage estimation}

We combine reward signals at two granularities: action-level (local) and trajectory-level (global). We first collect action-reward pairs in trajectories through each $\bm{\hat{s}} \in \mathcal{S}$:
\vspace{-3pt}
\begin{equation}
\begin{aligned}
\text{Group}(\bm{\hat{s}}) :=
\bigcup_{i=1}^{G} \bigcup_{t=0}^{T_i-1} 
\bigl\{ (a_t^{(i)}, \tilde{r}_t^{(i)}) \bigr\},
\quad \text{s.t.} \quad
(s_t^{(i)}, a_t^{(i)}, s_{t+1}^{(i)}) \in \mathcal{T}, f(s_t^{(i)})=\bm{\hat{s}}.
\end{aligned}
\label{eq:state_group}
\end{equation}
Then the state-wise baseline and normalization factor are
\begin{equation}
\begin{aligned}
\mu(\hat{\bm{s}}) 
:= \operatorname{mean}\bigl\{ \tilde{r}_t^{(j)} \mid (a_t^{(j)}, \tilde{r}_t^{(j)}) \in \text{Group}(\hat{\bm{s}}) \bigr\},
\sigma(\hat{\bm{s}}) 
:= F_{\text{norm}}\bigl\{ \tilde{r}_t^{(j)} \mid (a_t^{(j)}, \tilde{r}_t^{(j)}) \in \text{Group}(\hat{\bm{s}}) \bigr\},
\end{aligned}
\label{eq:mu_sigma}
\end{equation}
where $\mu$ measures the baseline of the policy given a state $\hat{\bm{s}}$, $F_{\text{norm}}$ is a positive-valued normalization function (e.g., sample standard deviation regularized by $+\varepsilon$ to ensure a nonzero denominator).
Then the advantage for an action $a_t^{(i)}$ taken in state $\hat{\bm{s}}$ is

\vspace{-12pt}
\begin{equation}
A^{\text{action}}_{t,k}(\hat{\bm{s}}, a_t^{(i)}) \;=\; (\tilde{r}_t^{(i)} - \mu(\hat{\bm{s}}))/\sigma(\hat{\bm{s}}).
\label{eq:state_advantage}
\end{equation}
\vspace{-12pt}

Action-level advantages provide fine-grained local guidance but become uninformative (always zero) in single-action states where the group size is 1. In such cases, we fall back to $\mu(\hat{s})=0$ and $\sigma(\hat{s})=1$, so the advantage reduces to $\tilde{r}$ directly, preserving the directional signal. To address this more broadly, we then incorporate trajectory-level advantages to construct synergistic advantages.
Let $r^{(i)} \in \{0, 1\}$ indicate whether trajectory $\tau^{(i)}$ achieved success. 
The trajectory-level advantage is then defined as
\begin{equation}
A^{\text{traj}}_{t,k}(\tau^{(i)}) \;=\; (r^{(i)} - \overline{r})/ \hat{\sigma}_r,
\label{eq:trajectory_advantage}
\end{equation}
where $\overline{r}$ is the mean success rate and $\hat{\sigma}_r$ is the standard deviation of $\{r^{(i)}\}_{i=1}^G$ over $G$ trajectories.
We then combine both signals to obtain a robust final advantage:
\begin{equation}
A_{t,k}^{(i)} \; = \; \alpha_\text{action} \, A^{\text{action}}_{t,k} + \alpha_\text{traj} \, A^{\text{traj}}_{t,k},
\end{equation}
where $  \alpha_{\text{action}}, \alpha_{\text{traj}} \geq 0  $ are hyperparameters that control the relative strength of action-local versus trajectory-global supervision. This synergistic design ensures that the trajectory-level term provides guidance in low-data regimes (e.g., states with few or no alternative actions), while the action-level term delivers precise, local discrimination, resulting in dense and robust credit assignment.

\subsection{Policy Update}
\label{ssec: policy_update}

We update the policy using a clipped surrogate objective following PPO~\citep{schulman2017proximal} that favors actions with higher synergistic advantages. 
Given $G$ trajectories $\{\tau^{(i)}\}_{i=1}^G$ sampled from the behavior policy $\pi_{\theta_{\text{old}}}$, where each trajectory $\tau^{(i)}$ contains $T_i$ transitions, the per-step importance ratio is defined as $\rho_{i,t} = \pi_\theta(o^{(i)}_{t,k} \mid s^{(i)}_{t,k}, o^{(i)}_{t,<k})/\pi_{\theta_{\text{old}}}(o^{(i)}_{t,k} \mid s^{(i)}_{t,k}, o^{(i)}_{t,<k})$.
The objective of \textsc{RewardFlow} is defined as
\vspace{-8pt}
\begin{align}
 J_{\textsc{RewardFlow}}(\theta) = \mathbb{E}_{\substack{s_0 \sim \mathcal{S}, \\ \{o_i\}_{i=1}^G \sim \pi_{\theta_{\text{old}}}}} &\Bigg[ \nonumber \frac{1}{G} \sum_{i=1}^G \frac{1}{T_i} \sum_{t=1}^{T_i}\frac{1}{|o_t^{(i)}|} \sum_{k=1}^{|o_t^{(i)}|}\bigg( \min \bigg[
    \rho^{(i)}_{t,k} \, A^{(i)}_{t,k},\\ 
    & \ \operatorname{clip}(\rho^{(i)}_{t,k}, 1-\varepsilon, 1+\varepsilon) \, A^{(i)}_{t,k} \bigg] - \beta \mathbb{D}_{\text{KL}}(\pi_\theta|| \pi_\text{ref}) \bigg) \Bigg],
\label{eq:rewardflow_objective}
\end{align}
\vspace{-12pt}

with clipping parameter $\epsilon > 0$. 
Together, these components steer the policy toward actions that are both locally progress-improving and globally success-oriented.

\section{Experiments}
\label{sec: experiments}


\subsection{Experiment Setup}

We evaluate \textsc{RewardFlow} on four environments: \textbf{ALFWorld}~\citep{shridhar2021alfworld} (text-based household tasks), \textbf{WebShop}~\citep{yao2022webshop} (web shopping), \textbf{DeepResearch}~\citep{jin2025search} (knowledge-intensive QA), and \textbf{Sokoban}~\citep{SchraderSokoban2018} (visual puzzle). On ALFWorld, WebShop, and Sokoban, we compare against RLOO~\citep{ahmadian2024back}, GRPO~\citep{shao2024deepseekmath}, and GiGPO~\citep{feng2025group}; on DeepResearch, following GiGPO’s experimental setup, we further compare against R1-Instruct, Search-R1~\citep{jin2025search}, ZeroSearch~\citep{sun2025zerosearch}, and StepSearch~\citep{wang2025stepsearch}. We train Qwen2.5-1.5B/3B/7B-Instruct on language-based environments and Qwen2.5-VL-3B/7B-Instruct on Sokoban, with E5~\citep{wang2022text} for state embeddings; full details are in Appendix~\ref{app: detailed experimental setting}.

\begin{table*}[t!]
\vspace{-8pt}
\centering
\caption{Main results (mean $\pm$ std over three seeds). We report success rates (\%) for ALFWorld (per-subtask and overall average), and both normalized score and success rate for WebShop and Sokoban. \textbf{Bold} and \underline{underline} denote the best and second-best results per column.}
\label{tab:main}
\vspace{2pt}
\renewcommand{\arraystretch}{1.20}
\fontsize{8}{8}\selectfont
\setlength{\tabcolsep}{2pt}
\begin{tabular}{@{}ll @{\hspace{4pt}} | ccccccc | cc | cc@{}}
\toprule
\multirow{2}{*}{\textbf{Type}} & \multirow{2}{*}{\textbf{Method}}& \multicolumn{7}{c|}{\textbf{ALFWorld}} & \multicolumn{2}{c|}{\textbf{WebShop}} & \multicolumn{2}{c}{\textbf{Sokoban}} \\
\cmidrule(lr){3-9}\cmidrule(lr){10-11}\cmidrule(lr){12-13}
    & & Pick & Look & Clean & Heat & Cool & Pick2 & All & Score & Succ. & Score & Succ. \\
\midrule
\rowcolor{regimeHelpful}
\multicolumn{13}{@{}l}{\textit{Qwen2.5-1.5B-Instruct}} \\
Prompting & Base & 9.3\textsubscript{\textpm0.2} & 0.0\textsubscript{\textpm0.0} & 0.0\textsubscript{\textpm0.0} & 0.0\textsubscript{\textpm0.0} & 0.0\textsubscript{\textpm0.0} & 10.5\textsubscript{\textpm0.0} & 3.9\textsubscript{\textpm0.0} & 20.8\textsubscript{\textpm1.5} & 4.9\textsubscript{\textpm2.0} & {--} & {--} \\
RL Training & RLOO & \cellbest{73.3\textsubscript{\textpm5.9}} & \cellsec{16.7\textsubscript{\textpm0.0}} & 46.8\textsubscript{\textpm4.0} & \cellsec{74.6\textsubscript{\textpm6.9}} & \cellsec{54.2\textsubscript{\textpm4.2}} & \cellsec{52.6\textsubscript{\textpm9.1}} & \cellsec{56.0\textsubscript{\textpm2.5}} & 80.8\textsubscript{\textpm1.2} & 56.5\textsubscript{\textpm7.4} & {--} & {--} \\
RL Training & GRPO & \cellsec{72.2\textsubscript{\textpm3.2}} & 13.9\textsubscript{\textpm4.8} & 34.2\textsubscript{\textpm10.4} & 42.1\textsubscript{\textpm8.4} & \cellbest{58.3\textsubscript{\textpm7.2}} & 49.1\textsubscript{\textpm12.2} & 49.5\textsubscript{\textpm3.0} & 64.8\textsubscript{\textpm2.0} & 45.1\textsubscript{\textpm0.5} & {--} & {--} \\
RL Training & GiGPO & 65.9\textsubscript{\textpm5.7} & \cellbest{26.7\textsubscript{\textpm17.3}} & \cellsec{54.7\textsubscript{\textpm8.2}} & 33.8\textsubscript{\textpm9.1} & 50.6\textsubscript{\textpm6.2} & 48.6\textsubscript{\textpm10.3} & 50.8\textsubscript{\textpm2.8} & \cellsec{81.7\textsubscript{\textpm5.5}} & \cellsec{60.7\textsubscript{\textpm4.8}} & {--} & {--} \\
RL Training & \textsc{RewardFlow} & 72.1\textsubscript{\textpm3.5} & \cellsec{16.7\textsubscript{\textpm0.0}} & \cellbest{76.0\textsubscript{\textpm7.6}} & \cellbest{95.2\textsubscript{\textpm8.2}} & 27.8\textsubscript{\textpm6.4} & \cellbest{77.2\textsubscript{\textpm6.1}} & \cellbest{62.8\textsubscript{\textpm1.8}} & \cellbest{83.8\textsubscript{\textpm4.8}} & \cellbest{66.4\textsubscript{\textpm5.9}} & {--} & {--} \\
\midrule
\rowcolor{regimeHelpful}
\multicolumn{13}{@{}l}{\textit{Qwen2.5-(VL)-3B-Instruct}} \\
Prompting & Base & 36.2\textsubscript{\textpm15.1} & 13.9\textsubscript{\textpm4.8} & 0.0\textsubscript{\textpm0.0} & 2.4\textsubscript{\textpm4.1} & 5.6\textsubscript{\textpm2.4} & 12.3\textsubscript{\textpm3.0} & 13.5\textsubscript{\textpm3.3} & 6.1\textsubscript{\textpm1.7} & 0.5\textsubscript{\textpm0.9} & 0.8\textsubscript{\textpm0.3} & 18.0\textsubscript{\textpm3.4} \\
RL Training & RLOO & \cellbest{88.7\textsubscript{\textpm1.7}} & 16.7\textsubscript{\textpm0.0} & 55.7\textsubscript{\textpm5.8} & \cellsec{53.7\textsubscript{\textpm6.0}} & 41.7\textsubscript{\textpm4.2} & 42.1\textsubscript{\textpm9.1} & 55.5\textsubscript{\textpm0.0} & 70.3\textsubscript{\textpm6.0} & 49.2\textsubscript{\textpm5.9} & 0.7\textsubscript{\textpm0.1} & 16.7\textsubscript{\textpm0.5} \\
RL Training & GRPO & 83.5\textsubscript{\textpm2.0} & 16.7\textsubscript{\textpm0.0} & 45.6\textsubscript{\textpm1.0} & 34.9\textsubscript{\textpm1.4} & 41.7\textsubscript{\textpm4.2} & 36.8\textsubscript{\textpm15.8} & 49.2\textsubscript{\textpm3.6} & 63.1\textsubscript{\textpm1.9} & 49.0\textsubscript{\textpm4.0} & \cellsec{1.0\textsubscript{\textpm0.2}} & \cellsec{22.9\textsubscript{\textpm3.7}} \\
RL Training & GiGPO & 84.8\textsubscript{\textpm2.9} & \cellsec{41.7\textsubscript{\textpm8.3}} & \cellsec{65.9\textsubscript{\textpm9.6}} & \cellbest{55.9\textsubscript{\textpm2.2}} & \cellsec{53.6\textsubscript{\textpm14.0}} & \cellsec{65.8\textsubscript{\textpm14.7}} & \cellsec{64.6\textsubscript{\textpm0.4}} & \cellsec{77.9\textsubscript{\textpm3.4}} & \cellsec{62.0\textsubscript{\textpm5.9}} & 0.9\textsubscript{\textpm0.3} & 19.8\textsubscript{\textpm3.6} \\
RL Training & \textsc{RewardFlow} & \cellsec{87.0\textsubscript{\textpm7.9}} & \cellbest{56.7\textsubscript{\textpm11.5}} & \cellbest{85.3\textsubscript{\textpm4.6}} & 43.1\textsubscript{\textpm6.8} & \cellbest{73.3\textsubscript{\textpm5.1}} & \cellbest{76.0\textsubscript{\textpm7.4}} & \cellbest{75.5\textsubscript{\textpm3.2}} & \cellbest{78.8\textsubscript{\textpm4.6}} & \cellbest{62.7\textsubscript{\textpm3.2}} & \cellbest{2.1\textsubscript{\textpm0.2}} & \cellbest{49.5\textsubscript{\textpm1.2}} \\
\midrule
\rowcolor{regimeHelpful}
\multicolumn{13}{@{}l}{\textit{Qwen2.5-(VL)-7B-Instruct}} \\
Prompting & Base & 45.3\textsubscript{\textpm6.8} & 16.7\textsubscript{\textpm0.0} & 8.8\textsubscript{\textpm2.0} & 2.4\textsubscript{\textpm4.1} & 6.9\textsubscript{\textpm2.4} & 0.0\textsubscript{\textpm0.0} & 16.4\textsubscript{\textpm2.1} & 3.3\textsubscript{\textpm0.4} & 0.3\textsubscript{\textpm0.5} & 1.3\textsubscript{\textpm0.1} & 25.0\textsubscript{\textpm2.1} \\
RL Training & RLOO & \cellsec{94.8\textsubscript{\textpm1.9}} & 36.1\textsubscript{\textpm4.8} & 78.4\textsubscript{\textpm11.8} & 79.0\textsubscript{\textpm0.8} & 68.1\textsubscript{\textpm2.4} & 75.4\textsubscript{\textpm6.1} & 76.3\textsubscript{\textpm2.7} & 76.6\textsubscript{\textpm2.8} & 65.6\textsubscript{\textpm0.8} & \cellsec{1.4\textsubscript{\textpm0.2}} & 35.7\textsubscript{\textpm3.3} \\
RL Training & GRPO & 90.7\textsubscript{\textpm0.2} & 25.0\textsubscript{\textpm0.0} & 63.4\textsubscript{\textpm10.7} & 72.4\textsubscript{\textpm10.7} & \cellsec{70.8\textsubscript{\textpm0.0}} & 73.7\textsubscript{\textpm9.1} & 70.6\textsubscript{\textpm2.0} & 78.5\textsubscript{\textpm3.2} & 65.1\textsubscript{\textpm3.5} & 1.3\textsubscript{\textpm0.3} & 26.0\textsubscript{\textpm5.1} \\
RL Training & GiGPO & 90.7\textsubscript{\textpm6.2} & \cellsec{43.3\textsubscript{\textpm31.8}} & \cellsec{87.3\textsubscript{\textpm9.6}} & \cellsec{83.3\textsubscript{\textpm10.9}} & \cellbest{73.2\textsubscript{\textpm2.2}} & \cellsec{85.9\textsubscript{\textpm3.1}} & \cellsec{80.7\textsubscript{\textpm2.0}} & \cellsec{83.8\textsubscript{\textpm1.5}} & \cellsec{70.6\textsubscript{\textpm0.9}} & \cellsec{1.4\textsubscript{\textpm0.2}} & \cellsec{37.8\textsubscript{\textpm3.5}} \\
RL Training & \textsc{RewardFlow} & \cellbest{95.9\textsubscript{\textpm3.6}} & \cellbest{62.3\textsubscript{\textpm14.7}} & \cellbest{92.4\textsubscript{\textpm7.7}} & \cellbest{84.2\textsubscript{\textpm2.5}} & 67.5\textsubscript{\textpm12.5} & \cellbest{91.5\textsubscript{\textpm5.6}} & \cellbest{84.6\textsubscript{\textpm4.5}} & \cellbest{85.3\textsubscript{\textpm0.9}} & \cellbest{74.7\textsubscript{\textpm3.0}} & \cellbest{3.5\textsubscript{\textpm0.4}} & \cellbest{67.4\textsubscript{\textpm4.5}} \\
\bottomrule
\end{tabular}%
\vspace{-8pt}
\end{table*}

\begin{table*}[t!]
\centering
\caption{Performance comparison of \textsc{RewardFlow} on DeepResearch (trained on NQ + HotpotQA following GiGPO~\citep{feng2025group}). Reported as average accuracy (\%). \dag in-distribution, * out-of-distribution.}
\label{tab:deepresearch}
\vspace{2pt}
\renewcommand{\arraystretch}{1.20}
\fontsize{8}{8}\selectfont
\setlength{\tabcolsep}{8pt}
\begin{tabular}{@{}ll @{\hspace{4pt}} | ccc | cccc | c@{}}
\toprule
 \multirow{2}{*}{\textbf{Type}} & \multirow{2}{*}{\textbf{Method}} & \multicolumn{3}{c|}{\textbf{Single-hop QA}} & \multicolumn{4}{c|}{\textbf{Multi-hop QA}} & \multirow{2}{*}{\textbf{Avg.}}  \\
\cmidrule(lr){3-5}\cmidrule(lr){6-9}
 & & NQ$^\dag$ & TriviaQA* & PopQA* & HotpotQA$^\dag$ & 2Wiki* & Musique* & Bamboogle* & \\
\midrule
\rowcolor{regimeHelpful}
\multicolumn{10}{@{}l}{\textit{Qwen2.5-3B-Instruct}} \\
RL Training & R1-Instruct  & 27.0 & 53.7 & 19.9 & 23.7 & 29.2 & 7.2  & 29.3 & 27.1 \\
RL Training & Search-R1    & 34.1 & 54.5 & 37.8 & 32.4 & 31.9 & 10.3 & 26.4 & 32.5 \\
RL Training & ZeroSearch   & 41.4 & 57.4 & \cellbest{44.8} & 27.4 & 30.0 & 9.8  & 11.1 & 31.7 \\
RL Training & StepSearch   & {--} & {--} & {--} & 34.5 & 32.0 & \cellbest{17.4} & 34.4 & {--} \\
RL Training & GiGPO        & \cellsec{42.0} & \cellsec{59.5} & 42.4 & \cellsec{36.9} & \cellsec{37.0} & 12.6 & \cellbest{64.1} & \cellsec{42.1} \\
RL Training & \textsc{RewardFlow} & \cellbest{44.4} & \cellbest{60.9} & \cellsec{44.1} & \cellbest{40.3} & \cellbest{41.2} & \cellsec{15.2} & \cellsec{63.7} & \cellbest{44.3} \\
\midrule
\rowcolor{regimeHelpful}
\multicolumn{10}{@{}l}{\textit{Qwen2.5-7B-Instruct}} \\
RL Training & R1-Instruct  & 21.0 & 44.9 & 17.1 & 20.8 & 27.5 & 6.0  & 19.2 & 22.4 \\
RL Training & Search-R1    & 39.3 & 61.0 & 39.7 & 37.0 & 40.1 & 14.6 & 36.8 & 38.5 \\
RL Training & ZeroSearch   & 43.6 & 61.8 & \cellbest{51.5} & 34.6 & 35.2 & 18.4 & 27.8 & 39.1 \\
RL Training & StepSearch   & {--} & {--} & {--} & 38.6 & 36.6 & \cellbest{22.6} & 40.0 & {--} \\
RL Training & GiGPO        & \cellsec{46.4} & \cellsec{64.7} & 46.1 & \cellsec{41.6} & \cellsec{43.6} & 18.9 & \cellsec{68.9} & \cellsec{47.2} \\
RL Training & \textsc{RewardFlow}  & \cellbest{47.4} & \cellbest{65.2} & \cellsec{48.1} & \cellbest{44.7} & \cellbest{47.1} & \cellsec{19.9} & \cellbest{71.4} & \cellbest{49.1} \\
\bottomrule
\end{tabular}%
\vspace{-8pt}
\end{table*}

\subsection{Main Results}
Experimental results appear in Tabs.~\ref{tab:main} and~\ref{tab:deepresearch}. \textsc{RewardFlow} outperforms prior RL methods across model sizes from 1.5B to 7B on all benchmarks, achieving average success-rate gains of +6.2\% on text-based benchmarks and +29.7\% on visual reasoning over the strongest baseline (GiGPO). These gains stem primarily from \textsc{RewardFlow}’s accurate estimation of dense process rewards. Key findings per benchmark are summarized below.

\begin{itemize}[leftmargin=*]
\vspace{-6pt}
    \item \textbf{ALFWorld:} \textsc{RewardFlow} achieves the highest overall success rate across all model scales, reaching 84.6\% at 7B (+3.9\% over GiGPO), 75.5\% at 3B (+10.9\%), and 62.8\% at 1.5B (+12.0\%), ranking first or second across nearly all subtasks. Notably, gains grow larger as model capacity decreases, suggesting that fine-grained per-step rewards are especially valuable when credit assignment over long horizons is harder for lower-capacity models.
    \vspace{-3pt}
    \item \textbf{WebShop:} \textsc{RewardFlow} achieves the best success rate and score at every model scale, reaching 74.7\% success at 7B (+4.1\% over GiGPO) and 66.4\% at 1.5B (+5.7\%), consistently ranking first across both metrics. This underscores its effectiveness in improving agentic web navigation regardless of model size, particularly for tasks requiring precise multi-step decision-making.
    \vspace{-3pt}
    \item \textbf{Sokoban:} \textsc{RewardFlow} delivers the most significant gains in this visual setting, achieving 67.4\% success at 7B (+29.6\% over GiGPO's 37.8\%) and 49.5\% at 3B (+29.7\% over the best baseline), with scores more than doubling at both scales. These large margins demonstrate strong generalization of topology-aware reward propagation to visual agentic reasoning tasks.
    \vspace{-3pt}
    \item \textbf{DeepResearch:} While states remain stochastic, \textsc{RewardFlow} delivers the best average performance in both 3B/7B models (44.3\% at 3B, +2.2\% over GiGPO; 49.1\% at 7B, +1.9\% over GiGPO). Across subtasks, it consistently ranks first or second in results compared to baselines, indicating its applicability to broad agentic tasks with appropriate state aggregation strategies.
\vspace{-6pt}
\end{itemize}

\begin{figure}[t]
\centering

\begin{minipage}[t]{0.50\textwidth}
  \centering
  \captionof{table}{Out-of-distribution evaluation on ALFWorld. The agent solves household tasks with familiar objects in entirely novel environments (different rooms, layouts, furniture). We report the average success(\%); higher is better.}
  \label{tab:ood}
  \vspace{2pt}
    \renewcommand{\arraystretch}{1.20}
    \fontsize{7}{10.5}\selectfont
    \setlength{\tabcolsep}{1pt}
  \begin{tabular}{@{}l | ccccccc@{}}
  \toprule
  \multirow{2}{*}{Method} & \multicolumn{7}{c}{\textbf{ALFWorld}}\\
  \cmidrule(lr){2-8}
  & Pick & Look & Clean & Heat & Cool & Pick2 & All\\
  \midrule
  \rowcolor{regimeHelpful}\multicolumn{8}{@{}l}{\textit{Qwen2.5-1.5B-Instruct}}\\
  Base & 2.4\textsubscript{\textpm4.1} & 0.0\textsubscript{\textpm0.0} & 5.7\textsubscript{\textpm2.8} & 3.4\textsubscript{\textpm3.4} & 0.0\textsubscript{\textpm0.0} & 0.0\textsubscript{\textpm0.0} & 2.6\textsubscript{\textpm1.2} \\
  RLOO & 66.7\textsubscript{\textpm10.9} & 35.1\textsubscript{\textpm3.0} & 37.5\textsubscript{\textpm3.5} & 41.4\textsubscript{\textpm6.9} & 26.6\textsubscript{\textpm4.1} & \cellsec{33.3\textsubscript{\textpm13.3}} & 39.3\textsubscript{\textpm2.3} \\
  GRPO & \cellsec{69.0\textsubscript{\textpm4.1}} & 26.3\textsubscript{\textpm13.9} & 33.7\textsubscript{\textpm4.9} & 34.5\textsubscript{\textpm6.9} & \cellbest{32.5\textsubscript{\textpm17.3}} & 17.8\textsubscript{\textpm3.8} & 34.6\textsubscript{\textpm2.0} \\
  GiGPO & \cellsec{69.0\textsubscript{\textpm4.1}} & \cellsec{57.9\textsubscript{\textpm5.3}} & \cellsec{53.8\textsubscript{\textpm3.0}} & \cellsec{48.3\textsubscript{\textpm6.0}} & \cellsec{28.7\textsubscript{\textpm4.5}} & 13.3\textsubscript{\textpm0.0} & \cellsec{46.9\textsubscript{\textpm2.8}} \\
  \textsc{RewardFlow} & \cellbest{83.3\textsubscript{\textpm8.2}} & \cellbest{84.2\textsubscript{\textpm5.3}} & \cellbest{69.2\textsubscript{\textpm1.2}} & \cellbest{74.7\textsubscript{\textpm10.0}} & 28.3\textsubscript{\textpm11.6} & \cellbest{44.4\textsubscript{\textpm3.8}} & \cellbest{66.1\textsubscript{\textpm1.8}} \\
  \midrule
  \rowcolor{regimeHelpful}\multicolumn{8}{@{}l}{\textit{Qwen2.5-3B-Instruct}}\\
  Base & 47.6\textsubscript{\textpm14.9} & 3.5\textsubscript{\textpm3.0} & 3.9\textsubscript{\textpm1.7} & 5.7\textsubscript{\textpm2.0} & 4.2\textsubscript{\textpm7.2} & 2.2\textsubscript{\textpm3.8} & 8.9\textsubscript{\textpm2.4} \\
  RLOO & 90.5\textsubscript{\textpm4.1} & 10.5\textsubscript{\textpm5.3} & 53.9\textsubscript{\textpm4.6} & \cellbest{44.8\textsubscript{\textpm6.0}} & \cellsec{53.2\textsubscript{\textpm8.2}} & 20.0\textsubscript{\textpm0.0} & 45.3\textsubscript{\textpm2.1} \\
  GRPO & \cellbest{95.2\textsubscript{\textpm4.1}} & 28.1\textsubscript{\textpm8.0} & 46.1\textsubscript{\textpm5.1} & 37.9\textsubscript{\textpm3.4} & 38.7\textsubscript{\textpm6.6} & 44.4\textsubscript{\textpm13.9} & 45.8\textsubscript{\textpm2.5} \\
  GiGPO & \cellsec{92.9\textsubscript{\textpm0.0}} & \cellsec{47.4\textsubscript{\textpm9.1}} & \cellsec{54.8\textsubscript{\textpm5.5}} & \cellsec{39.1\textsubscript{\textpm5.3}} & 40.9\textsubscript{\textpm7.9} & \cellbest{57.8\textsubscript{\textpm3.8}} & \cellsec{52.9\textsubscript{\textpm0.9}} \\
  \textsc{RewardFlow} & \cellsec{92.9\textsubscript{\textpm0.0}} & \cellbest{52.6\textsubscript{\textpm10.5}} & \cellbest{61.6\textsubscript{\textpm9.0}} & 33.3\textsubscript{\textpm5.3} & \cellbest{55.1\textsubscript{\textpm6.5}} & \cellsec{53.3\textsubscript{\textpm6.7}} & \cellbest{55.5\textsubscript{\textpm4.1}} \\
  \midrule
  \rowcolor{regimeHelpful}\multicolumn{8}{@{}l}{\textit{Qwen2.5-7B-Instruct}}\\
  Base & 28.6\textsubscript{\textpm12.4} & 12.3\textsubscript{\textpm3.0} & 16.4\textsubscript{\textpm7.5} & 8.0\textsubscript{\textpm4.0} & 12.1\textsubscript{\textpm10.5} & 0.0\textsubscript{\textpm0.0} & 12.8\textsubscript{\textpm4.3} \\
  RLOO & \cellbest{92.9\textsubscript{\textpm0.0}} & \cellsec{64.9\textsubscript{\textpm3.0}} & \cellbest{86.5\textsubscript{\textpm3.6}} & 62.1\textsubscript{\textpm3.4} & \cellsec{81.6\textsubscript{\textpm0.6}} & \cellbest{84.4\textsubscript{\textpm10.2}} & \cellsec{77.6\textsubscript{\textpm3.0}} \\
  GRPO & 83.3\textsubscript{\textpm4.1} & 47.4\textsubscript{\textpm0.0} & 73.0\textsubscript{\textpm6.3} & 57.5\textsubscript{\textpm5.3} & \cellsec{81.6\textsubscript{\textpm0.6}} & 60.0\textsubscript{\textpm17.6} & 66.4\textsubscript{\textpm4.9} \\
  GiGPO & \cellbest{92.9\textsubscript{\textpm0.0}} & 49.1\textsubscript{\textpm8.0} & \cellsec{80.8\textsubscript{\textpm7.2}} & \cellsec{65.5\textsubscript{\textpm6.0}} & 79.5\textsubscript{\textpm4.0} & 68.9\textsubscript{\textpm15.4} & 72.4\textsubscript{\textpm4.4} \\
  \textsc{RewardFlow} & \cellsec{90.5\textsubscript{\textpm4.1}} & \cellbest{87.7\textsubscript{\textpm16.9}} & 77.9\textsubscript{\textpm1.9} & \cellbest{69.0\textsubscript{\textpm0.0}} & \cellbest{89.8\textsubscript{\textpm3.4}} & \cellsec{80.0\textsubscript{\textpm6.7}} & \cellbest{80.5\textsubscript{\textpm4.1}} \\
  \bottomrule
  \end{tabular}%
\end{minipage}
\hfill
\begin{minipage}[t]{0.46\textwidth}
    \centering
    \begin{minipage}[t]{\textwidth}
        \vspace{2pt}
        \centering
        \includegraphics[width=\linewidth]{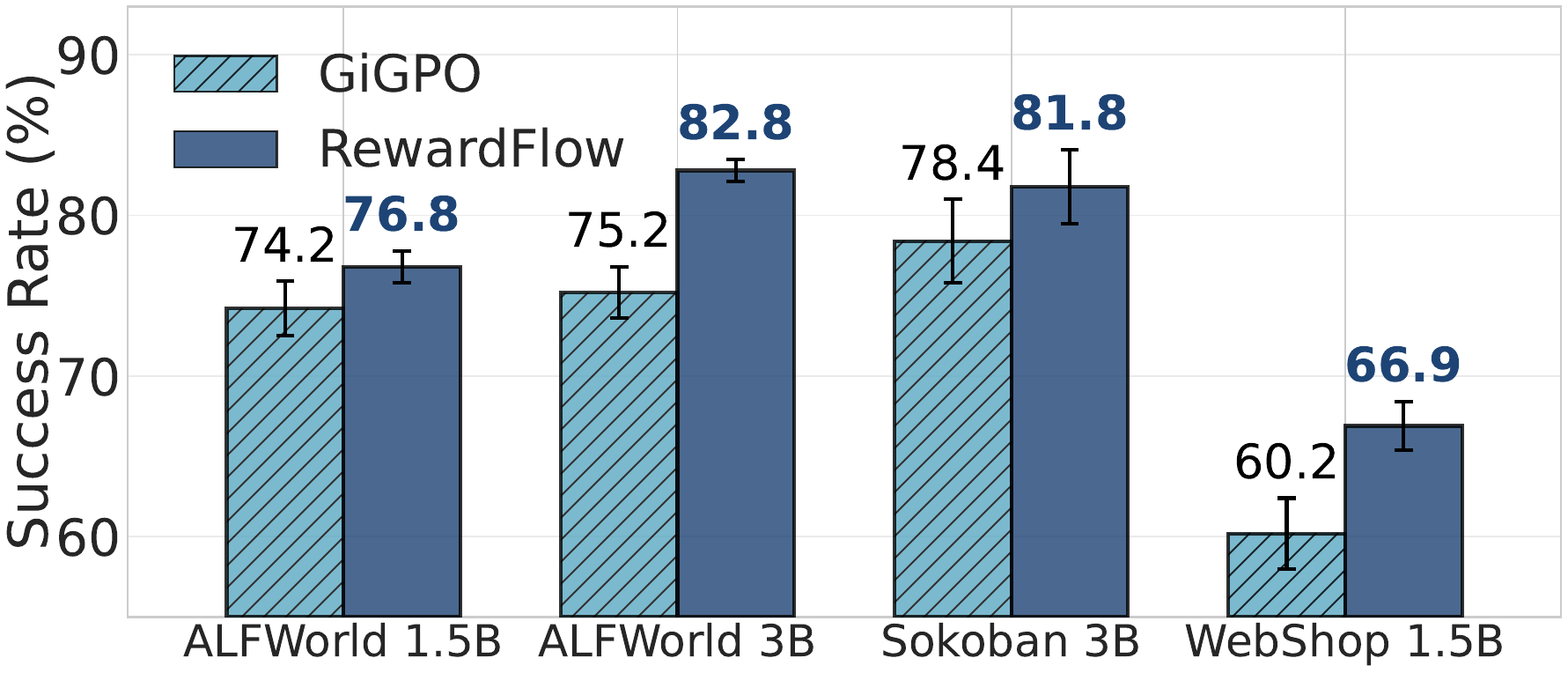}
        \vspace{-18pt}
        \captionof{figure}{Comparison under larger budget with GiGPO and \textsc{RewardFlow}.}
        \label{fig:gigpo-matched}
    \end{minipage}
    \begin{minipage}[t]{\textwidth}
        \centering
        
        \includegraphics[width=\linewidth]{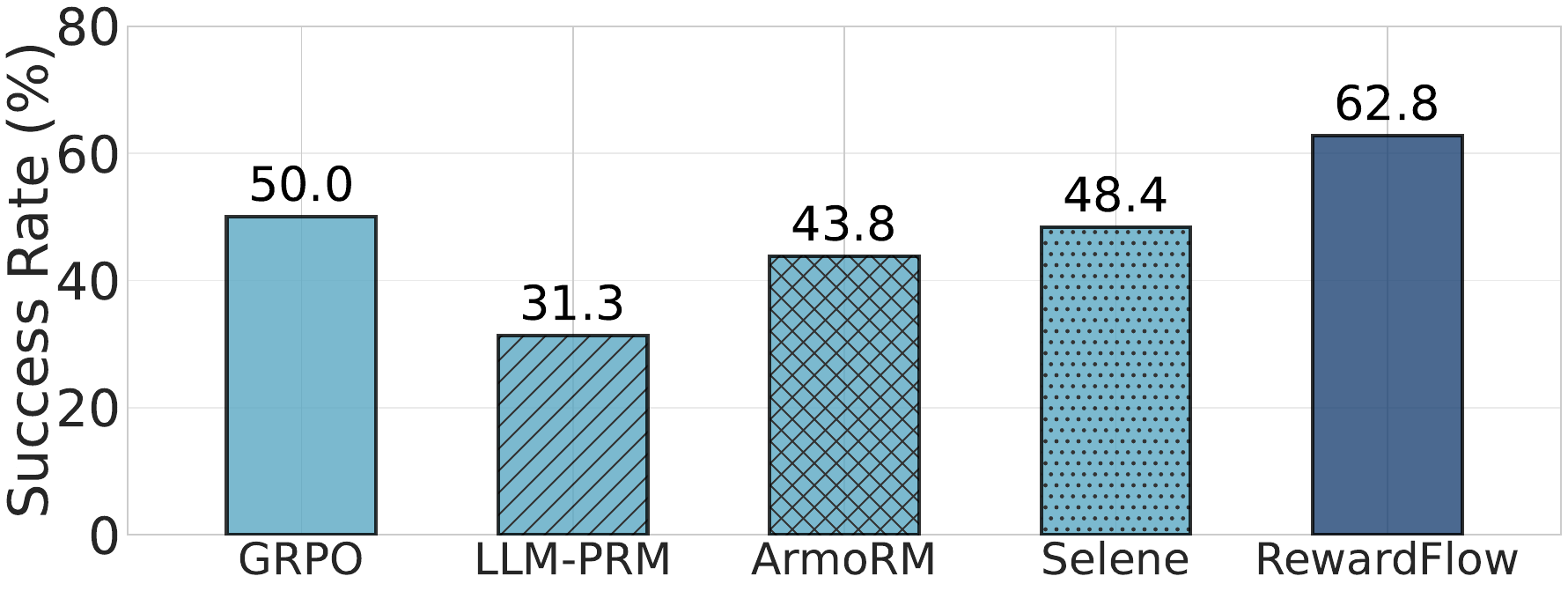}
        \vspace{-18pt}
        \captionof{figure}{Performance comparison with PRM baselines on ALFWorld (Qwen2.5-1.5B).}
        \label{fig:prm-comparison}
    \end{minipage}

    \begin{minipage}{\textwidth}
        \centering
        \includegraphics[width=\linewidth]{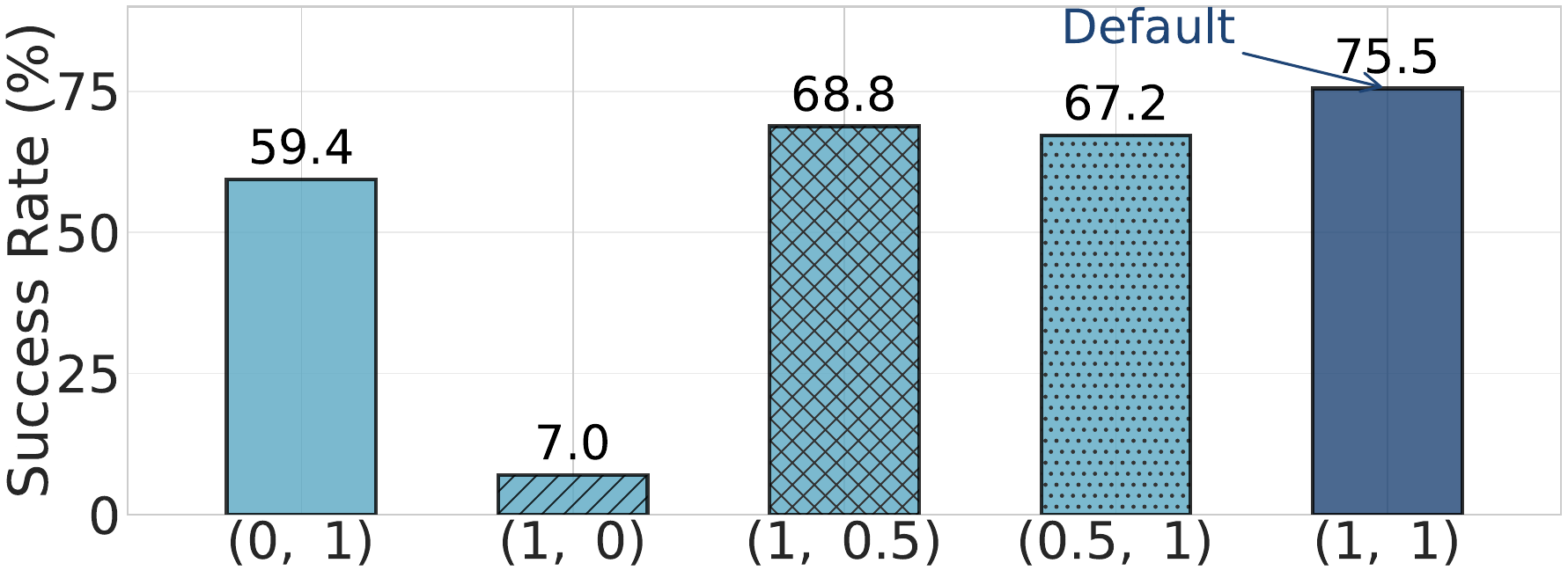}
        \vspace{-18pt}
        \captionof{figure}{Ablation on advantage weights $(\alpha_{\text{action}}, \alpha_{\text{traj}})$ on ALFWorld (Qwen2.5-3B).}
        \label{fig:alpha-ablation}
        
    \end{minipage}

\end{minipage}
\vspace{-12pt}
\end{figure}

\begin{figure*}[t]
    \centering
    \begin{minipage}{0.32\textwidth}
        \centering
        \includegraphics[width=\linewidth]{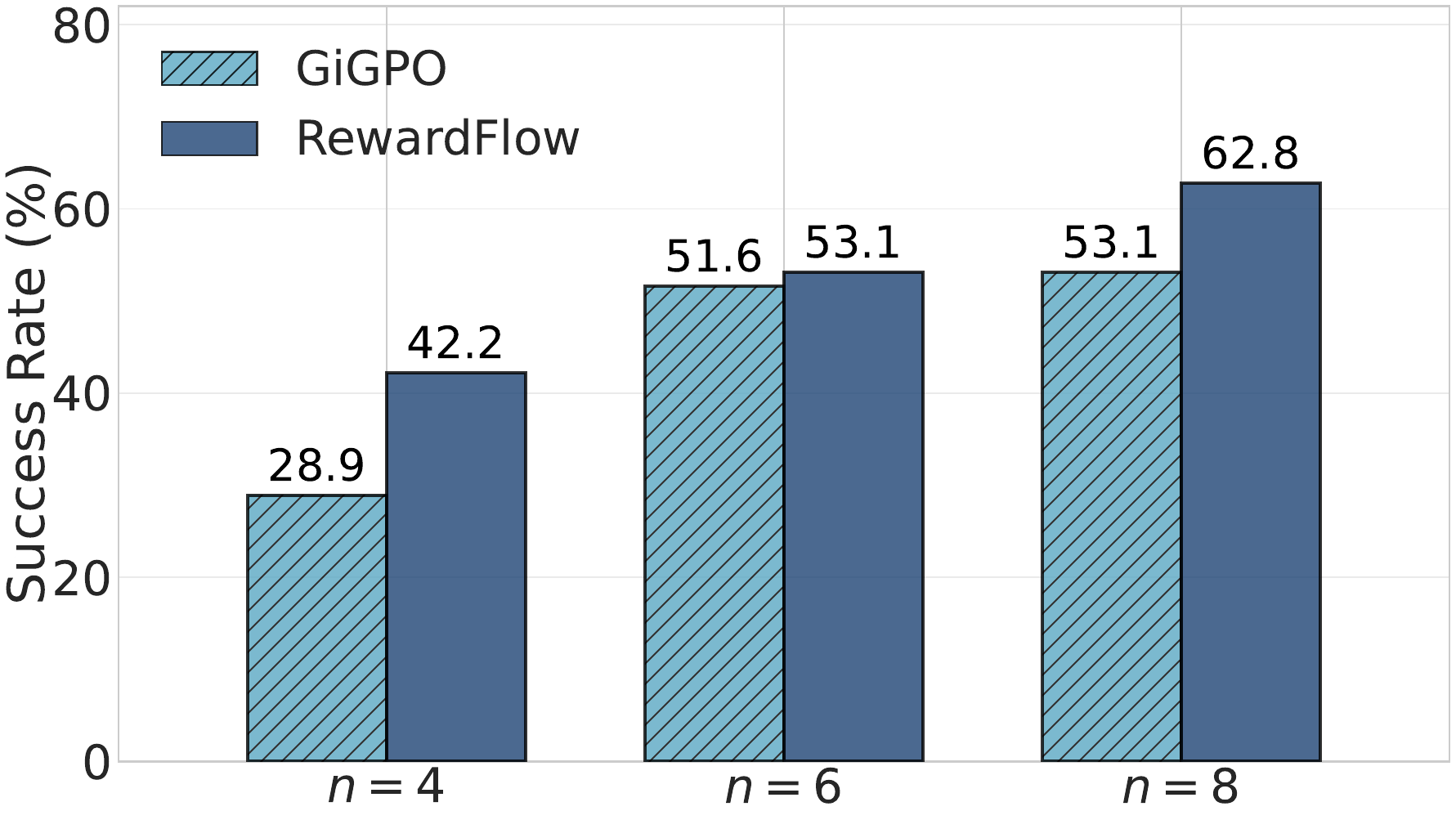}
        \vspace{-12pt}
        \captionof{figure}{Comparison of GiGPO and \textsc{RewardFlow} on robustness to different rollout budget from 4 to 8 on ALFWorld using Qwen2.5-1.5B.}
        \label{fig:exploration-robustness}
    \end{minipage}
    \hfill
    \begin{minipage}{0.32\textwidth}
        \centering
        \includegraphics[width=\linewidth]{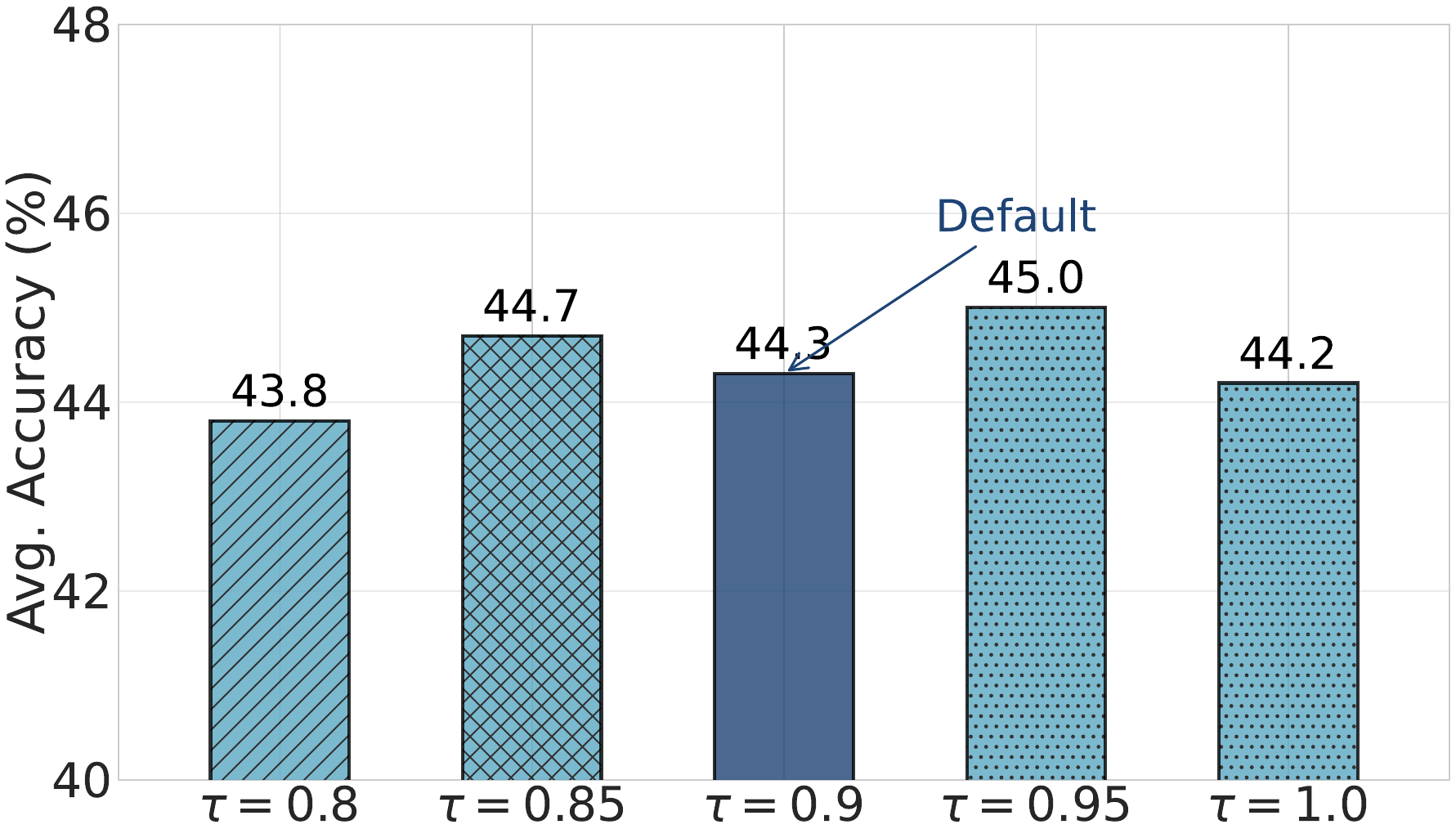}
        \vspace{-12pt}
        \captionof{figure}{Sensitivity to cosine-similarity threshold $\tau$ on DeepResearch (Qwen2.5-3B). Average accuracy (\%) varies by only $1.2$\% across $\tau\in[0.80,1.00]$.}
        \label{fig:threshold-sensitivity}
    \end{minipage}
    \hfill
    \begin{minipage}{0.32\textwidth}
        \centering
        \includegraphics[width=\linewidth]{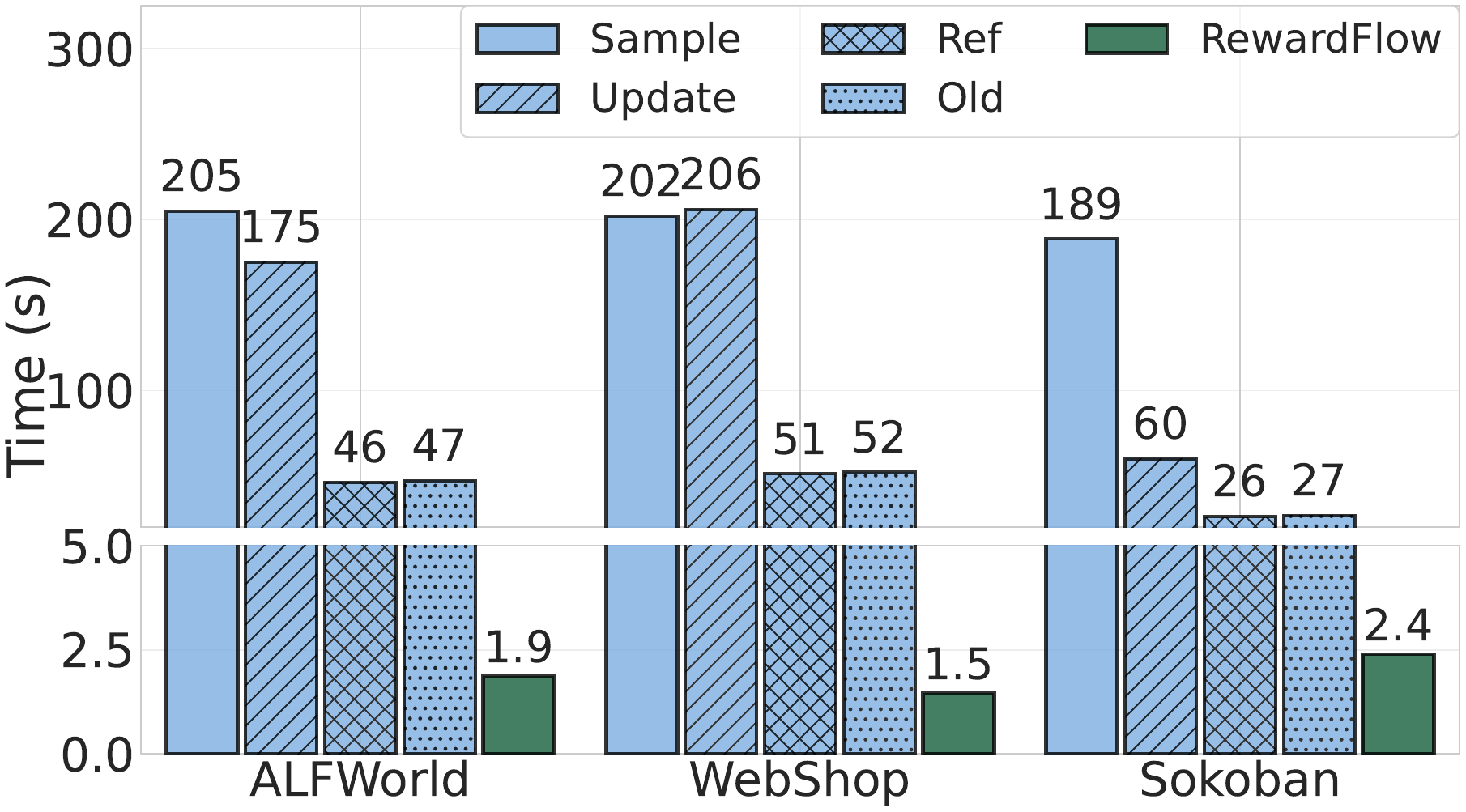}
        \vspace{-12pt}
        \captionof{figure}{Time breakdown per training step. Blue: shared RL stages. Green: \textsc{RewardFlow}-specific stages (graph construction + reward shaping).}
        \label{fig:efficiency}
    \end{minipage}
\vspace{-12pt}
\end{figure*}

Furthermore, Fig.~\ref{fig:gigpo-matched} shows that \textsc{RewardFlow} consistently outperforms GiGPO under larger budgets (e.g., more training steps and larger batch sizes), with further details provided in Appendix~\ref{app: gigpo-matched}.

\subsection{Comparison with Process Reward Models}
\label{ssec:prm-comparison}
Static per-step process reward scorers drift out of alignment as the policy updates,
risking reward hacking. \textsc{RewardFlow} avoids this failure mode by constructing its
potential graph from current on-policy rollouts each batch, ensuring reward signals
always reflect current policy behavior. Tab.~\ref{fig:prm-comparison} reports head-to-head
comparisons on ALFWorld (Qwen2.5-1.5B-Instruct) against four PRM-style baselines:
GRPO + LLM-as-PRM (Qwen2.5-7B-Instruct judge), GRPO + ArmoRM-8B, GRPO + Selene-8B, and the
no-PRM GRPO baseline. \textsc{RewardFlow} outperforms the best
PRM baseline (GRPO + Selene-8B, 48.4\%) by +14.4\% and the no-PRM GRPO baseline (50.0\%)
by +12.8\%. We attribute this gap to distribution shift in static scorers: as the policy evolves, a fixed PRM's scores become misaligned with the current policy's outputs. See Appendix~\ref{app: prm} for full setup
details.

\subsection{Robustness under Imperfect Scenarios}
\label{ssec:robustness}

We test the robustness of \textsc{RewardFlow} across three dimensions: (1) generalization to out-of-distribution (OOD) environments, (2) performance under limited exploration, and (3) insensitivity to the state normalization threshold.

\begin{itemize}[leftmargin=*]
\item \textbf{Out-of-distribution (OOD) environments.}
We evaluate OOD generalization on ALFWorld and DeepResearch (see Tabs.~\ref{tab:deepresearch} and~\ref{tab:ood}).
\textsc{RewardFlow} maintains strong performance across all model scales, with the 1.5B and 3B models dropping by only 2.4\% and 3.9\% in ALFWorld, while achieving competitive unseen-domain results on DeepResearch, suggesting genuine generalization rather than task-specific overfitting.

\item \textbf{Robustness to limited exploration.}
\textsc{RewardFlow} builds state graphs solely from actual environment interactions, so sparser exploration (fewer rollouts) yields smaller but still accurate graphs.
As shown in Fig.~\ref{fig:exploration-robustness}, \textsc{RewardFlow} consistently outperforms GiGPO at every sampling budget, with a $+13.3\%$ gap at only 4 rollouts, maintaining a substantial advantage as the budget grows.

\item \textbf{Robustness to state normalization threshold.}
A key design choice in \textsc{RewardFlow} is the cosine-similarity threshold $\tau$ used to cluster semantically equivalent states in DeepResearch.
As shown in Fig.~\ref{fig:threshold-sensitivity}, \textsc{RewardFlow} is insensitive to this choice: varying $\tau$ across the full range $[0.80, 1.00]$ changes average accuracy by only $1.2$ percentage points, confirming that no task-specific threshold tuning is required.
\end{itemize}

\subsection{Computational Efficiency}
\label{ssec: efficiency}

We compare the runtime of its core (state graph construction and backward reward propagation) against other major components of the RL training loop with Qwen2.5-3B. Figure~\ref{fig:efficiency} shows that graph construction and process reward shaping incur at most 2.39 seconds across the three evaluated environments, a highly efficient overhead relative to the substantially more expensive phases of trajectory rollout collection and policy gradient updates.
Further details are reported in Appendix~\ref{app: training efficiency}.

\subsection{Ablation Study}

We ablate the action-level ($\alpha_{\text{action}}$) and trajectory-level ($\alpha_{\text{traj}}$) advantage weights in \textsc{RewardFlow}. As shown in Fig.~\ref{fig:alpha-ablation}, action-level advantage alone collapses to $7.0\%$ on ALFWorld (3B): per-step potentials without an episode-level signal lack a consistent return-defining objective. The default $(\alpha_{\text{action}}, \alpha_{\text{traj}})=(1,1)$ achieves $75.5\%$, a $+16.1\%$ gain over trajectory-level advantage alone ($59.4\%$), and requires no task-specific tuning. For full component ablation results (state normalization and noisy transition pruning), see Appendix~\ref{app: normalization ablation}.

\section{Conclusions}
\label{sec:conclusion}

We introduce \textsc{RewardFlow}, a process reward modeling framework that estimates per-state contributions in agentic trajectories. It uncovers topological relationships among states and applies graph-based propagation to distribute terminal rewards, producing principled process evaluation.
Combining action-level rewards and trajectory-level supervision yields substantial performance gains across both text-based and visual domains. \textsc{RewardFlow} exhibits strong robustness to imperfect settings and high training efficiency. A current limitation is its reliance on informative states for graph construction, which poses challenges in settings where state representations lack problem-solving information. Future work could address this by constructing state graphs from model-generated reasoning, although structured graphs modeling from free-form text remains unexplored. Advancing this direction will be essential for scaling \textsc{RewardFlow} to complex, open-ended agentic environments.
\bibliography{ref}
\bibliographystyle{ref-style}

\clearpage
\onecolumn
\appendix 

\etocdepthtag.toc{mtappendix}
\etocsettagdepth{mtchapter}{none}
\etocsettagdepth{mtappendix}{subsection}
\renewcommand{\contentsname}{Appendix}
\tableofcontents 
\clearpage
\section{Related Work}
\label{sec:related_work}

\paragraph{RL for LLM Reasoning.}
Reinforcement learning (RL)~\citep{kaelbling1996reinforcement} optimizes policies to maximize rewards and has recently enhanced LLM reasoning~\citep{xu2025towards}. Approaches fall into two paradigms: off-policy methods (\eg, DPO~\citep{rafailov2023direct}, often combined with MCTS~\citep{zhang2024restmcts, xie2024monte}) require costly pairwise preferences, though newer variants reduce this burden via single-instance or stepwise feedback (KTO~\citep{zhou2024solving}, Step-KTO~\citep{lin2025step}) or by removing the reference model (SimPO~\citep{meng2024simpo}); on-policy methods (PPO~\citep{schulman2017proximal}, Reinforce++~\citep{hu2025reinforce++}) avoid preference labels but rely on auxiliary reward models that increase compute and risk reward hacking~\citep{guo2025deepseek}. GRPO~\citep{shao2024deepseekmath} and RLOO~\citep{ahmadian2024back, kool2019buy} address this via group-based sampling and relative advantage estimation, inspiring extensions to boost the training effectiveness on complex reasoning scenarios. For example, DAPO~\citep{yu2025dapo} and Dr.~GRPO~\citep{liu2025understanding} identify the limitations of GRPO in reasoning enhancement and propose refined versions. CPPO~\citep{lin2025cppo} and SPO~\citep{xu2026singlestream} focus on the learning efficiency and reduce training cost by pruning low-advantage samples and eliminating group-based baseline estimation. Meanwhile, TTRL~\citep{zuo2026ttrl}, INTUITOR~\citep{zhao2026learning}, EMPO~\citep{zhang2025right}, and Co-rewarding~\citep{zhang2026corewarding} explore the latent potential of LLMs and elicit the reasoning capability without the need for labeled data. OPSD~\citep{zhao2026self}, SDPO~\citep{hubotter2026reinforcement}, RLTF~\citep{song2026expanding}, and RLAD~\citep{zhang2026reinforcement} involve natural-language feedback to make models learn improved feedback-conditioned reasoning, thus enhancing the sampling efficiency in GRPO. 

\paragraph{RL for Agentic Scenarios.}
Recent advancements in RL have extended its application to train LLMs in agentic scenarios, where agents interact with tools and environments over multiple steps. Initial work focuses on environment integration: Search-R1~\citep{jin2025search}, R1-Searcher~\citep{song2025r1}, and DeepResearcher~\citep{zheng2025deepresearcher} apply group-based RL to retrieval-augmented settings, while ReTool~\citep{feng2025retool}, ToRL~\citep{li2025torl}, and Verl-Tool~\citep{jiang2025verltool} do so for code-execution environments. Multi-turn interaction exposes two structural challenges absent from single-turn reasoning: long rollouts accumulate degenerate steps, and sparse outcome rewards make credit assignment across turns difficult. To address the first challenge, researchers focus on improving rollout quality through step filtering, guided exploration, and action selection: ARPO~\citep{dong2025agentic} encourages exploration via entropy-guided rollouts, SimpleTIR~\citep{xue2025simpletir} filters ``void turns’’ that destabilize training, TSR~\citep{djuhera2026tsr} selects higher-quality actions via lightweight tree search at training time, and OpenClaw-RL~\citep{wang2026openclaw} extracts hindsight hints through asynchronous self-distillation. To address the second challenge, researchers focus on improving credit assignment by attributing outcome rewards to individual steps or reducing observation noise: SPA-RL~\citep{wang2025spa} decomposes sparse final rewards into per-step attributions, GiGPO~\citep{feng2025group}, and HGPO~\citep{he2026hierarchyofgroups} identifies long-horizon state replication to derive process advantage estimation, and Agent-Omit~\citep{ning2026agent} reduces noise by learning to skip redundant observations.

\paragraph{Process Reward Estimation for LLM Reasoning.}
Multi-step LLM reasoning involves thousands of tokens across many intermediate steps, making outcome-level rewards prone to a credit assignment problem; this motivates process reward models (PRMs) that provide step-level supervision. \citet{lightman2023let} introduces the first systematic PRM for mathematical reasoning, relying on human-annotated step correctness; Math-Shepherd~\citep{wang2024math}, and OmegaPRM~\citep{luo2024improve} scale this by synthesizing labels via Monte Carlo rollouts, removing the annotation bottleneck. Further, PRIME~\citep{cui2025process} eliminates annotation by deriving implicit step-level rewards from reference model token probabilities. Extending PRMs to agentic settings introduces further challenges, as agent steps lack clear correctness signals and must instead measure progress toward a distant goal. For instance, AgentPRM~\citep{xi2026agentprm} models step-level credit via joint estimates of goal reachability and step progress; OPRL~\citep{liu2025agentic} derives implicit step-level rewards from a trajectory-level DPO objective without additional rollouts; \citet{hu2025guiding} applies process rewards at inference time to guide VLM agents in GUI navigation, enabling per-step action optimization without trajectory-level delays; and Web-Shepherd~\citep{chae2025web} advances step-level PRM supervision to web navigation by constructing a large-scale preference dataset of 40K step-level pairs to train a lightweight verifier.

\section{Further Method Details}
\label{app:further_details}

\subsection{State Preprocessing}
\label{app: state preprocessing}

The state preprocessing in RewardFlow effectively handles stochasticity and ambiguity \textbf{by aggregating semantically equivalent states using embeddings or enriching the information of states.}
Here, we outline the specific challenges posed by stochasticity and ambiguity in the ALFWorld and DeepResearch environments, followed by the targeted state-preprocessing strategies we employed (described in Section 3) and applied consistently in both settings.

For ALFWorld: 
\begin{itemize}[leftmargin=*]
        \item Challenge: Although raw text observations are usually distinct, ALFWorld is partially observable and often omits critical object property changes. For example, after the agent cleans an apple, the observation may still read ``You are carrying an apple'' without indicating that it is now cleaned, creating ambiguous states that can degrade the quality of the state graph. 
        \item Solution: We enrich the raw observation by automatically detecting transformative actions (e.g., clean, heat, cool) and appending the resulting property changes to the text (e.g., “You are carrying an apple [cleaned]”). This ensures that states before and after such actions are distinguishable, yielding accurate node representations and reliable aggregation.
\end{itemize}

For DeepResearch:
\begin{itemize}[leftmargin=*]
    \item Challenge: DeepResearch is inherently highly stochastic and ambiguous. Semantically similar search queries frequently return substantially different document sets, causing (1) high stochasticity (from similar actions to divergent raw states) and (2) ambiguity (functionally equivalent research progress represented by distinct observations). This leads to severe graph fragmentation if raw states are used directly.
    \item Solution: We perform embedding-based node aggregation using a sentence transformer. States $s$ and $s'$ are merged into a single super-node if their cosine similarity exceeds a threshold (default 0.9). This eliminates near-duplicate nodes, mitigates fragmentation from query paraphrasing, preserves meaningful transitions, and produces a compact, semantically coherent graph that supports stable reward propagation via BFS.
\end{itemize}

$f(s) =
\begin{cases}
\text{cluster representative}(s) & \text{if } s \text{ is semantically equivalent to others}, \\
s \oplus \text{critical features} & \text{if } s \text{ is ambiguous}, \\
s & \text{otherwise}.
\end{cases}
\label{eq:state_preprocessing}$

\label{app:state_preprocessing}

\paragraph{$f(s)$ as a modular engineering adaptation.}
The state preprocessing function $f(\cdot)$ is a modular engineering adaptation aligned with
standard RL practice, analogous to frame stacking in Atari. $f(\cdot)$
varies per benchmark because benchmarks have different observation-aliasing properties: text
tasks expose object identity through inventory text, whereas visual benchmarks expose it
through pixel layout. The downstream algorithm --- graph construction, BFS reward propagation,
and the policy update --- is \emph{identical} across all environments; only the per-benchmark
observation hash $f(\cdot)$ changes. Indeed, Sokoban requires no normalization at all
($f(s) = s$), confirming that $f(\cdot)$ is invoked only where the raw observation suffers
ambiguity that would otherwise corrupt the state graph.

Invalid action pruning serves a parallel, environment-specific role. In ALFWorld, invalid
actions lead to predefined failure states (the environment returns ``Nothing happens'');
pruning these transitions removes misleading nodes from the graph that would otherwise
receive spurious reward propagations. In WebShop and Sokoban, invalid actions instead
produce self-loops (the agent's observation does not change); BFS naturally bypasses
self-loops without an explicit filtering step. Both choices are localized data-cleaning
adaptations that do not alter the underlying \textsc{RewardFlow} algorithm.

\subsection{Invalid Actions Filtering}
\label{app: invalid action filtering}

Invalid actions arise from the LLM policy's outputs during interaction with the environment, but they are distinct in nature. Invalid actions include cases where (1) the policy model outputs responses from which no valid action can be extracted (e.g., malformed or nonsensical text that fails parsing), or (2) the policy outputs an action that is not among the admissible actions in the current environment state (e.g., attempting an unavailable command). These definitions align with the VALID predicate, which checks for successful execution and state change.

Filtering invalid actions ensures a cleaner state graph, improving the accuracy of reward propagation. In practice, invalid actions can lead to erroneous states that do not exist in the true agentic environment; for instance, in ALFWorld, they trigger a fallback state like "Nothing happens," which introduces noise and distorts the graph structure by creating inaccurate nodes or edges. 

Our ablation study in Tab.~\ref{tab:alfworld-ablation} confirms that filtering invalid actions improves overall performance by reducing noise in credit assignment.

Given invalid actions, invalid states occur when the agent executes an invalid action, prompting the environment to return feedback that does not reflect a genuine state update. In many agentic environments, such as ALFWorld, the agent may attempt actions that are not feasible given the current context, resulting in responses like "Nothing happens." This feedback is erroneously interpreted as a new state during rollout collection, even though it carries no meaningful environmental information. For instance, consider the following exemplar trajectory from ALFWorld:

\begin{tcolorbox}[colback=grey!10, colupper=black, colframe=grey!40!black, title=An example of invalid actions occured in ALFWorld, fontupper=\small, width=\textwidth]

STEP: 1\\
STATE: -= Welcome to TextWorld, ALFRED! =-\\

You are in the middle of a room. Looking quickly around you, you see a cabinet 10, a cabinet 9, a cabinet 8, a cabinet 7, a cabinet 6, a cabinet 5, a cabinet 4, a cabinet 3, a cabinet 2, a cabinet 1, a coffeemachine 1, a countertop 1, a diningtable 1, a drawer 2, a drawer 1, a fridge 1, a garbagecan 1, a microwave 1, a sinkbasin 1, a stoveburner 4, a stoveburner 3, a stoveburner 2, a stoveburner 1, and a toaster 1.\\

Your task is to: put a cool apple in microwave.\\
ACTION: go to cabinet 1\\
---\\
STEP: 2\\
STATE: You arrive at cabinet 1. The cabinet 1 is closed.\\
ACTION: go to cabinet 1 [INVALID ACTION]\\
---\\
STEP: 3\\
STATE: Nothing happens. [INVALID STATE]\\
ACTION: open cabinet 1\\
---\\
STEP: 4\\
STATE: You open the cabinet 1. The cabinet 1 is open. In it, you see nothing.\\

\end{tcolorbox}

If left unfiltered, this string “Nothing happens” would be treated as an ordinary node in the state graph, creating noisy nodes and edges that distort the propagated rewards. We remove these states with this strategy to ensure accurate graph construction and reward propagation. The ablation study of Tab.~\ref{tab:alfworld-ablation} supports the efficacy of this strategy.

\subsection{Prompt Templates}
\label{app:prompts}
In this section, we provide the prompt used in RL training to guide the model to do basic agentic reasoning. The prompt with no history denotes the initial prompt, while the prompt with history denotes the prompt in intermediate states.

\begin{tcolorbox}[colback=grey!10, colupper=black, colframe=grey!40!black, title=Prompt of ALFWorld without history, fontupper=\small, width=\textwidth]

You are an expert agent operating in the ALFRED Embodied Environment.\\
Your current observation is: \{current\_observation\}\\
Your admissible actions of the current situation are: [\{admissible\_actions\}].
\\\\
Now it's your turn to take an action.\\
You should first reason step-by-step about the current situation. This reasoning process MUST be enclosed within  $<$think$>$  $<$/think$>$  tags. \\
Once you've finished your reasoning, you should choose an admissible action for current step and present it within  $<$action$>$  $<$/action$>$  tags.

\end{tcolorbox}

\begin{tcolorbox}[colback=grey!10, colupper=black, colframe=grey!40!black, title=Prompt of ALFWorld with history, fontupper=\small, width=\textwidth]

You are an expert agent operating in the ALFRED Embodied Environment. Your task is
to: \{task\_description\}
\\
Prior to this step, you have already taken \{step\_count\} step(s). Below are the most
recent \{history\_length\} observations and the corresponding actions you took:
\{action\_history\}
\\
You are now at step \{current\_step\} and your current observation is:
\{current\_observation\}
\\
Your admissible actions of the current situation are: [\{admissible\_actions\}].
\\\\
Now it’s your turn to take an action.\\
You should first reason step-by-step about the current situation. This reasoning
process MUST be enclosed within $<$think$>$ $<$/think$>$ tags.\\
Once you’ve finished your reasoning, you should choose an admissible action for
current step and present it within $<$action$>$ $<$/action$>$ tags.

\end{tcolorbox}

\begin{tcolorbox}[colback=grey!10, colupper=black, colframe=grey!40!black, title=Prompt of WebShop without history, fontupper=\small, width=\textwidth]

You are an expert autonomous agent operating in the WebShop e‑commerce environment. \\
Your task is to: \{task\_description\}.\\
Your current observation is: \{current\_observation\}.\\
Your admissible actions of the current situation are: \\
\verb|[|\\
\{available\_actions\}\\
\verb|]|. 
\\\\
Now it's your turn to take one action for the current step.\\
You should first reason step-by-step about the current situation, then think carefully which admissible action best advances the shopping goal. This reasoning process MUST be enclosed within $<$think$>$ $<$/think$>$ tags. \\
Once you've finished your reasoning, you should choose an admissible action for current step and present it within $<$action$>$ $<$/action$>$ tags.

\end{tcolorbox}

\begin{tcolorbox}[colback=grey!10, colupper=black, colframe=grey!40!black, title=Prompt of WebShop with history, fontupper=\small, width=\textwidth]

You are an expert autonomous agent operating in the WebShop e‑commerce environment.\\
Your task is to: \{task\_description\}.\\
Prior to this step, you have already taken \{step\_count\} step(s). Below are the most recent \{history\_length\} observations and the corresponding actions you took: \{action\_history\}\\
You are now at step \{current\_step\} and your current observation is: \{current\_observation\}.\\
Your admissible actions of the current situation are: 
[
\{available\_actions\}
].
\\\\
Now it's your turn to take one action for the current step.\\
You should first reason step-by-step about the current situation, then think carefully which admissible action best advances the shopping goal. This reasoning process MUST be enclosed within $<$think$>$ $<$/think$>$ tags.\\ 
Once you've finished your reasoning, you should choose an admissible action for current step and present it within $<$action$>$ $<$/action$>$ tags.

\end{tcolorbox}

\begin{tcolorbox}[colback=grey!10, colupper=black, colframe=grey!40!black, title=Prompt of Sokoban, fontupper=\small, width=\textwidth]

You are an expert agent operating in the Sokoban environment. Your goal is to push all the boxes onto the target spots. Once all boxes are on the targets, you win!\\

\# Rules\\
You can only push boxes. You can't pull them, so plan ahead to avoid getting stuck.\\
You can't walk through or push boxes into walls.\\
To avoid traps, do not push boxes into corners or against walls where they can't be moved again.\\

\# Visual Elements in the Image:\\
Character: A small, green alien-like figure with two antennae and black eyes. It represents you.\\
Box: A yellow crate marked with an orange "X" across its front. It is the box you need to push.\\
Target: A black tile outlined in red, with a small red diamond shape in the center. It marks the destination where a box should be pushed.\\

\# Current Step\\
Your current observation is shown in the image: $<$image$>$\\
Your admissible actions are ["up", "down", "left", "right"].\\

Now it's your turn to make a move (choose ONE action only for the current step).\\
You should first reason step-by-step about the current situation — observe the positions of boxes and targets, plan a path to push a box toward a target, and avoid traps like corners or walls. This reasoning process MUST be enclosed within $<$think$>$ $<$/think$>$ tags. \\
Once you've finished your reasoning, you should choose an admissible action for current step and present it within $<$action$>$ $<$/action$>$ tags.

\end{tcolorbox}

\begin{tcolorbox}[colback=grey!10, colupper=black, colframe=grey!40!black, title=Prompt of DeepResearch without history, fontupper=\small, width=\textwidth]
You are an expert agent tasked with answering the given question step-by-step.\\
Your question: \{task\_description\}\\

Now it's your turn to respond for the current step.\\
You should first conduct reasoning process. This process MUST be enclosed within $<$think$>$ $<$/think$>$ tags. \\
After completing your reasoning, choose only one of the following actions (do not perform both):\\
(1) If you find you lack some knowledge, you can call a search engine to get more external information using format: $<$search$>$ your query $<$/search$>$.\\
(2) If you have enough knowledge to answer the question confidently, provide your final answer within $<$answer$>$ $<$/answer$>$ tags, without detailed illustrations. For example, $<$answer$>$Beijing$<$/answer$>$.
\end{tcolorbox}

\begin{tcolorbox}[colback=grey!10, colupper=black, colframe=grey!40!black, title=Prompt of DeepResearch with history, fontupper=\small, width=\textwidth]   

You are an expert agent tasked with answering the given question step-by-step.\\
Your question: \{task\_description\}\\

Prior to this step, you have already taken \{step\_count\} step(s). Below is the interaction history where $<$search$>$ $<$/search$>$ wrapped your past search queries and $<$information$>$ $<$/information$>$ wrapped the corresponding search results returned by the external search engine. History:\\
\{memory\_context\}\\

Now it's your turn to respond for the current step.\\
You should first conduct reasoning process. This process MUST be enclosed within $<$think$>$ $<$/think$>$ tags. \\
After completing your reasoning, choose only one of the following actions (do not perform both):\\
(1) If you find you lack some knowledge, you can call a search engine to get more external information using format: $<$search$>$ your query $<$/search$>$.\\
(2) If you have enough knowledge to answer the question confidently, provide your final answer within $<$answer$>$ $<$/answer$>$ tags, without detailed illustrations. For example, $<$answer$>$Beijing$<$/answer$>$.

\end{tcolorbox}

\subsection{Proofs}
\label{app:proofs}

This section contains the full proofs for Prop.~\ref{prop:value_approximation}, Cor.~\ref{cor:value_progress}, and Prop.~\ref{prop:ordering_invariance} in the main text.

\subsubsection{Proof of Proposition~\ref{prop:value_approximation} (Value Approximation)}

\begin{proof}
We prove that $R(s) = \gamma^{d_{\mathcal{G}}(s)}$ approximates $V^*(s)$ with error $|R(s) - V^*(s)| = \gamma^{d^*(s)} - \gamma^{d_{\mathcal{G}}(s)} \geq 0$, which vanishes as $G \to \infty$.

By construction (Rem.~\ref{rem:partial_coverage}), $\mathcal{G}_{\text{state}}^{(G)}$ is a subgraph of $\mathcal{G}_{\text{env}}^{\text{reach}}$ for any finite $G$: it can only contain transitions that have been genuinely observed. Therefore, any path of length $k$ in $\mathcal{G}_{\text{state}}$ from $s$ to a success node is also a valid path in the environment, which implies:
\[
d_{\mathcal{G}}(s) \geq d^*(s).
\]
Since $\gamma \in (0,1)$, the function $\gamma^x$ is strictly decreasing in $x$, so:
\[
R(s) = \gamma^{d_{\mathcal{G}}(s)} \leq \gamma^{d^*(s)} = V^*(s).
\]
The approximation error is:
\[
|R(s) - V^*(s)| = V^*(s) - R(s) = \gamma^{d^*(s)} - \gamma^{d_{\mathcal{G}}(s)} \geq 0.
\]
As $G \to \infty$, coverage expands (Rem.~\ref{rem:partial_coverage}) and $d_{\mathcal{G}}(s) \to d^*(s)$ (the BFS distance in the growing graph approaches the true shortest-path distance). Therefore $R(s) \to V^*(s)$.
\end{proof}

\subsubsection{Proof of Proposition~\ref{prop:ordering_invariance} (Trajectory-Level Ordering Invariance)}

\begin{proof}
Since $\tilde{r}_t = R(s_{t+1}) - R(s_t)$, the cumulative shaping over any trajectory telescopes:
\[
  \sum_t \tilde{r}_t = R(s_T) - R(s_0).
\]
Both trajectories share the same canonical initial state $\hat{s}_0$, so $R(\hat{s}_0)$ cancels:
\[
  \sum_t \tilde{r}_t^{+} - \sum_t \tilde{r}_t^{-}
  = \bigl(R(s_T^+) - R(\hat{s}_0)\bigr) - \bigl(R(s_T^-) - R(\hat{s}_0)\bigr)
  = R(s_T^+) - R(s_T^-).
\]
This holds for any state graph, regardless of coverage or approximation quality.
In the standard binary setting, $R(s_T^+) = 1$ (success, $d=0$) and $R(s_T^-) = 0$
(failure, no path to success), so the gap equals 1.
\end{proof}

\subsubsection{Proof of Corollary~\ref{cor:value_progress} (Value Progress Approximation)}

\begin{proof}
Since $R(s) = V^*(s) - \delta(s)$ by definition of $\delta$, we expand directly:
\[
\tilde{r}(s,a,s') = R(s') - R(s)
= \bigl(V^*(s') - \delta(s')\bigr) - \bigl(V^*(s) - \delta(s)\bigr)
= \bigl(V^*(s') - V^*(s)\bigr) - \bigl(\delta(s') - \delta(s)\bigr).
\]
By Prop.~\ref{prop:value_approximation}, both $\delta(s') \to 0$ and $\delta(s) \to 0$
as coverage grows, so $\delta(s') - \delta(s) \to 0$ and $\tilde{r} \to V^*(s') - V^*(s)$.

Since $\tilde{r} = R(s') - R(s) = \gamma^{d_{\mathcal{G}}(s')} - \gamma^{d_{\mathcal{G}}(s)}$
and $\gamma^x$ is strictly decreasing, the sign of $\tilde{r}$ is determined solely by
$d_{\mathcal{G}}(s')$ versus $d_{\mathcal{G}}(s)$, independently of $\delta$.
Finally, for any trajectory $\tau = (s_0, \ldots, s_T)$, the cumulative shaping telescopes:
$\sum_t \tilde{r}_t = R(s_T) - R(s_0)$,
which is positive for any successful trajectory since $R(s_T) = 1 > R(s_0)$.
\end{proof}



\section{Further Discussion}
\label{app: further discussion}

\subsection{Relation to Potential-Based Reward Shaping and Reward Machines}
\label{app: pbrs}

The action-level component of \textsc{RewardFlow} (Eq.~\eqref{eq:reward_difference})
contains a finite-difference shaping term $F(s,s') = R(s') - R(s)$ that resembles the
potential-based reward shaping (PBRS) framework of \citet{ng1999policy}. We note that the
$\Phi(s') - \Phi(s)$ form of a scalar potential predates PBRS as a formal framework and
appears throughout earlier control and RL literature
\citep{porteus1975bounds, mataric1994reward, randlov1998learning}. Crucially,
\textsc{RewardFlow}'s contribution does not lie in the shaping form itself but \emph{upstream}:
automatically constructing $\Phi(\cdot) = R(\cdot)$ from on-policy rollouts without
task-semantic knowledge or predefined symbolic structure. The PBRS optimality guarantee
still holds because the cumulative sum of $F = R(s') - R(s)$ telescopes to
$R(s_T) - R(s_0)$, making the shaping policy-invariant on every trajectory.

Reward Machines~\citep{icarte2018using, icarte2022reward} and the FSA induction work of
\citet{furelos2021induction} likewise structure rewards using finite automata, but these
methods rely on either hand-specified or learned symbolic abstractions. \textsc{RewardFlow}
operates on raw text and visual observations directly and rebuilds its potential graph from
each batch's rollouts, sidestepping the need for either a predefined symbolic vocabulary or
per-task FSA design. Tab.~\ref{tab:pbrs-comparison} contrasts \textsc{RewardFlow} with these
reward-shaping paradigms along five orthogonal axes.

\begin{table}[h]
\centering
\caption{Comparison of \textsc{RewardFlow} with prior reward-shaping paradigms. ``$\sim$100K token action space'' indicates whether the method scales to LLM-agent action spaces; ``Rebuilt per batch'' indicates whether the shaping structure is reconstructed from current on-policy rollouts at every training step.}
\label{tab:pbrs-comparison}
\fontsize{8}{10}\selectfont
\setlength\tabcolsep{4pt}{
\begin{tabular}{lcccc}
\toprule
 & \textbf{PBRS} & \textbf{Reward Machines} & \textbf{Furelos-Blanco} & \textbf{\textsc{RewardFlow}} \\
\midrule
State representation         & Hand-crafted $\Phi$ & Symbolic states & Prop.\ symbols & Raw text/visual obs \\
Structure source             & Expert knowledge    & Human FSA       & Learned FSA    & On-policy rollouts \\
Task-semantic knowledge      & Yes                 & Yes             & Yes            & \textbf{None} \\
$\sim$100K token action space & \xmark             & \xmark          & \xmark         & \cmark \\
Rebuilt per batch            & N/A                 & \xmark          & \xmark         & \cmark \\
\bottomrule
\end{tabular}}
\vspace{-8pt}
\end{table}

The distinctions in Tab.~\ref{tab:pbrs-comparison} are not merely design choices: in LLM agentic settings, hand-crafted potentials and symbolic state abstractions are simply unavailable --- the action space spans $\sim$100K tokens and observations are unstructured text or visual inputs for which no predefined $\Phi$ or FSA exists. \textsc{RewardFlow} is the first method to make potential-based shaping operational in this regime by constructing $\Phi$ automatically from on-policy rollouts.

\subsection{Does RewardFlow Rely on the High-quality Exploration?}
\label{app: exploration}

\textbf{RewardFlow does not perform exhaustive exploration and scales to massive state/action spaces.} RewardFlow derives states, actions, and transitions directly from RL rollout sampling, and no exhaustive exploration is required. The induced graph is a compact subset of visited trajectories, and reward propagation operates solely on this subgraph. In large agentic environments, RewardFlow remains applicable as long as the LLM agent can sample at least one successful trajectory per batch. This avoids "exhaustive visitation" and ensures tractability, even in continuous or high-dimensional spaces, by clustering states via embeddings or hashing (as in our stochastic handling below) without assuming pure discreteness.

\textbf{The accuracy of the state graph that is critical for reward modeling is not obviously affected by the poor exploration.} RewardFlow ensures accurate state graph construction by building graphs solely from actual environment interactions (states and actions), as induced from LLM-generated rollouts. This guarantees that the graph is a faithful topological representation of visited trajectories, without hallucinations or invalid edges. As long as at least one successful trajectory is sampled in a group, propagation methods like BFS reliably assign state-wise rewards reflecting task-centric metrics, such as the minimum actions needed to reach success from each state (shortest path in the graph). Poor exploration (e.g., low diversity) merely results in a sparser but still accurate graph. modeling a subset of the environment without introducing errors or misguiding propagation. This contrasts with trajectory-level methods, where sparse rewards can dilute credit assignment across entire paths. Our filtering of invalid/self-examination actions further enhances reliability by removing noise, ensuring propagation focuses on meaningful transitions. The ablation study in Tab.~\ref{tab:exploration-ablation} supports this, where RewardFlow consistently outperforms GiGPO, even reducing rollouts number, indicating that RewardFlow is robust to poor exploration.

\textbf{We further measure the entropy of the policy during different training steps and investigate how the entropy affects graph density.} We measure policy entropy as $H(p) = -\sum_{i=1}^{|V|} p(x_i) \log p(x_i)$, where $|V|$ denotes the length of the policy’s vocabulary and $x_i$ denotes each token in the vocabulary. To study the interplay between entropy and graph structure, we evaluate three checkpoints of the Qwen-2.5-1.5B-Instruct model during RewardFlow training on ALFWorld: step 0 (initial supervised model), step 50, and step 100. For each checkpoint, we collect K = 8 rollouts per task using group sampling, compute the entropy, and record graph statistics along with validation performance.

\begin{table}[h!]
\centering
\caption{Training progress of Qwen-2.5-1.5B-Instruct on the target task. Entropy, invalid rates, graph size, and success rate across training steps.}
\label{tab:qwen-training-progress}
\fontsize{8}{8}\selectfont
\setlength\tabcolsep{6pt}
\begin{tabular}{cccccccc}
\toprule 
\textbf{Training Step} & \textbf{Entropy} & \makecell{\textbf{Invalid State} \\ \textbf{Rate (\%)}} & \makecell{\textbf{Invalid Action} \\ \textbf{Rate (\%)}} & \textbf{Avg. Node} & \textbf{Avg. Edge} & \makecell{\textbf{Success} \\ \textbf{Rate (\%)}} \\
\midrule
0 & 1.099 & 33.1 & 41.0 & 24.1 & 46.9 & 4.1 \\
50 & 0.555 & 0.3 & 0 & 33.1 & 55.9 & 44.5 \\
100 & 0.295 & 0.1 & 0 & 19.9 & 30.6 & 62.8 \\
\bottomrule
\end{tabular}
\vspace{-8pt}
\end{table}

At step 0, the policy has very high entropy (1.099) and generates many invalid actions, which are subsequently pruned. As a result, even though the policy is highly exploratory, a large fraction of transitions do not contribute valid edges, yielding only moderately dense graphs and poor reward signal propagation.
At step 50, entropy has decreased significantly (0.555), the model almost never produces invalid actions, and exploration remains sufficient to discover diverse valid paths. This produces the densest state graphs (33.1 nodes and 55.9 edges on average), maximizing the coverage of the refined graph and enabling the most effective BFS-based reward backpropagation, which explains the rapid performance improvement at this stage.
At step 100, entropy is lowest (0.295), and the policy has converged toward near-deterministic optimal behavior. It now focuses almost exclusively on high-reward paths to success states, resulting in the sparsest graphs (19.9 nodes and 30.6 edges on average). Despite having the fewest nodes and edges, these compact graphs are highly efficient: almost every observed transition reduces the shortest-hop distance to a success state, making reward shaping extremely precise and leading to the highest validation success rate (62.8\%).
These results reveal a clear and insightful trend: excessively high entropy harms graph density due to invalid actions, moderate entropy maximizes graph coverage and reward densification during learning, and low entropy ultimately yields minimal yet highly informative graphs that support optimal performance.

Furthermore, the state-graph visualization case studies in ALFWorld (Appendix~\ref{app: state graph cases}) demonstrate that, even with a single successful sampling rollout, the constructed state graph accurately captures the topological relationships among states, enabling precise reward assignment to intermediate states. With additional sampling rollouts, the graph modeling is further refined, yielding even more reliable rewards.

\subsection{Comparing RewardFlow with LLM-based PRM Methods}
\label{app: prm}

Existing PRMs and preference learning approaches are largely confined to static, single-turn reasoning tasks (e.g., math problems~\citep{lightman2023let}), where paired preferences or step-level scores can be obtained relatively easily. In contrast, long-horizon agentic tasks involve dynamic state transitions and environment feedback, making human annotation of intermediate preferences prohibitively costly (20-40 steps per episode in ALFWorld). Recent agentic RL methods (e.g., ARPO~\citep{dong2025agentic}, SimpleTIR~\citep{xue2025simpletir}) rely on trajectory- or group-level supervision and do not employ stepwise PRMs. GiGPO~\citep{feng2025group}, the closest related work, combines state- and trajectory-level advantages but ignores topological relationships across states. This gap underscores RewardFlow’s novelty: it delivers dense, topology-aware supervision without requiring human annotations or auxiliary reward models.                                                                                            
Static per-step scorers drift out of alignment as the policy updates, risking reward
hacking as the reward model fails to track shifting policy behavior. \textsc{RewardFlow}
circumvents this failure mode by constructing its potential graph from current on-policy
rollouts each batch, ensuring reward signals always reflect current policy behavior.
Fig.~\ref{fig:prm-comparison} reports head-to-head comparisons against four PRM-style
baselines (LLM-as-PRM with Qwen2.5-7B-Instruct, ArmoRM-8B, Selene-8B, plus the no-PRM
GRPO baseline) on ALFWorld with Qwen2.5-1.5B-Instruct: \textsc{RewardFlow} achieves
62.8\% success, exceeding the strongest PRM baseline (GRPO + Selene-8B at 48.4\%) by
+14.4\% and the no-PRM GRPO baseline (50.0\%) by +12.8\%.

\subsection{Sensitivity to $\alpha_{\text{action}}$ and $\alpha_{\text{traj}}$}
\label{app: alpha sensitivity}

\begin{table}[h]
\centering
\fontsize{8}{10}\selectfont
\caption{Ablation on the action-level and trajectory-level advantage weights $(\alpha_{\text{action}}, \alpha_{\text{traj}})$ on ALFWorld using Qwen2.5-3B-Instruct. Success rate (\%); higher is better. Default $(1,1)$ achieves the strongest performance and requires no tuning.}
\setlength\tabcolsep{8pt}
\begin{tabular}{ccc}
\toprule
$\alpha_{\text{action}}$ & $\alpha_{\text{traj}}$ & \textbf{ALFWorld 3B (\%)} \\
\midrule
0   & 1   & 59.4 \\
1   & 0   & 7.0  \\
1   & 0.5 & 68.8 \\
0.5 & 1   & 67.2 \\
1   & 1   & \textbf{75.5} \\
\bottomrule
\end{tabular}
\vspace{-8pt}
\end{table}

Action-level advantage alone collapses (7.0\%): step rewards require trajectory grounding to
stabilize policy updates, since per-step potentials without an episode-level signal lack a
consistent return-defining objective. Their combination at $(\alpha_{\text{action}}, \alpha_{\text{traj}})=(1,1)$ adds $+16.1\%$ over trajectory-level advantage alone (75.5\% vs.\ 59.4\%); the default $(1,1)$ requires no tuning across our experiments.

\subsection{The Potential to Use GNN for Reward Propagation}
\label{app: gnn propagation}

\textbf{GNN-based reward propagation methods demonstrate challenges in on-policy RL training.} In ALFWorld, Sokoban, WebShop, and DeepResearch, GNN-based propagation is less applicable due to practical constraints. GNNs require collecting and labeling additional graph data (e.g., node/action features and ground-truth rewards), followed by separate training phases, which introduce overhead in data quality dependence and generalization challenges on dynamically growing, large-scale graphs from online rollouts. Moreover, GNNs demand additional GPU resources and computation time for inference during each training step, reducing overall efficiency in on-policy RL. Heuristic methods like BFS/PageRank, by contrast, are lightweight, interpretable, and directly leverage the induced graph topology without external training, ensuring reliable propagation of verifiable terminal rewards. This aligns with RewardFlow's goal of simple, scalable credit assignment in sparse-reward, multi-turn agentic scenarios, where exploration is ongoing, and graphs evolve per batch.

\textbf{GNN-based propagation methods have the potential to be applied in semantically rich environments.} GNNs offer advantages in incorporating semantic information from node (state) and edge (action) embeddings, potentially yielding more nuanced reward propagation strategies that account for contextual similarities beyond pure topology. This could be particularly beneficial in fixed-task environments with abundant offline data, such as: (1) video game domains like StarCraft~\citep{vinyals2019grandmaster}, where state graphs are predefined and GNNs can learn from massive replay buffers to predict value functions; (2) robotics tasks in simulated worlds (e.g., MuJoCo~\citep{todorov2012mujoco}), where physical semantics (e.g., joint angles, object interactions) enable GNNs to generalize across similar trajectories; (3) molecular design~\citep{de2018molgan} or drug discovery graphs~\citep{you2018graph}, where node features (e.g., atom types) allow learned propagation to optimize sparse rewards like binding affinity; or (4) offline RL in recommendation systems~\citep{chen2024opportunities}, with static user-item graphs enabling pre-trained GNNs for reliable, semantics-aware credit assignment. In these cases, a trained GNN could provide task-specific reliability once deployed, and this is a promising extension for future work in RewardFlow variants tailored to such domains.

\subsection{Do Reward Contradictions Occur in RewardFlow?}
\label{app: reward contradictions}

\textbf{RewardFlow's graph construction avoids such conflicts in deterministic environments due to the inherent consistency of transitions.} In ALFWorld, WebShop, and Sokoban, the transition function $P(s'|s,a)$ is deterministic, and states are distinct (textual configurations in ALFWorld and WebShop; grid layouts in Sokoban). Thus, any two rollouts that visit the same state-action pair necessarily reach the identical next state. The induced exploration graph is therefore free of conflicting edges. For reward propagation, our BFS/PageRank algorithms conservatively take the maximum propagated reward per node (equivalent to the shortest-path distance to the nearest successful terminal in BFS), yielding a unique, consistent value for every reachable state regardless of how many or which trajectories discovered it.

\textbf{The semantic similarity clustering strategy avoids the contradiction of reward assignment.} In the highly stochastic DeepResearch environment, semantically equivalent queries can surface different retrieved passages, leading to superficially distinct but functionally identical states. To prevent conflicting reward assignments, we cluster states whose embeddings have high cosine similarity, merge them into a single super-node, and assign the max-pooled propagated reward (or averaged, as ablation shows negligible difference). This simple yet effective deduplication ensures that near-identical research states receive identical credit, eliminating contradictions while preserving the underlying topology. 
Empirical studies in DeepResearch (Tab.~\ref{tab:deepresearch}) demonstrate that this clustering contributes critically to over 10\% average gain over Search-R1~\citep{jin2025search} under high stochasticity.

\section{Further Experiments}
\label{app:further_experiments}

\subsection{Detailed Experimental Setting}
\label{app: detailed experimental setting}

\paragraph{Evaluation Benchmarks.} We show details of chosen benchmarks as follows:
\begin{itemize}[leftmargin=*]
\vspace{-6pt}
    \item \textbf{ALFWorld}~\citep{shridhar2021alfworld} is a synthetic text-based simulator aligned with the embodied benchmark ALFRED~\citep{shridhar2020alfred}, comprising 3{,}827 household tasks spanning six types: \textit{Pick \& Place} (Pick), \textit{Examine in Light} (Look), \textit{Clean \& Place} (Clean), \textit{Heat \& Place} (Heat), \textit{Cool \& Place} (Cool), and \textit{Pick Two \& Place} (Pick2).

    \item \textbf{WebShop}~\citep{yao2022webshop} is a large-scale benchmark where the agent must fulfill natural-language shopping instructions by issuing text commands such as ``search'' and ``click'' to web pages with over 1.18 million real Amazon products.

    \item \textbf{Sokoban}~\citep{SchraderSokoban2018} is a classic puzzle game in which agents must observe and analyze the grid layout and push boxes onto target locations, taxing spatial understanding and long-horizon planning. In this work, we randomly generate \(6\times 6\) Sokoban boards for training and evaluation.

    \item \textbf{DeepResearch}~\citep{jin2025search} is an agentic framework for knowledge-intensive QA that uses a search engine to retrieve documents. The environment state consists of retrieved documents, and the agent's actions are generated search queries. A key challenge is the high sensitivity of retrieval: semantically similar queries often return substantially different document sets, which complicates trajectory modeling, knowledge graph construction, and reward modeling. We evaluate on single-hop tasks: NarrativeQA (NQ)~\citep{kwiatkowski2019natural}, TriviaQA~\citep{joshi2017triviaqa}, PopQA~\citep{mallen2023not}, and multi-hop tasks: HotpotQA~\citep{yang2018hotpotqa}, 2WikiMultiHopQA (2Wiki)~\citep{ho2020constructing}, Musique~\citep{trivedi2022musique}, and Bamboogle~\citep{press2023measuring}.
\vspace{-6pt}
\end{itemize}

\paragraph{Training Configuration.} 

We show detailed configurations in our experiments in Tab.~\ref{tab: configuration}, including the training part and the algorithm part, for full reproduction. All experiments are conducted using the Verl-Agent~\citep{feng2025group} framework.

\begin{table*}[h!]
\centering
\caption{Details of training configuration of experiments.}
\label{tab: configuration}
\vspace{-6pt}
\fontsize{6}{6}\selectfont
        \setlength\tabcolsep{8pt}{
\begin{tabular}{l|l|c|c|c|c}
\toprule
Type & Hyperparameter & ALFWorld & Webshop & Sokoban & DeepResearch\\
\midrule
\multicolumn{6}{c}{\textit{Qwen2.5-(VL)-1.5B/3B-Instruct}}\\
\midrule
Learning & Training batch size & 16 & 16 & 16 & 256 \\
Learning & Minibatch size for policy update & 256 & 64 & 256 & 512 \\
Learning & Max interactive steps & 25 & 15 & 15 & 4 \\
Learning & Max prompt length & 2048 & 4096 & 1024 & 4096 \\
Learning & Max response length & 512 & 512 & 512 & 512 \\
Learning & Training step & 100 & 100 & 100 & 200 \\
Learning & Temperature & 0.4 & 0.4 & 0.4 & 0.4 \\
Learning & Learning rate & 1e-6 & 1e-6 & 1e-6 & 1e-6 \\
Learning & Invalid action penalty & 0.1 & 0.1 & 0.1 & 0.01 \\
Learning & Computational cost & 2x(A100 \& h20) & 2x(A100 \& h20) & 2x(A100 \& h20) & 2x(A100 \& h20) \\
\midrule
\multicolumn{6}{c}{\textit{Qwen2.5-(VL)-7B-Instruct}}\\
\midrule
Learning & Training batch size & 16 & 16 & 16 & 256 \\
Learning & Minibatch size for policy update & 256 & 64 & 256 & 512 \\
Learning & Max interactive steps & 25 & 15 & 15 & 4 \\
Learning & Max prompt length & 2048 & 4096 & 1024 & 4096 \\
Learning & Max response length & 512 & 512 & 512 & 512 \\
Learning & Training step & 100 & 100 & 100 & 200 \\
Learning & Temperature & 0.4 & 0.4 & 0.4 & 0.4 \\
Learning & Learning rate & 1e-6 & 1e-6 & 1e-6 & 1e-6 \\
Learning & Invalid action penalty & 0.1 & 0.1 & 0.1 & 0.01 \\
Learning & Computational cost & 4x(A100 \& h20) & 4x(A100 \& h20) & 4x(A100 \& h20) & 4x(A100 \& h20) \\
\midrule
\multicolumn{6}{c}{\textit{GRPO}}\\
\midrule
Algorithm & Group size & 8 & 8 & 8 & 5 \\
Algorithm & KL-divergence loss coefficient & 0.01 & 0.01 & 0.01 & 0.001 \\
\midrule
\multicolumn{6}{c}{\textit{RLOO}}\\
\midrule
Algorithm & Group size & 8 & 8 & 8 & 5 \\
Algorithm & KL-divergence loss coefficient & 0.01 & 0.01 & 0.01 & 0.001 \\
\midrule
\multicolumn{6}{c}{\textit{GiGPO}}\\
\midrule
Algorithm & Decay weight $\gamma$ & 0.95 & 0.95 & 0.95 & 0.95 \\
Algorithm & weighting coefficient $w$ & 1 & 1 & 1 & 1 \\
Algorithm & Group size & 8 & 8 & 8 & 5 \\
Algorithm & KL-divergence loss coefficient & 0.01 & 0.01 & 0.01 & 0.001 \\
\midrule
\multicolumn{6}{c}{\textit{RewardFlow}}\\
\midrule
Algorithm & Decay weight $\gamma$ & 0.90 & 0.90 & 0.90 & 0.90 \\
Algorithm & Max graph propagation iteration & 1000 & 1000 & 1000 & 1000 \\
Algorithm & Action-level advantage weight $\alpha_{\text{action}}$ & 1 & 1 & 1 & 1 \\
Algorithm & Trajectory-level advantage weight $\alpha_{\text{traj}}$ & 1 & 1 & 1 & 1 \\
Algorithm & Group size & 8 & 8 & 8 & 5 \\
Algorithm & KL-divergence loss coefficient & 0.01 & 0.01 & 0.01 & 0.001 \\
\bottomrule
\end{tabular}
}
\vspace{-6pt}
\end{table*}


\subsection{Training Efficiency}
\label{app: training efficiency}

We show average numbers of nodes and edges in the induced graph of rollout sampling (Tab.~\ref{tab:graph-size}), where we collect the data from ALFWorld and WebShop via the sampling from Qwen2.5-1.5B-Instruct with 8 rollouts, where ALFWorld takes 25 interactive steps, and WebShop takes 15 interactive steps. For Sokoban, we use Qwen2.5-VL-3B-Instruct with 8 rollouts and 15 interactive steps. The results show that for all three agentic environments, the nodes and edges are no more than 42 and 77, respectively, indicating that the induced graph through rollout does not involve numerous nodes and edges, where the graph construction and reward propagation process would not take much time. We also show the average time consumption of each training stage per step to demonstrate that RewardFlow is a lightweight and efficient reward modeling method.

\begin{table}[h!]
\centering
\caption{Average and maximum size of induced state graphs from 8 on-policy rollouts (Qwen2.5-1.5B/VL-3B-Instruct). Even in the worst case, graphs remain extremely compact.}
\label{tab:graph-size}
\fontsize{8}{10}\selectfont
\setlength\tabcolsep{10pt}{
\begin{tabular}{lcccc}
\toprule
\textbf{Environment} & \textbf{Avg. Nodes} & \textbf{Avg. Edges} & \textbf{Max Nodes} & \textbf{Max Edges} \\
\midrule
ALFWorld & 28.8 & 55.9 & 42 & 77 \\
WebShop & 36.8 & 48.2 & 54 & 65 \\
Sokoban & 14.0 & 21.0 & 27 & 39 \\
\bottomrule
\end{tabular}}
\vspace{-8pt}
\end{table}

\subsection{Training Until Convergence}
\label{app: training until convergence}

\begin{table}[h]
\centering
\caption{Extended training to 200 steps on Sokoban (Qwen2.5-VL-3B-Instruct). All methods fully converge well before 200 steps, and RewardFlow’s advantage further widens after convergence.}
\label{tab:convergence-training}
\fontsize{8}{10}\selectfont
\setlength\tabcolsep{10pt}{
\begin{tabular}{lccc}
\toprule
\textbf{Method} & \textbf{Success @100 steps (\%)} & \textbf{Success @200 steps (\%)} & \textbf{$\Delta$ (200-100)} \\
\midrule
RLOO & 16.7 & 41.4 & \textbf{+24.7} \\
GRPO & 22.9 & 39.1 & +16.2 \\
GiGPO & 19.8 & 33.6 & +13.8 \\
RewardFlow (ours) & \textbf{49.5} & \textbf{70.3} & +20.8 \\
\bottomrule
\end{tabular}}
\vspace{-10pt}
\end{table}

We continued training the exact same checkpoints of Sokoban from the main experiments for an additional 100 steps (total 200 steps) under identical hyperparameters. All methods plateau between 150 and 200 steps, confirming full convergence. As shown in Tab~\ref{tab:convergence-training}, RewardFlow’s lead over the strongest baseline grows from +26.6\% at 100 steps to +28.9\% at 200 steps. These results confirm that our reported advantages are not artifacts of early stopping and remain robust (and even strengthen) under fully converged conditions.

\subsection{Propagation Strategy}
\label{propagation strategy}
In this section, we try another strategy in BFS: the average distance $d(s) = \frac{1}{|S_{\text{succ}}|} \sum_{s^\star \in S_{\text{succ}}} \text{dist}_{\text{hop}}(s \rightsquigarrow s^\star)$ to compare with the original propagation strategy $d(s) = \min_{s^\star \in S_{\text{succ}}} \text{dist}_{\text{hop}}(s \rightsquigarrow s^\star)$. As shown in Tab.~\ref{tab:ablation-propagation-strategy}, using minimum hop distance for reward propagation demonstrates consistently higher performance than the mean distance. Especially in Sokoban, using minimum distance shows a 15.6\% improvement in success rate compared with mean distance. Intuitively, farther routes would dilute the signal, assigning a lower value $R(s)$ to a state with one short path but many long ones, even if the short path is viable. This might yield $\tilde{r}(s_t, a_t) < 0$ for actions leading to such states, discouraging promising transitions and destabilizing the state-wise advantages.

However, while simply integrate mean distance performs worse, alternatives like weighted mean distances could mitigate these issues but would require additional design and empirical support. For example, assigning higher weights to shorter paths (e.g., exponential decay based on hop length) might better aggregate topological information, but this introduces hyperparameters and risks overfitting to rollout noise, complicating the multi-source reverse BFS. Our choice of min distance keeps the approach simple and topology-aware, ensuring reliable propagation, which improves reachability without such overhead. Our results in Table 1 indicate that the min-based method already provides robust, task-centric signals across model sizes.

In addition, incorporating uncertainty into reward propagation is an insightful suggestion, as it could refine the signal by accounting for policy variability, though it introduces subjective elements distinct from objective graph topology. Uncertainty, such as entropy over $\pi_\theta(a|s)$, reflects the LLM policy's confidence rather than inherent environment structure, potentially enhancing exploration in high-uncertainty states. However, integrating it (e.g., via entropy-regularized distances) would require careful adaptation to avoid biasing the objective distance metric.

\begin{table}[h]
\centering
\caption{Ablation on propagation strategies.}
\label{tab:ablation-propagation-strategy}
\fontsize{9}{11}\selectfont
\setlength\tabcolsep{12pt}
\begin{tabular}{llcc}
\toprule
\textbf{Model} & \textbf{Environment} & \textbf{Propagation} & \textbf{Success Rate (\%)} \\
\midrule
\multirow{2}{*}{Qwen-2.5-1.5B-Instruct} & \multirow{2}{*}{ALFWorld} 
                            & Mean Distance & 62.5 \\
                         &  & Min Distance  & \textbf{62.8} \\
\midrule
\multirow{2}{*}{Qwen-2.5-VL-3B-Instruct} & \multirow{2}{*}{Sokoban}  &
                           Mean Distance & 33.6 \\
                            &          & Min Distance  & \textbf{49.2}\\
\bottomrule
\end{tabular}
\vspace{-10pt}
\end{table}

\subsection{State Normalization Ablation}
\label{app: normalization ablation}

Tab.~\ref{tab:alfworld-ablation} shows the contribution of each normalization component in \textsc{RewardFlow} on ALFWorld (Qwen2.5-3B-Instruct). Both state normalization and noisy transition pruning are individually critical: removing either causes substantial performance drops. Methodological details of state normalization and noisy transition pruning are provided in Appendices~\ref{app: state preprocessing} and~\ref{app: invalid action filtering} respectively.

\begin{table}[h]
\centering
\caption{Performance of \textsc{RewardFlow} with or without each normalization operation on ALFWorld using Qwen2.5-3B-Instruct. Success rate (\%).}
\label{tab:alfworld-ablation}
\fontsize{8}{10}\selectfont
\setlength\tabcolsep{7pt}{
\begin{tabular}{lc}
\toprule
\textbf{Method} & \textbf{Success Rate (\%)} \\
\midrule
Base & 16.4 \\
\textsc{RewardFlow} (w/o state normalization) & 53.9 \\
\textsc{RewardFlow} (w/o noisy transition pruning) & 60.2 \\
\textsc{RewardFlow} & \textbf{75.5} \\
\bottomrule
\end{tabular}}
\vspace{-8pt}
\end{table}

\subsection{Exploration Robustness}
\label{app: exploration-ablation}

We evaluate the robustness of \textsc{RewardFlow} under poor exploration by comparing against GiGPO under limited sampling budgets. In \textsc{RewardFlow}, fewer trajectories produce sparser graphs (fewer nodes and edges), which can impair the induction of agentic environment structure and compromise process reward estimation. As Tab.~\ref{tab:exploration-ablation} shows, \textsc{RewardFlow} consistently outperforms GiGPO with a $+13.3\%$ improvement even with only 4 trajectories; increasing the sampling budget yields disproportionately larger gains, further widening the advantage over baselines.

\begin{table}[h]
\centering
\caption{Ablation on number of rollouts per training step using Qwen2.5-1.5B-Instruct in ALFWorld. Success rate (\%) reported; \textsc{RewardFlow} results include average graph statistics.}
\label{tab:exploration-ablation}
\fontsize{8}{10}\selectfont
\setlength\tabcolsep{3pt}{
\begin{tabular}{cccccc}
\toprule
\textbf{Method} & \textbf{Rollouts} & \makecell[c]{\textbf{Avg.} \\ \textbf{Nodes}} & \makecell[c]{\textbf{Avg.} \\ \textbf{Edges}} & \makecell[c]{\textbf{Success} \\ \textbf{Rate (\%)}} \\
\midrule
\multirow{3}{*}{GiGPO} & 4 & - & - & 28.9 \\
 & 6 & - & - & 51.6 \\
 & 8 & - & - & 53.1 \\
\midrule
\multirow{3}{*}{\textsc{RewardFlow}} & 4 & 17.1 & 28.8 & \textbf{42.2} \\
 & 6 & 22.9 & 41.3 & \textbf{53.1} \\
 & 8 & 28.8 & 55.9 & \textbf{62.8} \\
\bottomrule
\end{tabular}}
\vspace{-8pt}
\end{table}

\subsection{BFS vs.\ Personalized PageRank (PPR)}
\label{app: bfs vs ppr}

\begin{table}[h]
\centering
\caption{Comparison of BFS and Personalized PageRank (PPR) propagation strategies on the state graph constructed by \textsc{RewardFlow}. ALFWorld uses Qwen2.5-1.5B-Instruct; Sokoban uses Qwen2.5-VL-3B-Instruct. Success rate (\%); higher is better.}
\label{tab:bfs-vs-ppr}
\fontsize{8}{10}\selectfont
\setlength\tabcolsep{12pt}{
\begin{tabular}{lcc}
\toprule
\textbf{Environment} & \textbf{BFS} & \textbf{PPR} \\
\midrule
ALFWorld & \textbf{62.8} & 52.3 \\
Sokoban  & \textbf{49.5} & 35.2 \\
\bottomrule
\end{tabular}}
\vspace{-8pt}
\end{table}

BFS propagation outperforms personalized PageRank by $+10.5\%$ on ALFWorld and $+14.3\%$ on Sokoban. We attribute this gap to BFS's hop-distance semantics, which align with the action-cost interpretation underlying $R(\hat{s}) = \gamma^{d(\hat{s})}$: PPR introduces stationary-distribution mass that smooths over the discrete progress structure that BFS exposes directly.

\subsection{Comparison under GiGPO-Matched Training Settings}
\label{app: gigpo-matched}

To assess \textsc{RewardFlow}'s performance under larger training budgets, we retrain both GiGPO and \textsc{RewardFlow} strictly following GiGPO's original training protocol~\citep{feng2025group}: 150 training steps for ALFWorld and WebShop, 200 steps for Sokoban, a maximum of 50 interaction steps per episode in ALFWorld, and a training batch size of 32 for Sokoban. Results (mean $\pm$ std over 3 random seeds) are reported in Tab.~\ref{tab:gigpo-matched}. We observe that our reproduction of GiGPO yields slightly lower performance than the numbers originally reported in~\citet{feng2025group}. We suspect this discrepancy stems primarily from hardware differences: our experiments are conducted on NVIDIA H20 and A100 GPUs, whereas the original results were obtained on NVIDIA H100 GPUs, which differ in memory bandwidth and floating-point throughput and may introduce subtle differences in training dynamics under identical hyperparameter settings. Importantly, since both GiGPO and \textsc{RewardFlow} are evaluated under strictly identical experimental conditions in our study, this does not affect the validity of the comparison between the two methods.

\begin{table}[h]
\centering
\caption{Comparison of GiGPO and \textsc{RewardFlow} under GiGPO's original training settings (150 steps for ALFWorld/WebShop, 200 steps for Sokoban, batch size 32 for Sokoban). Success rate (\%, mean $\pm$ std over 3 seeds); \textbf{bold} indicates the better method.}
\label{tab:gigpo-matched}
\fontsize{8}{10}\selectfont
\setlength\tabcolsep{8pt}{
\begin{tabular}{lcccc}
\toprule
\textbf{Method} & \textbf{ALFWorld 1.5B} & \textbf{ALFWorld 3B} & \textbf{Sokoban 3B} & \textbf{WebShop 1.5B} \\
\midrule
GiGPO               & $74.2\pm1.7$ & $75.2\pm1.6$ & $78.4\pm2.6$ & $60.2\pm2.2$ \\
\textsc{RewardFlow} & $\mathbf{76.8\pm1.0}$ & $\mathbf{82.8\pm0.7}$ & $\mathbf{81.8\pm2.3}$ & $\mathbf{66.9\pm1.5}$ \\
\bottomrule
\end{tabular}}
\vspace{-8pt}
\end{table}

Under GiGPO's settings, \textsc{RewardFlow} consistently outperforms GiGPO across all three benchmarks and both model sizes, confirming that the gains reported in Tab.~\ref{tab:main} are not artifacts of a limited training budget and hold robustly under abundant training as well.

\subsection{State Graph Cases}
\label{app: state graph cases}

To illustrate the construction of state graphs, we visualize the graphs derived from varying numbers of rollout trajectories sampled from checkpoints after 100-step training in WebShop and Sokoban. In these visualizations, node rewards propagated via BFS are highlighted in green, with deeper shades indicating higher values. Edge rewards are colored red for positive gains, blue for negative gains, and white for zero gains.

The state graphs for WebShop, constructed using 1, 2, and 3 trajectories, are depicted in Figs.~\ref{fig:state_graph_webshop_1}, \ref{fig:state_graph_webshop_2}, and \ref{fig:state_graph_webshop_3}, respectively. Corresponding textual descriptions of the states and actions are provided in Figs.~\ref{fig:node_list_webshop} and~\ref{fig:edge_list_webshop}.

The state graphs for Sokoban, constructed using 1, 2, and 3 trajectories, are depicted in Figs.~\ref{fig:state_graph_sokoban_1}, \ref{fig:state_graph_sokoban_2}, and \ref{fig:state_graph_sokoban_3}, respectively. Corresponding textual and visual descriptions of the states and actions are provided in Figs.~\ref{fig:node_list_sokoban} and~\ref{fig:edge_list_sokoban}.

\clearpage
\begin{figure}[h!]
    \centering
    \includegraphics[width=\linewidth]{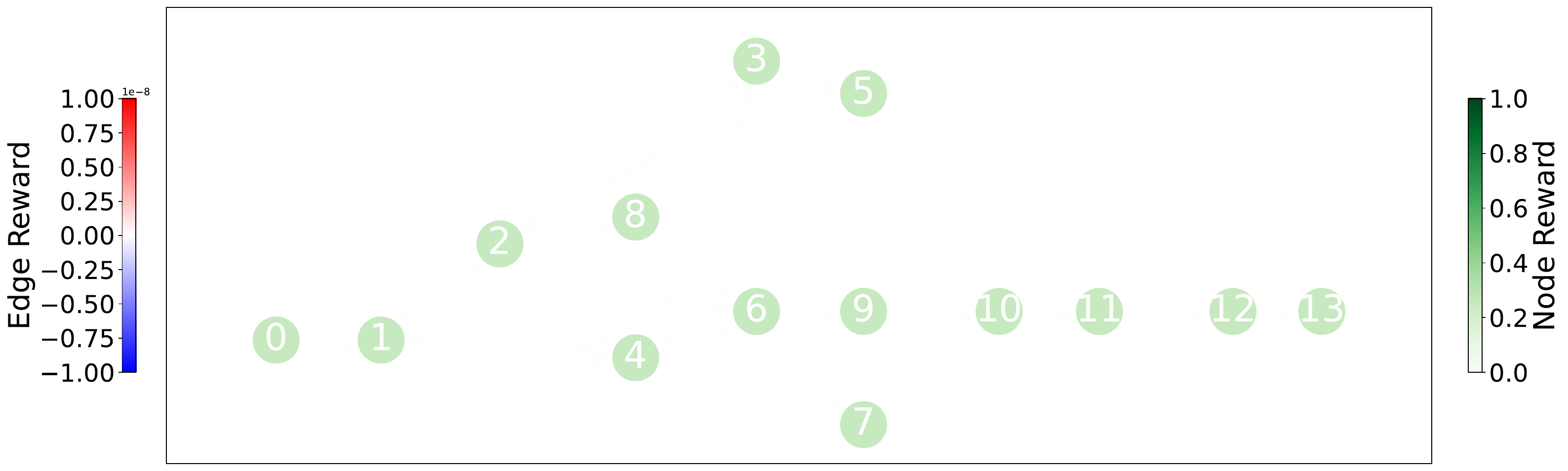}
    \caption{The constructed state graph of the trajectory sampled from ALFWorld with 1 sampling rollout. The deeper color indicates a higher assigned reward. The initial state is in Node 0, and the success terminal state is in Node $\bigstar$. The corresponding text representation of nodes and edges can be found in Figs.~\ref{fig:node_list_alfworld} and~\ref{fig:edge_list_alfworld}.}
    \label{fig:state_graph_alfworld_1}
\end{figure}

\begin{figure}[h!]
    \centering
    \includegraphics[width=\linewidth]{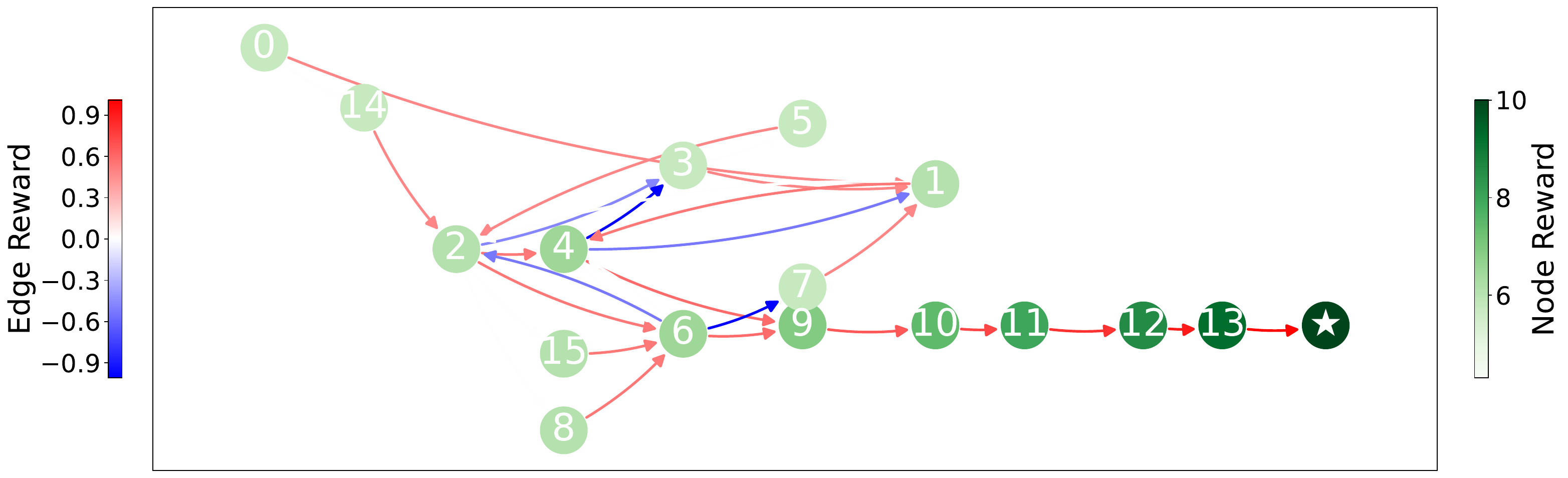}
    \caption{The constructed state graph of the trajectories sampled from ALFWorld with 2 sampling rollouts. The deeper color indicates a higher assigned reward. The initial state is in Node 0, and the success terminal state is in Node $\bigstar$. The corresponding text representation of nodes and edges can be found in Figs.~\ref{fig:node_list_alfworld} and~\ref{fig:edge_list_alfworld}.}
    \label{fig:state_graph_alfworld_2}
\end{figure}

\begin{figure}[h!]
    \centering
    \includegraphics[width=\linewidth]{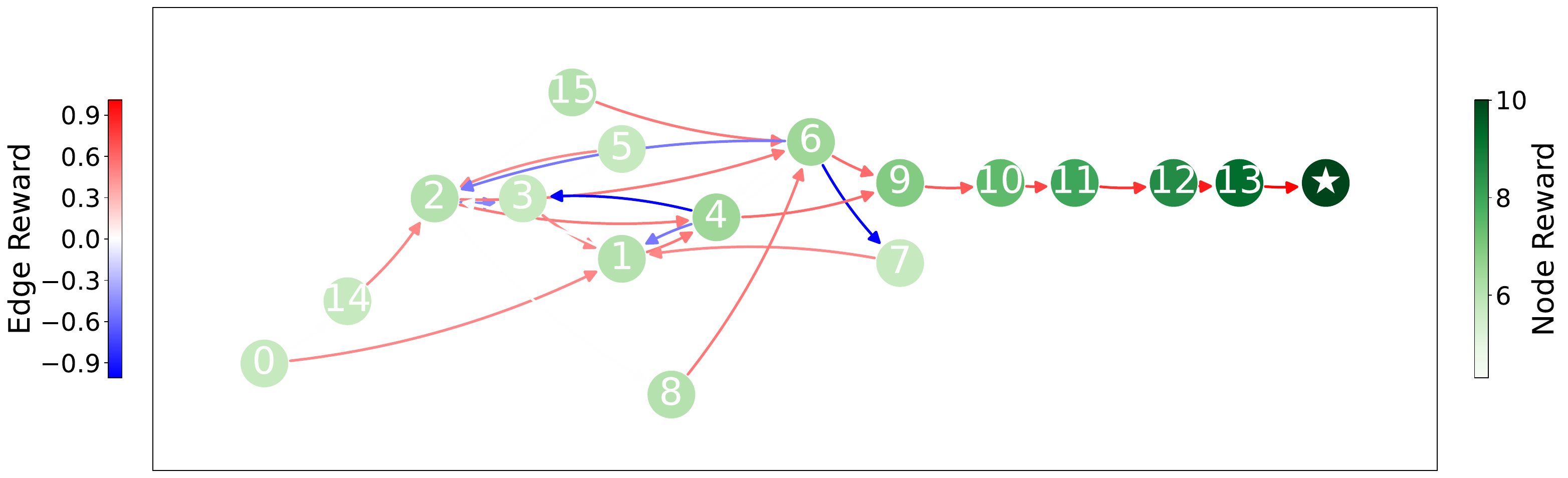}
    \caption{The constructed state graph of the trajectories sampled from ALFWorld with 3 sampling rollouts. The deeper color indicates a higher assigned reward. The initial state is in Node 0, and the success terminal state is in Node $\bigstar$. The corresponding text representation of nodes and edges can be found in Figs.~\ref{fig:node_list_alfworld} and~\ref{fig:edge_list_alfworld}.}
    \label{fig:state_graph_alfworld_3}
\end{figure}

\clearpage

\begin{figure}
    \centering

\begin{tcolorbox}[colback=blue!10, colupper=black, colframe=blue!40!black, title=Node list of cases in ALFWorld, fontupper=\scriptsize, width=\textwidth, label=lst:node_alfworld]

\textbf{Node 0:} -= Welcome to TextWorld, ALFRED! =-

You are in the middle of a room. Looking quickly around you, you see a cabinet 13, a cabinet 12, a cabinet 11, a cabinet 10, a cabinet 9, a cabinet 8, a cabinet 7, a cabinet 6, a cabinet 5, a cabinet 4, a cabinet 3, a cabinet 2, a cabinet 1, a coffeemachine 1, a countertop 2, a countertop 1, a diningtable 1, a drawer 4, a drawer 3, a drawer 2, a drawer 1, a fridge 1, a garbagecan 1, a microwave 1, a shelf 3, a shelf 2, a shelf 1, a sinkbasin 1, a stoveburner 4, a stoveburner 3, a stoveburner 2, a stoveburner 1, and a toaster 1.

Your task is to: clean some kettle and put it in shelf. You are not carrying anything.

\textbf{Node 1:} You arrive at coffeemachine 1. On the coffeemachine 1, you see nothing. You are not carrying anything.

\textbf{Node 2:} You arrive at countertop 1. On the countertop 1, you see a fork 1, and a lettuce 1. You are not carrying anything.

\textbf{Node 3:} You arrive at stoveburner 1. On the stoveburner 1, you see nothing. You are not carrying anything.

\textbf{Node 4:} You arrive at countertop 2. On the countertop 2, you see a book 2, a book 1, a butterknife 1, a dishsponge 2, a fork 2, a kettle 1, a lettuce 2, a plate 2, a sink 1, a statue 2, and a tomato 1. You are not carrying anything.

\textbf{Node 5:} You pick up the kettle 1 from the countertop 2. You are carrying: a kettle 1.

\textbf{Node 6:} You arrive at sinkbasin 1. On the sinkbasin 1, you see nothing. You are carrying: a kettle 1.

\textbf{Node 7:} You clean the kettle 1 using the sinkbasin 1. You are carrying: kettle 1 (cleaned).

\textbf{Node 8:} You arrive at shelf 1. On the shelf 1, you see nothing. You are carrying: kettle 1 (cleaned).

\textbf{Node $\bigstar$:} You Won!

\textbf{Node 10:} You arrive at fridge 1. The fridge 1 is closed. You are not carrying anything.

\textbf{Node 11:} You open the fridge 1. The fridge 1 is open. In it, you see a apple 2, a bowl 1, a cup 1, a egg 1, a plate 1, and a potato 1. You are not carrying anything.

\textbf{Node 12:} You arrive at shelf 1. On the shelf 1, you see nothing. You are not carrying anything.

\textbf{Node 13:} You arrive at fridge 1. The fridge 1 is open. In it, you see a apple 2, a bowl 1, a cup 1, a egg 1, a plate 1, and a potato 1. You are not carrying anything.

\textbf{Node 14:} You arrive at diningtable 1. On the diningtable 1, you see a apple 1, a bread 2, a bread 1, a houseplant 1, a knife 1, a saltshaker 2, a soapbottle 1, a spatula 2, a spatula 1, a statue 1, a tomato 3, and a tomato 2. You are not carrying anything.

\textbf{Node 15:} You arrive at stoveburner 2. On the stoveburner 2, you see a pan 1. You are not carrying anything.

\textbf{Node 16:} You arrive at sinkbasin 1. On the sinkbasin 1, you see nothing. You are not carrying anything.

\end{tcolorbox}
    \caption{The node list of cases in ALFWorld in Figs.~\ref{fig:state_graph_alfworld_1},~\ref{fig:state_graph_alfworld_2}, and~\ref{fig:state_graph_alfworld_3}.}
    \label{fig:node_list_alfworld}
\end{figure}

\begin{figure}
    \centering

\begin{tcolorbox}[colback=red!5!white, colupper=black, colframe=red!40!black, title=Edge list of cases in ALFWorld, fontupper=\scriptsize, width=\textwidth]

\textbf{Node 0 $->$ Node 1:} go to coffeemachine 1

\textbf{Node 0 $->$ Node 14:} go to diningtable 1

\textbf{Node 1 $->$ Node 2:} go to countertop 1

\textbf{Node 2 $->$ Node 13:} go to fridge 1

\textbf{Node 2 $->$ Node 3:} go to stoveburner 1

\textbf{Node 2 $->$ Node 16:} go to sinkbasin 1

\textbf{Node 2 $->$ Node 12:} go to shelf 1

\textbf{Node 2 $->$ Node 10:} go to fridge 1

\textbf{Node 3 $->$ Node 13:} go to fridge 1

\textbf{Node 3 $->$ Node 14:} go to diningtable 1

\textbf{Node 3 $->$ Node 10:} go to fridge 1

\textbf{Node 3 $->$ Node 4:} go to countertop 2

\textbf{Node 4 $->$ Node 5:} take kettle 1 from countertop 2

\textbf{Node 5 $->$ Node 6:} go to sinkbasin 1

\textbf{Node 6 $->$ Node 7:} clean kettle 1 with sinkbasin 1

\textbf{Node 7 $->$ Node 8:} go to shelf 1

\textbf{Node 8 $->$ Node $\bigstar$:} move kettle 1 to shelf 1

\textbf{Node 10 $->$ Node 11:} open fridge 1

\textbf{Node 10 $->$ Node 14:} go to diningtable 1

\textbf{Node 11 $->$ Node 2:} go to countertop 1

\textbf{Node 12 $->$ Node 13:} go to fridge 1

\textbf{Node 13 $->$ Node 2:} go to countertop 1

\textbf{Node 13 $->$ Node 4:} go to countertop 2

\textbf{Node 13 $->$ Node 15:} go to stoveburner 2

\textbf{Node 13 $->$ Node 3:} go to stoveburner 1

\textbf{Node 14 $->$ Node 3:} go to stoveburner 1

\textbf{Node 14 $->$ Node 2:} go to countertop 1

\textbf{Node 15 $->$ Node 14:} go to diningtable 1

\textbf{Node 16 $->$ Node 13:} go to fridge 1

\end{tcolorbox}

    \caption{The edge list of cases in ALFWorld in Figs.~\ref{fig:state_graph_alfworld_1},~\ref{fig:state_graph_alfworld_2}, and~\ref{fig:state_graph_alfworld_3}.}
    \label{fig:edge_list_alfworld}
\end{figure}


\begin{figure}[h!]
    \centering
    \includegraphics[width=\linewidth]{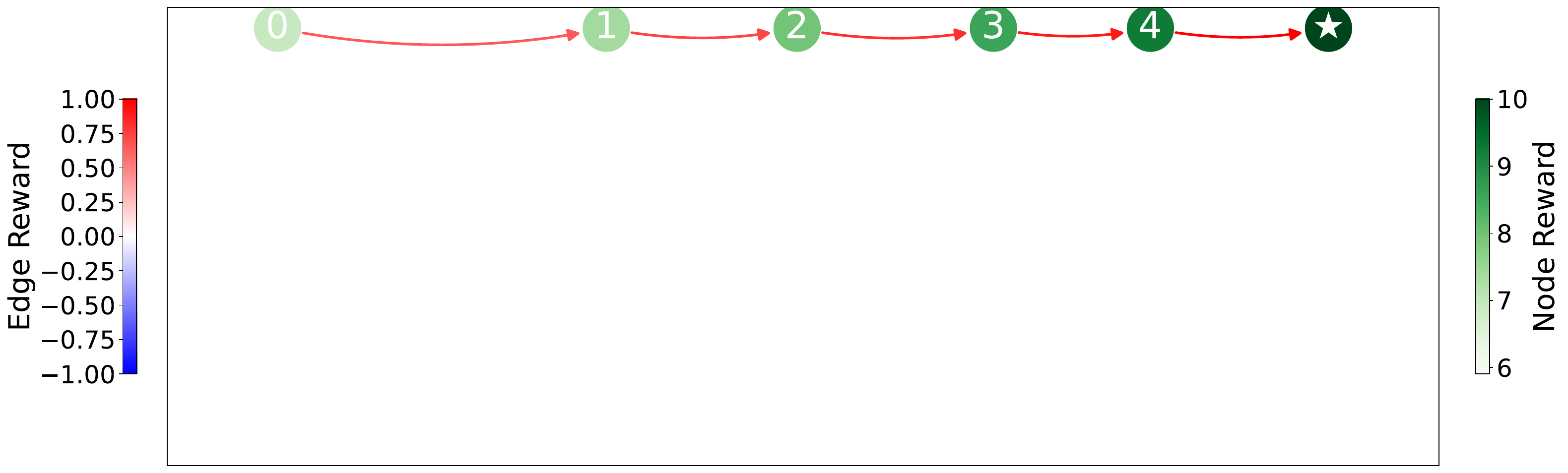}
    \caption{The constructed state graph of the trajectory sampled from WebShop with 1 sampling rollout. The deeper color indicates a higher assigned reward. The initial state is in Node 0, and the success terminal state is in Node $\bigstar$. The corresponding text representation of nodes and edges can be found in Figs.~\ref{fig:node_list_webshop} and~\ref{fig:edge_list_webshop}.}
    \label{fig:state_graph_webshop_1}
\end{figure}

\begin{figure}[h!]
    \centering
    \includegraphics[width=\linewidth]{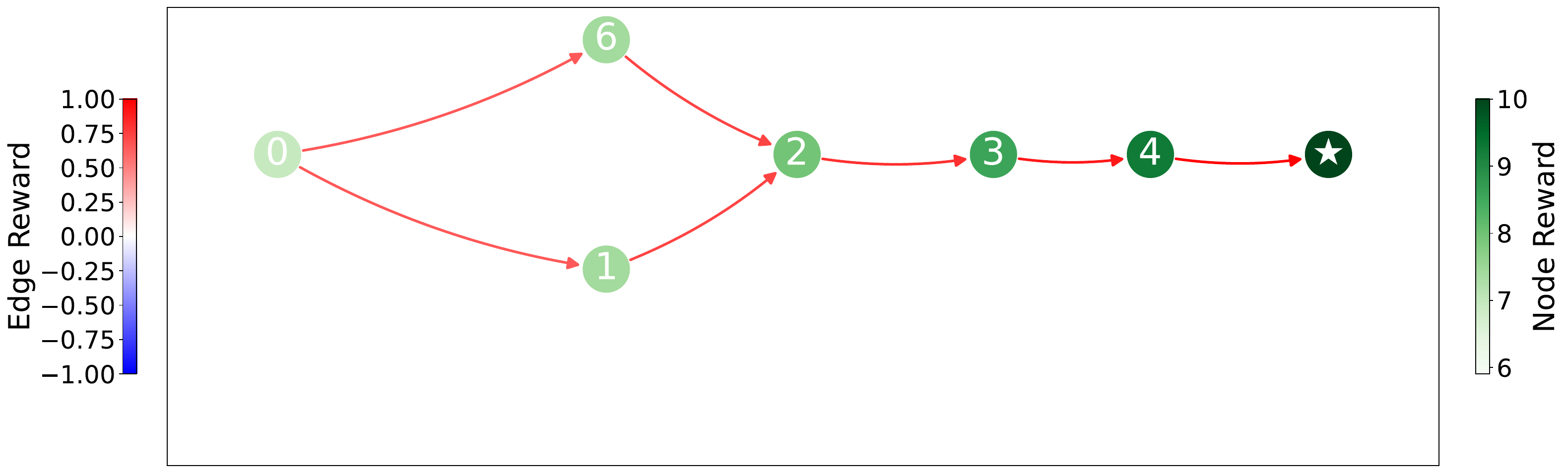}
    \caption{The constructed state graph of the trajectories sampled from WebShop with 2 sampling rollouts. The deeper color indicates a higher assigned reward. The initial state is in Node 0, and the success terminal state is in Node $\bigstar$. The corresponding text representation of nodes and edges can be found in Figs.~\ref{fig:node_list_webshop} and~\ref{fig:edge_list_webshop}.}
    \label{fig:state_graph_webshop_2}
\end{figure}

\begin{figure}[h!]
    \centering
    \includegraphics[width=\linewidth]{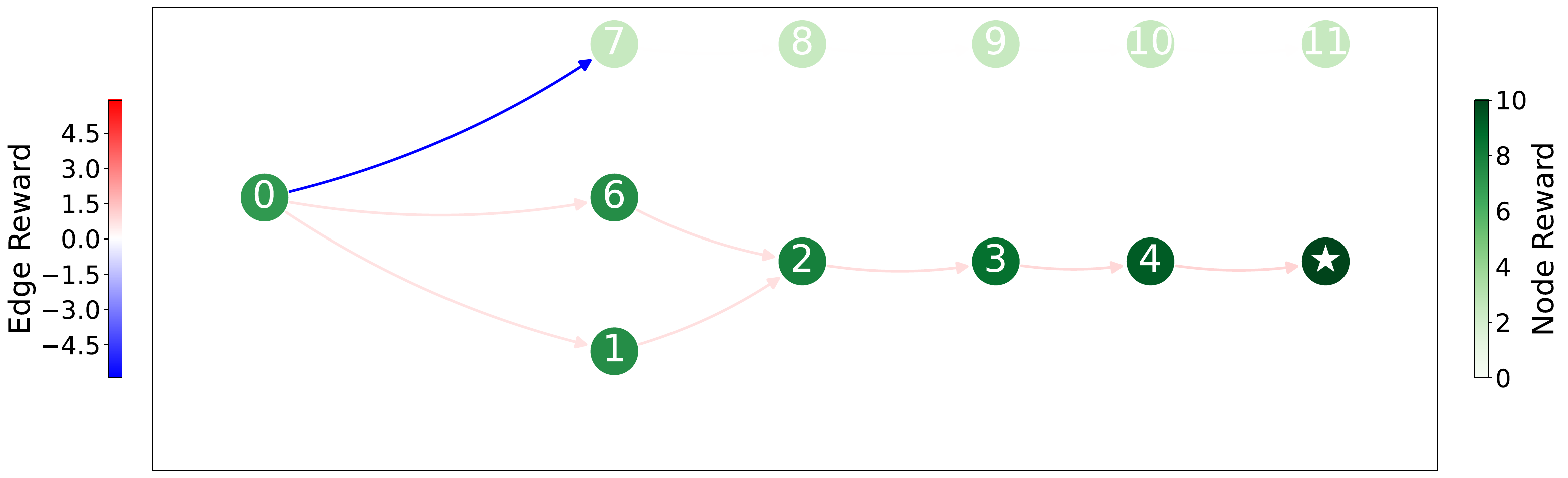}
    \caption{The constructed state graph of the trajectory sampled from WebShop with 3 sampling rollouts. The deeper color indicates a higher assigned reward. The initial state is in Node 0, and the success terminal state is in Node $\bigstar$. The corresponding text representation of nodes and edges can be found in Figs.~\ref{fig:node_list_webshop} and~\ref{fig:edge_list_webshop}.}
    \label{fig:state_graph_webshop_3}
\end{figure}

\begin{figure}
    \centering

\begin{tcolorbox}[colback=blue!10, colupper=black, colframe=blue!40!black, title=Node list of examples in WebShop, fontupper=\tiny, width=\textwidth]

\textbf{Node 0:} 'Search'

\textbf{Node 1:} 'Back to Search' [SEP] 'Page 1 (Total results: 50)' [SEP] 'Next >' [SEP] 'B07S7HDC88' [SEP] 'VanciLin Mens Casual Leather Fashion Slip-on Loafers' [SEP] '\textdollar33.99 to \textdollar35.99' [SEP] 'B00ITCBD3Y' [SEP] 'Laredo Womens Spellbound Studded Square Toe Dress Boots Mid Calf Low Heel 1-2" - Black' [SEP] '\textdollar132.0 to \textdollar150.26' [SEP] 'B09N9T673Q' [SEP] 'Women's Heeled Sandals Sexy Minimalistic Cut out Rear Zipper Wedge Roman Sandals Casual Solid Stiletto Pump for Night Party Comfy Lightweight Summer Shoes' [SEP] '\textdollar100.0' [SEP] 'B07KK7G5YJ' [SEP] 'Laeacco Steampunk Clock Backdrops 6.5x6.5ft Vinyl Photography Background Shabby Clock Photo Backdrop Old Clock Punk Grunge Culture Adults Brave Boy Portraits Photo Studio Props' [SEP] '\textdollar24.59' [SEP] 'B09P1HJ32G' [SEP] 'VDSOIUTYHFV Infrared Digital Night Vision Device, High-Definition Day and Night Dual-use, Outdoor Photos and Videos are All Black, for Adults Bird Watching, Traveling, Hiking' [SEP] '\textdollar840.99' [SEP] 'B089KGBJGW' [SEP] 'ZINUS 10 Inch Cooling Copper ADAPTIVE Pocket Spring Hybrid Mattress / Moisture Wicking Cover / Cooling Foam / Pocket Innersprings for Motion Isolation / Mattress-in-a-Box, Full' [SEP] '\textdollar289.0' [SEP] 'B09P57ZKGP' [SEP] '10 Pack 512MB USB Flash Drives Gift Stick Business Bidding Thumb Drive USB 2.0 high Speed Pen Drives (512MB, Black)' [SEP] '\textdollar100.0' [SEP] 'B001C6JXJU' [SEP] 'Premier HiDef Grey Electric Projection Screen with Low Voltage Motor Viewing Area: 50" H x 80" W' [SEP] '\textdollar2721.34' [SEP] 'B085HGD5Z3' [SEP] 'haoricu Womens Platform Sandals Summer Fashion Ladies Bow Tie Flats Rome Casual Basic Slipper' [SEP] '\textdollar14.98 to \textdollar25.98' [SEP] 'B09QKJHF4Q' [SEP] 'LIUEONG Lady Metal Buckle Platform Shoes Booties Comfortable High Top Boots Casual Shoes Comfort Women's Closed Toe Ankle Strap' [SEP] '\textdollar37.99'

\textbf{Node 2:} 'Back to Search' [SEP] '< Prev' [SEP] 'size' [SEP] '6.5' [SEP] '7' [SEP] '8' [SEP] '8.5' [SEP] '9' [SEP] '9.5' [SEP] '10' [SEP] '10.5' [SEP] '11' [SEP] '11.5' [SEP] '12' [SEP] '13' [SEP] '14' [SEP] 'color' [SEP] '1877black' [SEP] '1877blue' [SEP] '228black' [SEP] 'black137' [SEP] 'blue137' [SEP] 'l.brown137' [SEP] 'r.brown137' [SEP] '1877r.brown' [SEP] '228blue' [SEP] 'r.brown-hole228' [SEP] '1877brown' [SEP] '228r.brown-a' [SEP] 'y.brown-hole228' [SEP] 'black-hole228' [SEP] '228l.brown' [SEP] 'r.brown2070' [SEP] '228r.brown' [SEP] 'black2070' [SEP] 'r.brown2626' [SEP] 'r.brown-hole1887' [SEP] 'black2626' [SEP] 'black1901' [SEP] 'r.brown2060' [SEP] 'l.brown2060' [SEP] 'VanciLin Mens Casual Leather Fashion Slip-on Loafers' [SEP] 'Price: \textdollar33.99 to \textdollar35.99' [SEP] 'Rating: N.A.' [SEP] 'Description' [SEP] 'Features' [SEP] 'Reviews' [SEP] 'Buy Now'

\textbf{Node 3:} 'Back to Search' [SEP] '< Prev' [SEP] 'size' [SEP] '6.5' [SEP] '7' [SEP] '8' [SEP] '8.5' [SEP] '9' [SEP] '9.5' [SEP] '10' [SEP] '10.5' [SEP] '11' [SEP] '11.5' [SEP] '12' [SEP] '13' [SEP] '14' [SEP] 'color' [SEP] '1877black' [SEP] '1877blue' [SEP] '228black' [SEP] 'black137' [SEP] 'blue137' [SEP] 'l.brown137' [SEP] 'r.brown137' [SEP] '1877r.brown' [SEP] '228blue' [SEP] 'r.brown-hole228' [SEP] '1877brown' [SEP] '228r.brown-a' [SEP] 'y.brown-hole228' [SEP] 'black-hole228' [SEP] '228l.brown' [SEP] 'r.brown2070' [SEP] '228r.brown' [SEP] 'black2070' [SEP] 'r.brown2626' [SEP] 'r.brown-hole1887' [SEP] 'black2626' [SEP] 'black1901' [SEP] 'r.brown2060' [SEP] 'l.brown2060' [SEP] 'VanciLin Mens Casual Leather Fashion Slip-on Loafers' [SEP] 'Price: \textdollar33.99 to \textdollar35.99' [SEP] 'Rating: N.A.' [SEP] 'Description' [SEP] 'Features' [SEP] 'Reviews' [SEP] 'Buy Now' clicked attributes: [228r.brown-a]

\textbf{Node 4:} 'Back to Search' [SEP] '< Prev' [SEP] 'size' [SEP] '6.5' [SEP] '7' [SEP] '8' [SEP] '8.5' [SEP] '9' [SEP] '9.5' [SEP] '10' [SEP] '10.5' [SEP] '11' [SEP] '11.5' [SEP] '12' [SEP] '13' [SEP] '14' [SEP] 'color' [SEP] '1877black' [SEP] '1877blue' [SEP] '228black' [SEP] 'black137' [SEP] 'blue137' [SEP] 'l.brown137' [SEP] 'r.brown137' [SEP] '1877r.brown' [SEP] '228blue' [SEP] 'r.brown-hole228' [SEP] '1877brown' [SEP] '228r.brown-a' [SEP] 'y.brown-hole228' [SEP] 'black-hole228' [SEP] '228l.brown' [SEP] 'r.brown2070' [SEP] '228r.brown' [SEP] 'black2070' [SEP] 'r.brown2626' [SEP] 'r.brown-hole1887' [SEP] 'black2626' [SEP] 'black1901' [SEP] 'r.brown2060' [SEP] 'l.brown2060' [SEP] 'VanciLin Mens Casual Leather Fashion Slip-on Loafers' [SEP] 'Price: \textdollar33.99 to \textdollar35.99' [SEP] 'Rating: N.A.' [SEP] 'Description' [SEP] 'Features' [SEP] 'Reviews' [SEP] 'Buy Now' clicked attributes: [228r.brown-a, 6.5]

\textbf{Node $\bigstar$:} Thank you for shopping with us! [SEP] Your code: [SEP] None [SEP] (Paste it in your MTurk interface.) [SEP] Purchased [SEP] asin [SEP] B07S7HDC88 [SEP] options [SEP] {"color": "228r.brown-a", "size": "6.5"} [SEP] attrs [SEP] None [SEP] category [SEP] None [SEP] query [SEP] None [SEP] product category [SEP] None [SEP] Target [SEP] asin [SEP] options [SEP] attrs [SEP] price upper [SEP] instuction text [SEP] category [SEP] product category [SEP] query [SEP] Goal [SEP] None [SEP] Reward [SEP] Your score (min 0.0, max 1.0) [SEP] 1.0 [SEP] Reward Details [SEP] None

\textbf{Node 6:} 'Back to Search' [SEP] 'Page 1 (Total results: 23)' [SEP] 'Next >' [SEP] 'B07S7HDC88' [SEP] 'VanciLin Mens Casual Leather Fashion Slip-on Loafers' [SEP] '\textdollar33.99 to \textdollar35.99' [SEP] 'B07YCGBPRD' [SEP] 'OTAO Privacy Screen Protector for iPhone 11 Pro Max/iPhone Xs Max 6.5 Inch True 28°Anti Spy Tempered Glass Full-Coverage (2-pack)' [SEP] '\textdollar9.98' [SEP] 'B085HGD5Z3' [SEP] 'haoricu Womens Platform Sandals Summer Fashion Ladies Bow Tie Flats Rome Casual Basic Slipper' [SEP] '\textdollar14.98 to \textdollar25.98' [SEP] 'B07KK7G5YJ' [SEP] 'Laeacco Steampunk Clock Backdrops 6.5x6.5ft Vinyl Photography Background Shabby Clock Photo Backdrop Old Clock Punk Grunge Culture Adults Brave Boy Portraits Photo Studio Props' [SEP] '\textdollar24.59' [SEP] 'B09J95S478' [SEP] 'My Sanity Question Giraffe Christmas Pattern Black Ugly Wool Christmas Sweater Pullover Long Sleeve Sweater for Men Women, Couple Matching, Friends' [SEP] '\textdollar39.99' [SEP] 'B09N9T673Q' [SEP] 'Women's Heeled Sandals Sexy Minimalistic Cut out Rear Zipper Wedge Roman Sandals Casual Solid Stiletto Pump for Night Party Comfy Lightweight Summer Shoes' [SEP] '\textdollar100.0' [SEP] 'B07TN3Y9H1' [SEP] 'Maxim 32478SWSBRBP Finn Mid-Century Modern Satin White Glass Ball Bath Vanity Wall Mount, 5-Light 200 Total Watts, 7"H x 44"W, Satin Brass/Brushed Platinum' [SEP] '\textdollar219.99' [SEP] 'B09QKJHF4Q' [SEP] 'LIUEONG Lady Metal Buckle Platform Shoes Booties Comfortable High Top Boots Casual Shoes Comfort Women's Closed Toe Ankle Strap' [SEP] '\textdollar37.99' [SEP] 'B0912FQSMT' [SEP] '450 Bluetooth Marine Gauge Receiver and 6.5-Inch Speaker Package' [SEP] '\textdollar224.02' [SEP] 'B088TWNTRB' [SEP] 'MIA Delena' [SEP] '\textdollar21.89 to \textdollar55.08'

\textbf{Node 7:} 'Back to Search' [SEP] 'Page 1 (Total results: 50)' [SEP] 'Next $>$' [SEP] 'B07RPMCNKD' [SEP] 'ULTRAIDEAS Women's Comfy Lightweight Slippers Non-Slip House Shoes for Indoor \& Outdoor' [SEP] '\textdollar19.99 to \textdollar21.99' [SEP] 'B07HP6LVRS' [SEP] 'MCICI Mens Loafers Moccasin Driving Shoes Premium Genuine Leather Casual Slip On Flats Fashion Slipper Breathable Big Size' [SEP] '\textdollar15.99 to \textdollar32.99' [SEP] 'B07S7HDC88' [SEP] 'VanciLin Mens Casual Leather Fashion Slip-on Loafers' [SEP] '\textdollar33.99 to \textdollar35.99' [SEP] 'B09HMCKZQW' [SEP] 'Copercn Dress Ankle Booties For Women Ladies Fashion Casual Chunky Block High Heels Dress Pump Thermal Short Boots Winter Fall Dressy Short Boots For Business Work Wedding Party Dresses' [SEP] '\textdollar17.99 to \textdollar23.99' [SEP] 'B09F2MFG1Q' [SEP] 'Womens Mens house slippers Memory Foam Garden Plush Lining Slip On Rubber sole Indoor Outdoor House Casual Shoes (Black adult-10.5-11)' [SEP] '\textdollar100.0' [SEP] 'B09S6VN97V' [SEP] 'Skechers Women's, Relaxed Fit: Arch Fit - Commute Clog Taupe 11 M' [SEP] '\textdollar74.95' [SEP] 'B008QYOGZ2' [SEP] 'OluKai MEA Ola - Men's Supportive Sandal Charcoal/Dkjava - 14' [SEP] '\textdollar100.0' [SEP] 'B07XDRVVYM' [SEP] 'Clarks Women's Un Adorn Sling Sandal' [SEP] '\textdollar34.99 to \textdollar99.95' [SEP] 'B01LXTLLVG' [SEP] 'PUMA V1.08 Tricks Top Trainer Mens Soccer Sneakers/Boots-Grey-5.5' [SEP] '\textdollar55.5' [SEP] 'B09QXF3V3X' [SEP] 'DEUVOUM Summer Trend Mesh Shoes Men's Sports Shoes Solid Color Lace-Up Sneakers Fashion All-Match Walking Shoes Outdoor Hiking Shoes Non-Slip Shock-Absorbing Casual Sports Shoes' [SEP] '\textdollar100.0'

\textbf{Node 8:} 'Back to Search' [SEP] '< Prev' [SEP] 'size' [SEP] '6.5' [SEP] '7' [SEP] '8' [SEP] '8.5' [SEP] '9.5' [SEP] '10' [SEP] '10.5' [SEP] '11' [SEP] '11.5' [SEP] 'color' [SEP] 'black' [SEP] 'blue' [SEP] 'brown' [SEP] 'white' [SEP] 'yellow' [SEP] 'black-b' [SEP] 'blue-a' [SEP] 'yellow-a' [SEP] 'MCICI Mens Loafers Moccasin Driving Shoes Premium Genuine Leather Casual Slip On Flats Fashion Slipper Breathable Big Size' [SEP] 'Price: \textdollar15.99 to \textdollar32.99' [SEP] 'Rating: N.A.' [SEP] 'Description' [SEP] 'Features' [SEP] 'Reviews' [SEP] 'Buy Now'

\textbf{Node 9:} 'Back to Search' [SEP] '< Prev' [SEP] 'size' [SEP] '6.5' [SEP] '7' [SEP] '8' [SEP] '8.5' [SEP] '9.5' [SEP] '10' [SEP] '10.5' [SEP] '11' [SEP] '11.5' [SEP] 'color' [SEP] 'black' [SEP] 'blue' [SEP] 'brown' [SEP] 'white' [SEP] 'yellow' [SEP] 'black-b' [SEP] 'blue-a' [SEP] 'yellow-a' [SEP] 'MCICI Mens Loafers Moccasin Driving Shoes Premium Genuine Leather Casual Slip On Flats Fashion Slipper Breathable Big Size' [SEP] 'Price: \textdollar15.99 to \textdollar32.99' [SEP] 'Rating: N.A.' [SEP] 'Description' [SEP] 'Features' [SEP] 'Reviews' [SEP] 'Buy Now' clicked attributes: [brown]

\textbf{Node 10:} 'Back to Search' [SEP] '< Prev' [SEP] 'size' [SEP] '6.5' [SEP] '7' [SEP] '8' [SEP] '8.5' [SEP] '9.5' [SEP] '10' [SEP] '10.5' [SEP] '11' [SEP] '11.5' [SEP] 'color' [SEP] 'black' [SEP] 'blue' [SEP] 'brown' [SEP] 'white' [SEP] 'yellow' [SEP] 'black-b' [SEP] 'blue-a' [SEP] 'yellow-a' [SEP] 'MCICI Mens Loafers Moccasin Driving Shoes Premium Genuine Leather Casual Slip On Flats Fashion Slipper Breathable Big Size' [SEP] 'Price: \textdollar15.99 to \textdollar32.99' [SEP] 'Rating: N.A.' [SEP] 'Description' [SEP] 'Features' [SEP] 'Reviews' [SEP] 'Buy Now' clicked attributes: [brown, 6.5]

\textbf{Node 11:} Thank you for shopping with us! [SEP] Your code: [SEP] None [SEP] (Paste it in your MTurk interface.) [SEP] Purchased [SEP] asin [SEP] B07HP6LVRS [SEP] options [SEP] {"color": "brown", "size": "6.5"} [SEP] attrs [SEP] None [SEP] category [SEP] None [SEP] query [SEP] None [SEP] product category [SEP] None [SEP] Target [SEP] asin [SEP] options [SEP] attrs [SEP] price upper [SEP] instuction text [SEP] category [SEP] product category [SEP] query [SEP] Goal [SEP] None [SEP] Reward [SEP] Your score (min 0.0, max 1.0) [SEP] 0.7142857142857143 [SEP] Reward Details [SEP] None

\end{tcolorbox}

\caption{The node list of cases in WebShop in Figs.~\ref{fig:state_graph_webshop_1},~\ref{fig:state_graph_webshop_2}, and~\ref{fig:state_graph_webshop_3}.}
    \label{fig:node_list_webshop}
\end{figure}

\begin{figure}
\centering

\begin{tcolorbox}[colback=red!5!white, colupper=black, colframe=red!40!black, title=Edge list of examples in WebShop, fontupper=\scriptsize, width=\textwidth]

\textbf{Node 0 $->$ Node 6:} search[228r.brown-a 6.5]

\textbf{Node 0 $->$ Node 1:} search[228r.brown-a, size 6.5, price lower than 70.00]

\textbf{Node 0 $->$ Node 7:} search[228r.brown-a 6.5 rubber outsole men slipper non slip]

\textbf{Node 1 $->$ Node 2:} click[b07s7hdc88]

\textbf{Node 2 $->$ Node 3:} click[228r.brown-a]

\textbf{Node 3 $->$ Node 4:} click[6.5]

\textbf{Node 4 $->$ Node $\bigstar$:} click[buy now]

\textbf{Node 6 $->$ Node 2:} click[b07s7hdc88]

\textbf{Node 7 $->$ Node 8:} click[b07hp6lvrs]

\textbf{Node 8 $->$ Node 9:} click[brown]

\textbf{Node 9 $->$ Node 10:} click[6.5]

\textbf{Node 10 $->$ Node 11:} click[buy now]

\end{tcolorbox}
    \caption{The edge list of cases in WebShop in Figs.~\ref{fig:state_graph_webshop_1},~\ref{fig:state_graph_webshop_2}, and~\ref{fig:state_graph_webshop_3}.}
    \label{fig:edge_list_webshop}
\end{figure}

\begin{figure}[h!]
    \centering
    \includegraphics[width=\linewidth]{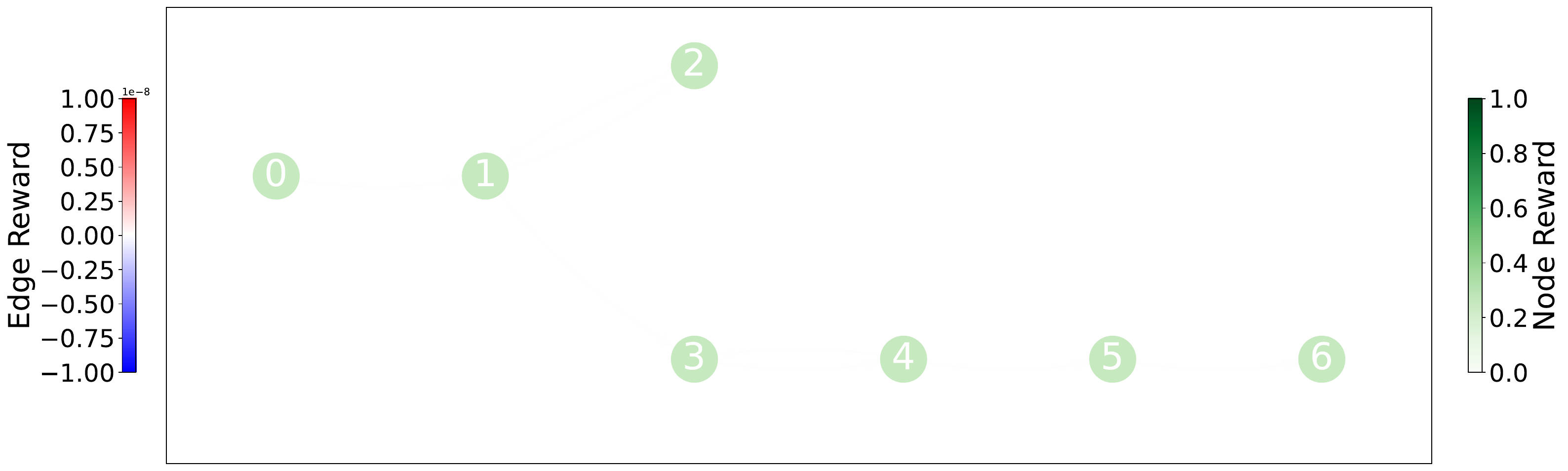}
    \caption{The constructed state graph of the trajectory sampled from Sokoban with 1 sampling rollout. The deeper color indicates a higher assigned reward. The initial state is in Node 0, and the success terminal state is in Node $\bigstar$. The corresponding text representation of nodes and edges can be found in Figs.~\ref{fig:node_list_sokoban} and~\ref{fig:edge_list_sokoban}.}
    \label{fig:state_graph_sokoban_1}
\end{figure}

\begin{figure}[h!]
    \centering
    \includegraphics[width=\linewidth]{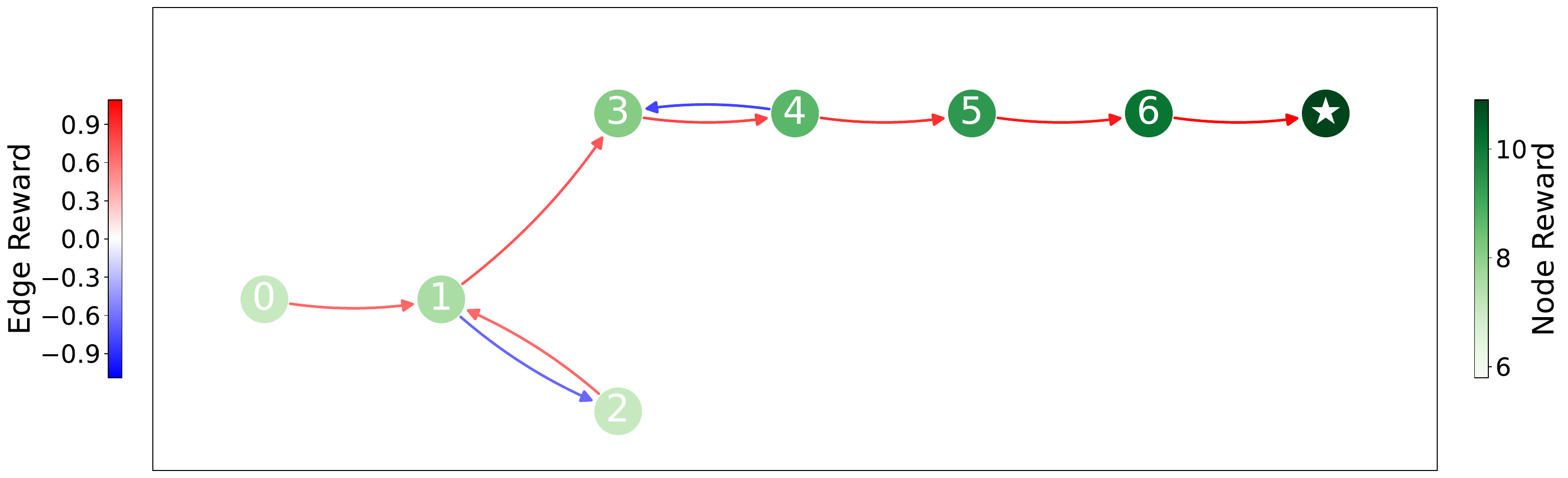}
    \caption{The constructed state graph of the trajectories sampled from Sokoban with 2 sampling rollout. The deeper color indicates a higher assigned reward. The initial state is in Node 0, and the success terminal state is in Node $\bigstar$. The corresponding text representation of nodes and edges can be found in Figs.~\ref{fig:node_list_sokoban} and~\ref{fig:edge_list_sokoban}.}
    \label{fig:state_graph_sokoban_2}
\end{figure}

\begin{figure}[h!]
    \centering
    \includegraphics[width=\linewidth]{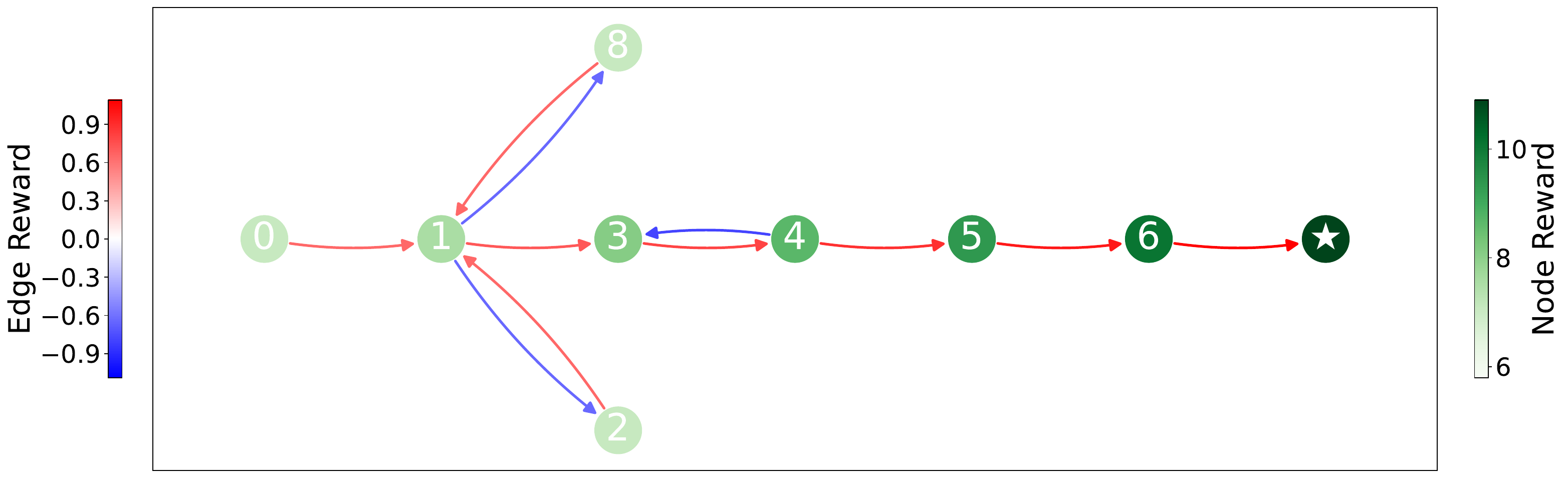}
    \caption{The constructed state graph of the trajectories sampled from Sokoban with 3 sampling rollouts. The deeper color indicates a higher assigned reward. The initial state is in Node 0, and the success terminal state is in Node $\bigstar$. The corresponding text representation of nodes and edges can be found in Figs.~\ref{fig:node_list_sokoban} and~\ref{fig:edge_list_sokoban}.}
    \label{fig:state_graph_sokoban_3}
\end{figure}

\begin{figure}
    \centering

\begin{tcolorbox}[colback=blue!10, colupper=black, colframe=blue!40!black, title=Node list of examples in Sokoban, fontupper=\scriptsize, width=\textwidth]

\begin{tabular}{@{} >{\bfseries}p{1.5cm} @{\hspace{8pt}} l @{}}
\textbf{Node 0:}     & \includegraphics[width=1.5cm]{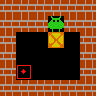} \\[4pt]
\textbf{Node 1:}     & \includegraphics[width=1.5cm]{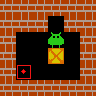} \\[4pt]
\textbf{Node 2:}     & \includegraphics[width=1.5cm]{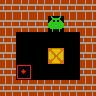} \\[4pt]
\textbf{Node 3:}     & \includegraphics[width=1.5cm]{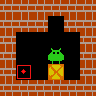} \\[4pt]
\textbf{Node 4:}     & \includegraphics[width=1.5cm]{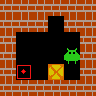} \\[4pt]
\textbf{Node 5:}     & \includegraphics[width=1.5cm]{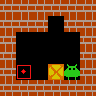} \\[4pt]
\textbf{Node 6: }    & \includegraphics[width=1.5cm]{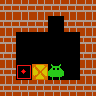} \\[4pt]
\textbf{Node $\bigstar$:} & \includegraphics[width=1.5cm]{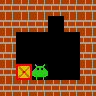} \\[4pt]
\textbf{Node 8:}     & \includegraphics[width=1.5cm]{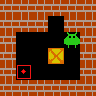} \\
\end{tabular}

\end{tcolorbox}

\caption{The node list of cases in Sokoban in Figs.~\ref{fig:state_graph_sokoban_1},~\ref{fig:state_graph_sokoban_2}, and~\ref{fig:state_graph_sokoban_3}.}
    \label{fig:node_list_sokoban}
\end{figure}

\begin{figure}
\centering

\begin{tcolorbox}[colback=red!5!white, colupper=black, colframe=red!40!black, title=Edge list of examples in Sokoban, fontupper=\scriptsize, width=\textwidth]

\textbf{Node 0 $->$ Node 1:} down

\textbf{Node 1 $->$ Node 3:} down

\textbf{Node 1 $->$ Node 2:} up

\textbf{Node 1 $->$ Node 8:} right

\textbf{Node 2 $->$ Node 1:} down

\textbf{Node 3 $->$ Node 4:} right

\textbf{Node 4 $->$ Node 5:} down

\textbf{Node 4 $->$ Node 3:} left

\textbf{Node 5 $->$ Node 6:} left

\textbf{Node 6 $->$ Node $\bigstar$:} left

\textbf{Node 8 $->$ Node 1:} left

\end{tcolorbox}
    \caption{The edge list of cases in Sokoban in Figs.~\ref{fig:state_graph_sokoban_1},~\ref{fig:state_graph_sokoban_2}, and~\ref{fig:state_graph_sokoban_3}.}
    \label{fig:edge_list_sokoban}
\end{figure}

\end{document}